\newcommand{\cmark}{\ding{51}}%
\newcommand{\xmark}{\ding{55}}%
\newtheorem{theorem}{Theorem}[section]
\newtheorem{lemma}[theorem]{Lemma}
\newtheorem{remark}[theorem]{Remark}
\newtheorem{statement}{Statement}[section]
\newtheorem{definition}{Definition}[section]
\newtheorem{assumption}{Assumption}[section]
\definecolor{tabblue}{RGB}{123,192,230}
\definecolor{dkgreen}{rgb}{0,0.6,0}
\DeclareMathOperator*{\argmax}{arg\,max}
\DeclareMathOperator*{\argmin}{arg\,min}
\newcommand{\colorizeoperator}[2]{%
  \begingroup\def\qopname##1##2##3{%
    \xdef#1{%
      \noexpand\qopname
      \unexpanded{##1}%
      ##2%
      {\begingroup\noexpand\color{#2}##3\endgroup}%
    }%
  }%
  #1%
  \endgroup
}
\newcommand{\DeclareColoredMathOperator}{%
  \@ifstar
    {\def\DCMO@@{\DeclareMathOperator*}\DCMO@}
    {\def\@DCMO{\DeclareMathOperator}\DCMO@}%
}
\newcommand\DCMO@[3]{%
  \DCMO@@{#1}{\begingroup\color{#3}#2\endgroup}%
}
\DeclareColoredMathOperator*{\redargmin}{arg\,min}{purple}
\DeclareColoredMathOperator*{\blueargmax}{arg\,max}{teal}
\DeclareMathOperator*{\rbarg}{arg}
\DeclareColoredMathOperator*{\rbmin}{min}{purple}
\DeclareColoredMathOperator*{\rbmax}{max}{teal}
\newcommand{\norm}[1]{\left|\left| #1 \right|\right|}
\newcommand{\pp}[1]{\left( #1 \right)}
\newcommand{\cS}{\mathcal{S}}
\newcommand{\LV}{\mathcal{L}_V}
\newcommand{\boldw}{\mathbf{w}}
\newcommand{\boldv}{\mathbf{v}}
\newcommand{\boldu}{\mathbf{u}}
\newcommand{\boldlam}{\boldsymbol{\lambda}}
\newcommand{\boldy}{\mathbf{y}}
\newcommand{\boldM}{\mathbf{M}}
\newcommand{\boldH}{\mathbf{H}}
\newcommand{\boldJ}{\mathbf{J}}
\newcommand{\boldQ}{\mathbf{Q}}
\newcommand{\boldI}{\mathbf{I}}
\newcommand{\boldZ}{\mathbf{Z}}
\newcommand{\boldU}{\mathbf{U}}
\newcommand{\boldzero}{\boldsymbol{0}}
\newcommand{\boldPhi}{\mathbf{\Phi}}
\newcommand{\boldphi}{\boldsymbol{\phi}}
\newcommand{\boldA}{\mathbf{A}}
\newcommand{\boldB}{\mathbf{B}}
\newcommand{\boldC}{\mathbf{C}}
\newcommand{\boldD}{\mathbf{D}}
\newcommand{\boldP}{\mathbf{P}}
\newcommand{\boldd}{\mathbf{d}}
\newcommand{\bolde}{\mathbf{e}}
\newcommand{\boldb}{\mathbf{b}}
\newcommand{\boldr}{\mathbf{r}}
\newcommand{\boldf}{\mathbf{f}}
\newcommand{\boldh}{\mathbf{h}}
\newcommand{\inParams}{\boldw}
\newcommand{\outParams}{\boldu}
\newcommand{\inSpace}{\mathcal{W}}
\newcommand{\outSpace}{\mathcal{U}}
\newcommand{\inObj}{f}
\newcommand{\outObj}{F}
\newcommand{\hInObj}{\hat{\inObj}}
\title{On Implicit Bias in Overparameterized Bilevel Optimization}
\date{}
\author{Paul Vicol$^1$,\,
Jonathan Lorraine$^1$,\,
Fabian Pedregosa$^2$,\,
David Duvenaud$^1$,
Roger Grosse$^1$ \\ \\
$^1$University of Toronto \& Vector Institute \qquad $^2$Google Brain
}
\begin{document}

\maketitle

\begin{abstract}
Many problems in machine learning involve bilevel optimization (BLO), including hyperparameter optimization, meta-learning, and dataset distillation. Bilevel problems consist of two nested sub-problems, called the outer and inner problems, respectively. In practice, often at least one of these sub-problems is overparameterized. In this case, there are many ways to choose among optima that achieve equivalent objective values. Inspired by recent studies of the implicit bias induced by optimization algorithms in single-level optimization, we investigate the implicit bias of gradient-based algorithms for bilevel optimization. We delineate two standard BLO methods---cold-start and warm-start---and show that the converged solution or long-run behavior depends to a large degree on these and other algorithmic choices, such as the hypergradient approximation. We also show that the inner solutions obtained by warm-start BLO can encode a surprising amount of information about the outer objective, even when the outer parameters are low-dimensional. We believe that implicit bias deserves as central a role in the study of bilevel optimization as it has attained in the study of single-level neural net optimization.
\end{abstract}

\section{Introduction}
\label{sec:introduction}

Bilevel optimization (BLO) problems consist of two nested sub-problems, called the \textit{outer} and \textit{inner} problems respectively, where the outer problem must be solved subject to the optimality of the inner problem.
Let $\outParams \in \outSpace$, $\inParams \in \inSpace$ denote the outer and inner parameters, respectively, and let $\outObj, \inObj : \outSpace \times \inSpace \to \mathbb{R}$ denote the outer and inner objectives, respectively.
Using this notation, the bilevel problem is:
\begin{align}\label{eqn:intro-bilevel-def}
    \outParams^\star \in \text{``}\argmin_{\outParams \in \outSpace}\text{''} \outObj(\outParams, \inParams^\star_{\outParams})
    \qquad \text{s.t.} \qquad
    \inParams^\star_{\outParams} \in \cS(\outParams) = \argmin_{\inParams \in \inSpace} f(\outParams, \inParams) 
\end{align}
where $\cS : \outSpace \rightrightarrows \inSpace$ is the set-valued \textit{best-response mapping}.
We use quotes around the outer $\text{``}\argmin\text{''}$ to denote the ambiguity in the definition of the bilevel problem when the inner objective has multiple solutions, a point we expand upon in Section~\ref{sec:background}.
Examples of bilevel optimization in machine learning include hyperparameter optimization~\citep{domke2012generic,maclaurin2015gradient,mackay2019self,lorraine2020optimizing,shaban2019truncated}, dataset distillation~\citep{wang2018dataset,zhao2020dataset}, influence function estimation~\citep{koh2017understanding}, meta-learning~\citep{finn2017model,franceschi2018bilevel,rajeswaran2019meta}, example reweighting~\citep{bengio2009curriculum,ren2018learning}, neural architecture search~\citep{zoph2016neural,liu2018darts} and adversarial learning~\citep{goodfellow2014generative,pfau2016connecting} (see Table~\ref{table:innerO}).

\begin{figure*}[t]
    \centering
    \begin{subfigure}{0.28\textwidth}
        \includegraphics[trim={0cm 1.1cm 1.2cm 0},clip,width=\linewidth]{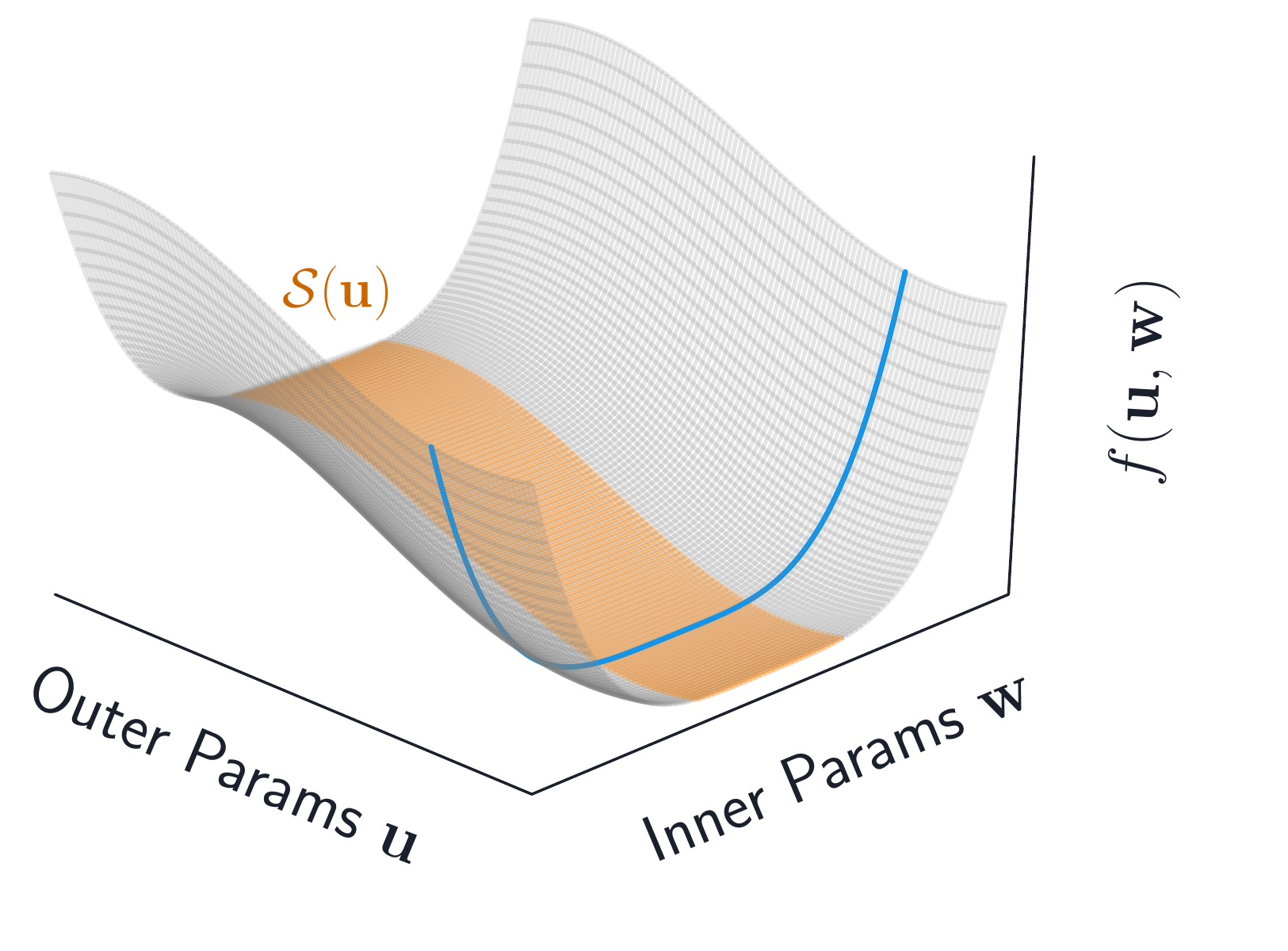}
        \caption{}
        \label{fig:diagrams-a}
    \end{subfigure}
    \hfill
    \begin{subfigure}{0.25\textwidth}
        \includegraphics[width=\linewidth]{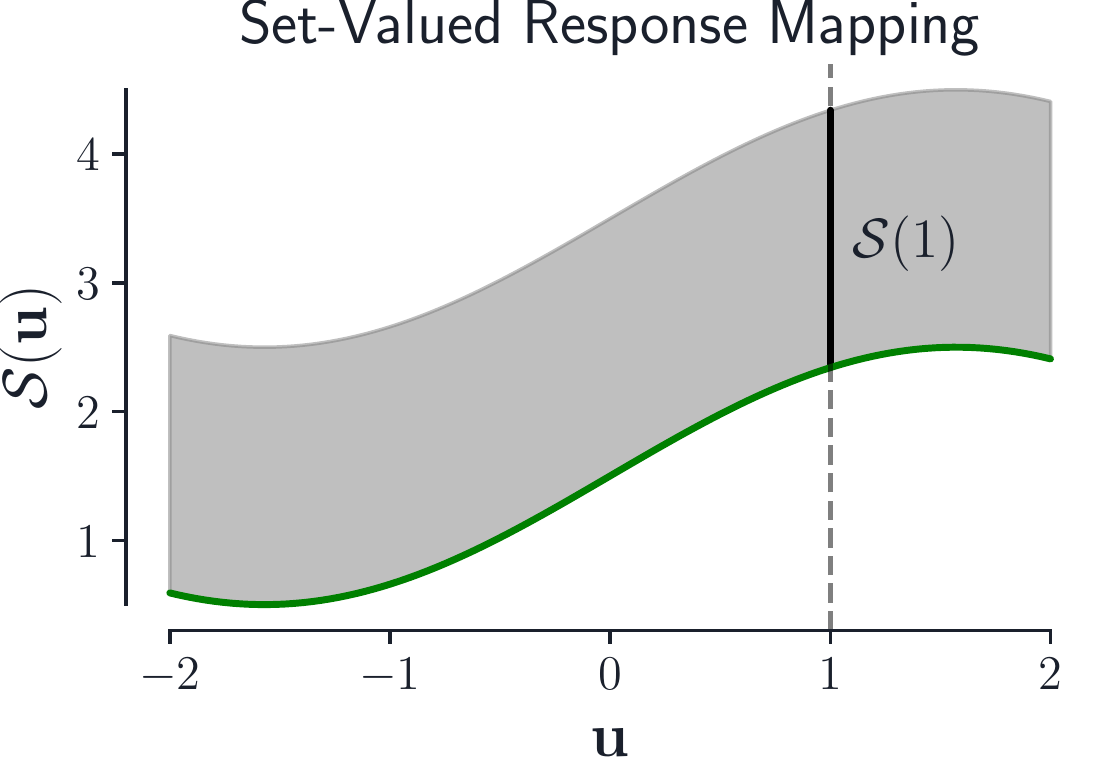}
        \caption{}
        \label{fig:diagrams-b}
    \end{subfigure}
    \hfill
    \begin{subfigure}{0.2\textwidth}
        \includegraphics[width=\linewidth]{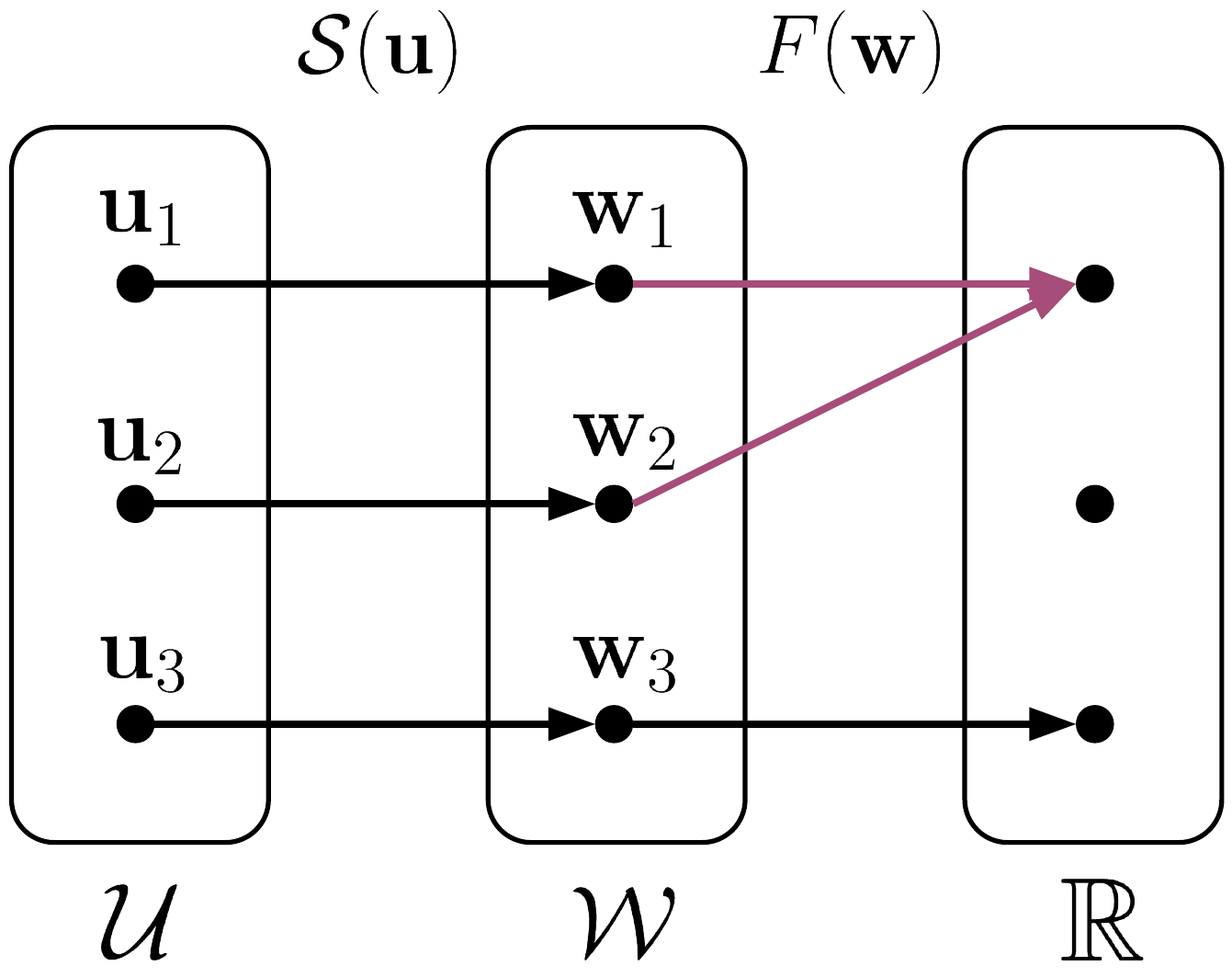}
        \caption{}
        \label{fig:diagrams-c}
    \end{subfigure}
    \hfill
    \begin{subfigure}{0.2\textwidth}
        \includegraphics[width=\linewidth]{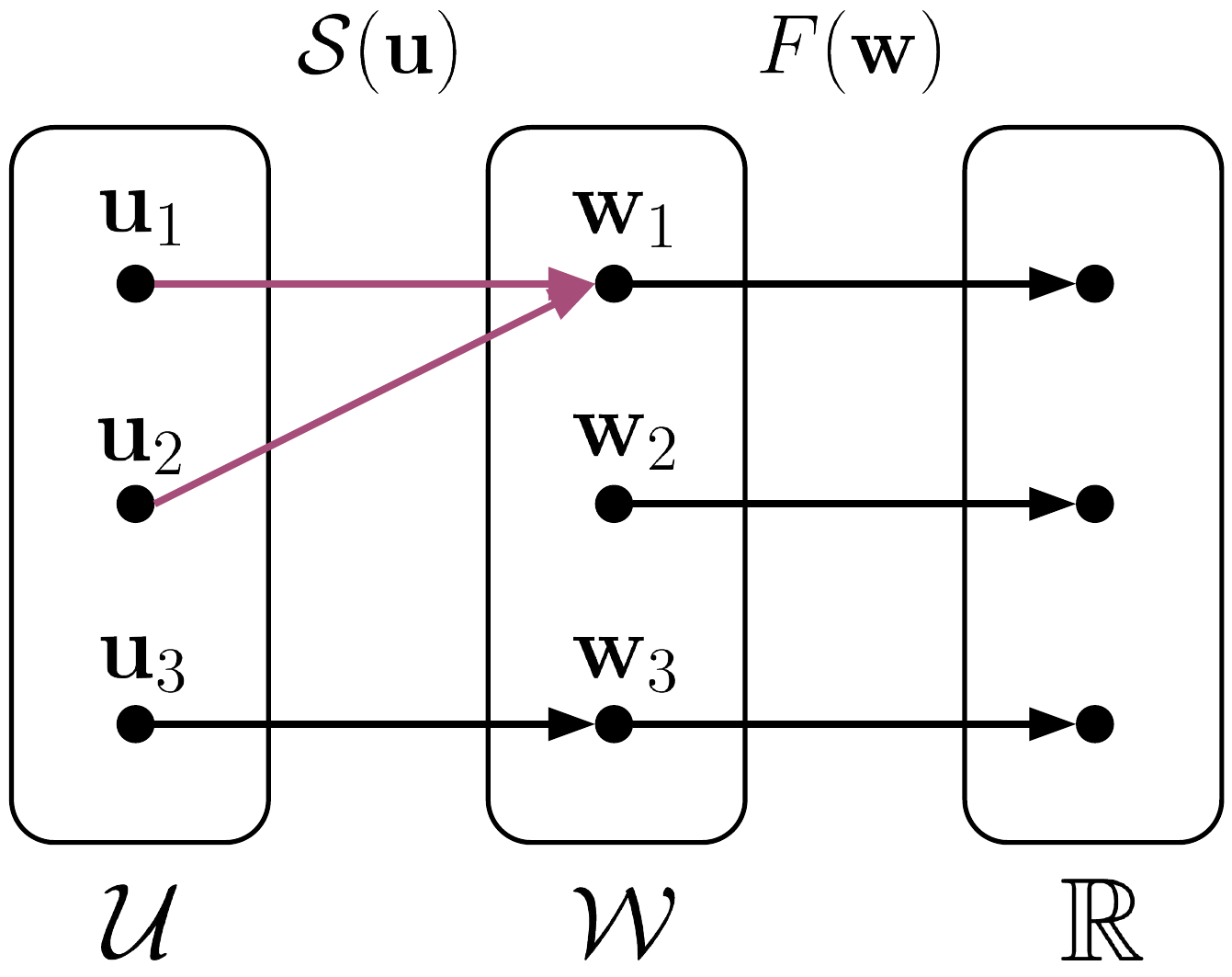}
        \caption{}
        \label{fig:diagrams-d}
    \end{subfigure}
    \caption{\small \textbf{Underspecification in BLO.} \textbf{(a)} Simplified visualization of inner underspecification, yielding a manifold of optimal solutions $\cS(\outParams) = \argmin_{\inParams} f(\outParams, \inParams)$ for each $\outParams$ (a slice is highlighted in {\color{cyan}blue} for a fixed $\outParams$).
    The {\color{orange}orange region} along the floor of the valley highlights the set-valued best-response mapping.
    \textbf{(b)} The set-valued best-response for $\outParams, \inParams \in \mathbb{R}$, where the shaded gray region contains values of $\inParams$ that achieve equivalent performance on the inner objective.
    The {\color{OliveGreen}green curve} highlights the minimum-norm inner solutions for each $\outParams$.
    \textbf{(c)} Outer underspecification due to $F$ mapping a range of inner parameters to the same value;
    or \textbf{(d)} from $\cS(\outParams)$ associating a range of outer parameters with the same inner parameters.}
    \label{fig:underspecification}
\end{figure*}

Most algorithmic contributions to BLO make the simplifying assumption that the inner problem has a unique solution (e.g., $|\cS(\outParams)| = 1, \forall \outParams \in \outSpace$), and give approximation methods that provably get close to the solution~\citep{shaban2019truncated,pedregosa2016hyperparameter,yang2021provably,hong2020two,grazzi2020iteration}.
In practice, however, these algorithms are often run in settings where either the inner or outer problem is \textit{underspecified}; i.e., the set of optima forms a non-trivial manifold.
Underspecification often occurs due to overparameterization, e.g., given a learning problem with more parameters than datapoints, there are typically infinitely many solutions that achieve the optimal objective value~\citep{belkin2021fit}.
Analyses of the dynamics of overparameterized single-objective optimization have shown that the nature of the converged solution, such as its pattern of generalization, depends in subtle ways on the details of the optimization dynamics~\citep{lee2019wide,arora2019fine,bartlett2020benign,amari2020does}; this general phenomenon is known as \emph{implicit bias}.
We extend this investigation to the bilevel setting: What are the implicit biases of practical BLO algorithms?
What types of solutions do they favor, and what are the implications for how the trained models behave?

\begin{table}[t]
\footnotesize
\centering
\setlength{\tabcolsep}{3pt}
\begin{tabular}{@{}ccccc@{}}
\toprule
\textbf{Task}        & \textbf{Inner ($\inParams$)} & \textbf{Outer ($\outParams$)} & \textbf{Inner Overparam} & \textbf{Outer Overparam} \\ \midrule
Dataset Distillation~\citep{wang2018dataset} & NN weights    & Synthetic data      & \cmark & \xmark \\
Data Augmentation~\citep{hataya2020meta}     & NN weights    & Augmentation params & \cmark & \xmark \\
GANs~\citep{goodfellow2014generative}        & Discriminator & Generator           & \cmark & \cmark \\
Meta-Learning~\citep{finn2017model}          & NN weights    & NN weights          & \cmark & \cmark \\
NAS~\citep{liu2018darts}                     & NN weights    & Architectures       & \cmark & \cmark \\
Hyperopt~\citep{maclaurin2015gradient}       & NN weights    & Hyperparams         & \cmark & \xmark \\
Example reweighting~\citep{ren2018learning}  & NN weights    & Example weights     & \cmark & \cmark \\
\bottomrule
\end{tabular}
\vspace{-0.2cm}
\caption{\small \textbf{Inner and outer overparameterization in common bilevel tasks.} For each task, we reference whether the common use-case includes inner overparametrization and/or outer overparametrization (InnerU/OuterU).}
\label{table:innerO}
\vspace{-0.2cm}
\end{table}

We identify two sources of implicit bias in underspecified BLO: 1) bias resulting from the optimization algorithm, either cold-start or warm-start BLO (described below); and 2) bias resulting from the hypergradient approximation.
Classic literature on bilevel optimization~\citep{dempe2002foundations} considers two ways to break ties between solutions in the set $\cS(\outParams)$: optimistic and pessimistic equilibria (discussed in Section~\ref{sec:background}).
However, as we will show, neither describes the solutions obtained by the most common BLO algorithms.
We focus on two algorithms that are relevant for practical machine learning, which we term \textit{cold-start} and \textit{warm-start} BLO.
Cold-start BLO~\citep{maclaurin2015gradient,metz2019understanding,micaelli2020non} defines the inner solution $\inParams^\star$ as the fixpoint of an update rule, which captures the notion of running an optimization algorithm to convergence.
For each outer update, the inner parameters are re-initialized to $\inParams_0$ and the inner optimization is run from scratch.
This is computationally expensive, as it requires full unrolls of the inner optimization for each outer step.
Warm-start BLO is a more tractable and widely-used alternative~\citep{luketina2016scalable,mackay2019self,lorraine2020optimizing,tang2020onlineaugment} that alternates gradient descent steps on the inner and outer parameters: in each step of warm-start BLO, the inner parameters are optimized from their \textit{current values}, rather than the initialization $\inParams_0$.

In Section~\ref{sec:theory}, we characterize solution concepts that capture the behavior of the cold- and warm-start BLO algorithms and show that, for quadratic inner and outer objectives, cold-start BLO yields minimum-norm outer parameters.
In Section~\ref{sec:experiments}, we show that warm-start BLO induces an implicit bias on the inner parameter iterates, regularizing the updates to maintain proximity to previous solutions.
In addition to the BLO algorithm, another source of implicit bias is the hypergradient approximation.
In Sections~\ref{sec:theory} and~\ref{sec:experiments}, we investigate the effect of using approximate implicit differentiation to compute the hypergradient $\frac{d \outObj}{d \outParams}$, where different approximations can lead to vastly different outer solutions.
In this paper, we consider both inner and outer underspecification.

\paragraph{Inner Underspecification.}
Many bilevel tasks in machine learning train a neural network in the inner level, which typically yields an underspecified problem, as shown in Table~\ref{table:innerO}.
Figure~\ref{fig:underspecification}(a,b) illustrates set-valued response mappings in underspecified BLO.

\paragraph{Outer Underspecification.}
Outer underspecification can arise in two ways: either 1) the mapping $\cS(\outParams)$ is a function (e.g., not set-valued) and maps a range of outer parameters to the same inner parameter; or 2) the outer objective $F$ maps a range of inner parameters to the same objective value.
These two pathways are illustrated in Figure~\ref{fig:underspecification}(c,d).

Our code is available on~\href{https://github.com/asteroidhouse/implicit-bias-bilevel}{Github}, and we provide~\href{https://colab.research.google.com/drive/1ryvCHItCSdkH44O9yX6I3Xw1lnJ2tTZ-?usp=sharing}{a Colab notebook here}.

\vspace{-3mm}
\section{Background}
\label{sec:background}

In this section, we provide an overview of several key concepts that we build on in this paper.\footnote{We provide a table of notation in Appendix~\ref{app:notation}.}
First, we review two methods for computing approximate hypergradients: differentiation through unrolling and implicit differentiation.
Then, we discuss optimistic and pessimistic BLO, which are two existing solution concepts for problems with non-unique inner solutions, that differ in how they break ties among equally-valid inner parameters.
Finally, we describe the task of dataset distillation, which we use for our empirical investigations in Section~\ref{sec:experiments}.

\paragraph{Hypergradient Approximation.}
We assume that $F$ and $f$ are differentiable, and consider gradient-based BLO algorithms, which require the hypergradient $\frac{d \outObj(\outParams, \inParams_{\outParams}^\star)}{d \outParams} = \frac{\partial \outObj(\outParams, \inParams_{\outParams}^\star)}{\partial \outParams} + \left(\frac{d \inParams_{\outParams}^\star}{d \outParams} \right)^\top \frac{\partial \outObj(\outParams, \inParams_{\outParams}^\star)}{\partial \inParams_{\outParams}^\star}$.
The main challenge to computing the hypergradient lies in computing the \textit{response Jacobian} $\frac{d \inParams_{\outParams}^\star}{d \outParams}$, which captures how the converged inner parameters depend on the outer parameters.
Exactly computing this Jacobian is often intractable for large-scale problems.
The two most common approaches to approximate it are: 1) \textit{iterative differentiation}, which unrolls the inner optimization to reach an approximate best-response (BR) and backpropagates through the unrolled computation graph to compute the approximate BR Jacobian~\citep{domke2012generic,maclaurin2015gradient,shaban2019truncated}; and 2) \textit{implicit differentiation}, which leverages the implicit function theorem (IFT) to compute the BR Jacobian, assuming that the inner parameters are at a stationary point of the inner objective~\citep{larsen1996design,chen1999optimal,bengio2000gradient,foo2008efficient,pedregosa2016hyperparameter,lorraine2020optimizing,hataya2020meta,raghu2020teaching}.
The IFT states that, assuming uniqueness of the inner solution and under certain regularity conditions, we can compute the response Jacobian as $\frac{\partial \inParams^\star_{\outParams}}{\partial \outParams} = \left( \frac{\partial^2 f}{\partial \inParams \partial \inParams^\top} \right)^{-1} \frac{\partial^2 f}{\partial \inParams \partial \outParams}$.
The term inside the inverse is the Hessian of the inner objective, which is typically intractable to store or invert directly (as neural nets can easily have millions of parameters).
Thus, multiplication by the inverse Hessian is typically approximated using an iterative linear solver, such as truncated conjugate gradient (CG)~\citep{pedregosa2016hyperparameter}, GMRES \citep{blondel2021efficient}, or the Neumann series~\citep{liao2018reviving,lorraine2020optimizing}.
The (un-truncated) Neumann series for an invertible Hessian $\frac{\partial^2 f}{\partial \inParams \partial \inParams^\top}$ is defined as:
\begin{equation}
\left( \frac{\partial^2 \inObj}{\partial \inParams \partial \inParams^\top} \right)^{-1}
=
\alpha \sum_{j=0}^\infty \left( \boldI - \alpha \frac{\partial^2 \inObj}{\partial \inParams \partial \inParams^\top} \right)^j \,.
\end{equation}
This series is known to converge when the largest eigenvalue of $\boldI -\alpha \frac{\partial^2 \inObj}{\partial \inParams \partial \inParams^\top}$ is $<$ 1.
In practice, the Neumann series is typically truncated to the first $K$ terms.
~\citet{lorraine2020optimizing} showed that differentiating through $K$ steps of unrolling starting from optimal inner parameters $\inParams^\star$ is equivalent to approximating the inverse Hessian with the first $K$ terms in the Neumann series, a result we review in Appendix~\ref{app:unrolling-and-neumann}.
Approximate implicit differentiation can be implemented with efficient Hessian-vector products using modern autodiff libraries~\citep{pearlmutter1994fast, tensorflow2015-whitepaper,paszke2017automatic,jax2018github}. 
However, when the inner problem is overparameterized, the Hessian $\frac{\partial^2 f}{\partial \inParams \partial \inParams^\top}$ is singular.
In Section~\ref{sec:theory}, we discuss how the $K$-term truncated Neumann series approximates the inverse of the \textit{damped Hessian} $(\boldH + \epsilon \boldI)^{-1}$ (which is always non-singular), and we discuss the implications of this approximation.

\paragraph{Response Jacobians for Overparameterized Problems.}
Let $\boldH = \frac{\partial^2 f(\outParams, \inParams)}{\partial \inParams \partial \inParams^\top}$, $\boldM = \frac{\partial^2 f(\outParams, \inParams)}{\partial \outParams \partial \inParams}$, and $\boldJ = \frac{\partial \inParams_{\outParams}}{\partial \outParams}$.
Then, for $\inParams_{\outParams} \in \argmin_{\inParams} f(\outParams, \inParams)$, we have $\boldH \boldJ = \boldM$.
If $\boldH$ is invertible, then the unique response Jacobian is $\boldJ = \boldH^{-1} \boldM$.
However, when the inner problem is overparameterized, $\boldH$ is singular, and the Jacobian is not uniquely defined---it may be any matrix $\boldJ$ satisfying $\boldH \boldJ = \boldM$. If $\boldJ$ is one such solution, then $\boldJ + \boldZ$ is also a solution, where each column of $\boldZ$ is in the nullspace of $\boldH$.
In this case, we define the canonical response Jacobian to be $\boldJ = \boldH^+ \boldM$, where $\boldH^+$ denotes the Moore-Penrose pseudoinverse of $\boldH$. This is the matrix with minimum Frobenius norm out of all solutions to the linear system (see Appendix~\ref{app:min-norm-jacobian}).
We refer to the hypergradient computed using this canonical response Jacobian as the ``exact'' hypergradient.

\begin{figure*}
    \centering
    \includegraphics[width=0.95\linewidth]{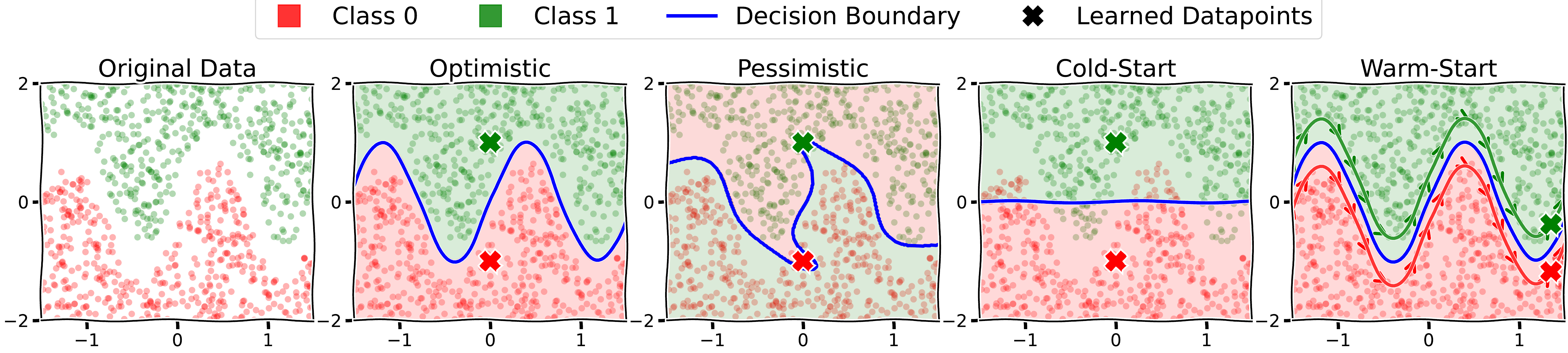}
    \caption{\small
    \textbf{Dataset distillation sketch illustrating four solution concepts for BLO: optimistic, pessimistic, {\color{cyan}cold-start}, and {\color{orange}warm-start}.}
    We distill the dataset consisting of {\color{red}red} and {\color{OliveGreen}green} points into two synthetic datapoints denoted by {\color{red} \ding{54}} and {\color{OliveGreen} \ding{54}} (one per class).
    The learned datapoints  {\color{red} \ding{54}} and {\color{OliveGreen} \ding{54}} are the \textit{outer parameters}, and the inner parameters correspond to a model (classifier) trained on these synthetic datapoints.
    We assume an overparameterized inner model, which can fit the synthetic datapoints  with many different decision boundaries.
    All of the solutions here correctly classify the synthetic datapoints (and are thus valid solutions to the inner problem) but they differ in their behavior on the original dataset.
    \textbf{Optimistic:} finds the decision boundary that correctly classifies all the original datapoints;
    \textbf{Pessimistic:} finds the decision boundary that achieves the worst loss on the original datapoints---it correctly classifies the synthetic datapoints but \textit{incorrectly classifies} the original datapoints (note the \textit{flipped red/green shading} on either side of the decision boundary);
    {\color{cyan}\textbf{Cold-start:}} finds the min-norm solution that correctly classifies the synthetic datapoints;
    {\color{orange}\textbf{Warm-start:}} yields a trajectory of synthetic datapoints over time, which can allow for a model trained on two learned datapoints to fit the original data.
    }
    \label{fig:optimistic-pessimistic}
\end{figure*}

\paragraph{Optimistic and Pessimistic BLO.}
For scenarios where the inner problem has multiple solutions, i.e., $|\cS(\outParams)| > 1$, two solution concepts have been widely studied in the classical BLO literature: the \textit{optimistic}~\citep{sinha2017review,dempe2007new,dempe2002foundations,harker1988existence,lignola1995topological,lignola2001existence,outrata1993necessary} and \textit{pessimistic}~\citep{sinha2017review,dempe2002foundations,dempe2014necessary,loridan1996weak,lucchetti1987existence,wiesemann2013pessimistic,liu2018pessimistic,liu2020methods} solutions.\footnote{See \citep{sinha2017review} for a comprehensive review.}
In \textit{optimistic} BLO, $\inParams$ is chosen such that it leads to the best outer objective value.
In contrast, in \textit{pessimistic} BLO, $\inParams$ is chosen such that it leads to the worst outer objective value.
The corresponding equilibria are defined below, with the differences highlighted in \textcolor{purple}{purple} and \textcolor{teal}{teal}.
\begin{tcolorbox}[colback=tabblue!15,boxrule=0pt,colframe=white,coltext=black,arc=2pt,outer arc=0pt,valign=center]
\begin{definition}
Let $\cS : \outSpace \rightrightarrows \inSpace$ be the set-valued response mapping $\cS(\outParams) = \argmin_{\inParams \in \inSpace} f(\outParams, \inParams)$.
Then $(\outParams^\star, \inParams^\star_{\outParams})$ is an \emph{\textcolor{purple}{optimistic}/\textcolor{teal}{pessimistic} equilibrium} if:
\begin{align*}
    \outParams^\star \in \argmin_{\outParams} F(\outParams, \inParams^\star_{\outParams})
    \qquad
    \text{s.t.}
    \qquad
    \inParams^\star_{\outParams} \in \rbarg \rbmin \!/\! \rbmax_{\!\!\!\!\!\!\!\!\!\!\!\!\!\inParams \in \cS(\outParams)} F(\outParams, \inParams)
\end{align*}
\end{definition}
\end{tcolorbox}

\paragraph{Dataset Distillation.}
Given an original (typically large) dataset, the task of \textit{dataset distillation}~\citep{maclaurin2015gradient,wang2018dataset,lorraine2020optimizing} is to learn a smaller \textit{synthetic dataset} such that a model trained on the synthetic data generalizes well to the original data.
In this problem, the inner parameters are the weights of a model, and the outer parameters are the synthetic datapoints.
The inner objective is the loss of the inner model trained on the synthetic datapoints, and the outer objective is the loss of the inner model on the original dataset:
\begin{tcolorbox}[colback=tabblue!15,boxrule=0pt,colframe=white,coltext=black,arc=2pt,outer arc=0pt,valign=center]
\vspace{-0.4cm}
\begin{align}
\mathbf{u}^\star \in \text{``}\argmin_{\outParams}\text{''} \mathcal{L}(\mathcal{D}_{\text{original}}, \inParams^\star_{\outParams})
\qquad
\text{s.t.}
\qquad
\inParams^\star_{\outParams} \in \argmin_{\inParams} \mathcal{L}(\mathcal{D}(\outParams), \inParams)
\end{align}
Here, $\mathcal{L}$ denotes a loss function, $\mathcal{D}_{\text{original}}$ denotes the dataset we aim to distill, and $\mathcal{D}(\mathbf{u})$ denotes a synthetic dataset parameterized by $\mathbf{u}$.
\end{tcolorbox}
We focus on dataset distillation throughout this paper because the degree of inner and outer overparameterization can be modified by varying the number of original versus synthetic datapoints.
The solutions for this task are also easy to visualize and interpret.

\paragraph{Visualizing Solution Concepts.}
Figure~\ref{fig:optimistic-pessimistic} illustrates four different solution concepts for a toy 2D problem: the optimistic and pessimistic solutions, as well as two new solution concepts---cold- and warm-start equilibria---that we discuss in Section~\ref{sec:theory}.
We consider dataset distillation for binary classification, with red and green datapoints representing the two classes, and a sinusoidal ground-truth decision boundary.
We distill this dataset into two synthetic datapoints, represented by {\color{red} \ding{54}} and {\color{OliveGreen} \ding{54}}.
In each subplot, the blue curve shows the decision boundary of the inner classifier, and the background colors show which class is predicted on each side of the decision boundary.
See the caption of Figure~\ref{fig:optimistic-pessimistic} for a description of each solution concept on this task.

The optimistic solution can be tractable to compute~\citep{mehra2019penalty,ji2021lower}, while the pessimistic solution is known to be less tractable~\citep{sinha2017review}.
From Figure~\ref{fig:optimistic-pessimistic}, however, it is unclear whether either of these solution concepts is useful.
For example, in hyperparameter optimization, the optimistic and pessimistic solutions correspond to choosing hyperparameters such that the inner training loop achieves minimum or maximum performance on the validation set, and it is unclear why this would be beneficial.
Most widely-used BLO algorithms do not aim to find either of these solution concepts, which motivates our study of the behavior of the algorithms used in practice.

\subsection{Algorithms}
\label{app:algorithms}

Here, we provide the algorithms for cold-start (Algorithm~\ref{alg:cold-start}) and warm-start (Algorithm~\ref{alg:warm-start}) bilevel optimization.
Each algorithm can make use of different hypergradient approximations, including implicit differentiation with the truncated Neumann series (Algorithm~\ref{alg:neumann-hypergrad}), implicit differentiation with the damped Hessian inverse (Algorithm~\ref{alg:damped-hypergrad}), and differentiation through unrolling (Algorithm~\ref{alg:iterative-diff}).
\begin{figure*}[!htbp]
\begin{minipage}[t]{0.49\linewidth}
\begin{algorithm}[H]
  \caption{{\color{cyan}Cold-Start BLO}}
  \label{alg:cold-start}
\begin{algorithmic}[1]
    \State \textbf{Input:} $\alpha$: inner LR, $\beta$: outer LR
    \State \hspace{3.2em} $T$: number of inner optimization steps
    \State \hspace{3.2em} $K$: number of Neumann series terms
    \While{$||\hat{\nabla}_{\outParams} F(\outParams, \inParams)||^2 > 0$, iteration $t$}
        \State $\hat{\nabla}_{\outParams} F \gets \text{HypergradApprox}(\outParams_t, {\color{cyan}\inParams_0}, T, K)$
        \State $\outParams_{t+1} \gets \outParams_t - \beta \hat{\nabla}_{\outParams} F$
        \State 
    \EndWhile
    \State \Return $(\outParams_t, {\color{cyan}\text{Optimize}(\outParams_t, \inParams_0, T)})$
\end{algorithmic}
\end{algorithm}
\end{minipage}
\hfill
\begin{minipage}[t]{0.49\linewidth}
\begin{algorithm}[H]
  \caption{{\color{orange}Warm-start BLO}}
  \label{alg:warm-start}
\begin{algorithmic}[1]
    \State \textbf{Input:} $\alpha$: inner LR, $\beta$: outer LR
    \State \hspace{3.2em} $T$: number of inner optimization steps
    \State \hspace{3.1em} $K$: number of Neumann series terms
    \While{$||\hat{\nabla}_{\outParams} F(\outParams, \inParams)||^2 > 0$, iteration $t$}
        \State $\hat{\nabla}_{\outParams} F \gets \text{HypergradApprox}(\outParams_t, {\color{orange}\inParams_t}, T, K)$
        \State $\outParams_{t+1} \gets \outParams_t - \beta \hat{\nabla}_{\outParams} F$
        \State {\color{orange}$\inParams_{t+1} \gets \text{Optimize}(\outParams_{t+1}, \inParams_t, T)$}
    \EndWhile
    \State \Return $(\outParams_t, {\color{orange}\inParams_t})$
\end{algorithmic}
\end{algorithm}
\end{minipage}
\vspace{-0.2cm}
\caption{\small \textbf{Algorithms for cold-start and warm-start bilevel optimization.} We highlight the key differences in {\color{cyan}cyan} for cold-start and {\color{orange}orange} for warm-start. $\text{HypergradApprox}$ is a placeholder for one of three different algorithms: 1) $\text{NeumannHypergrad}$ (Algorithm~\ref{alg:neumann-hypergrad}), which computes the hypergradient using the IFT with the Neumann series approximation of the inverse Hessian; 2) $\text{DampedHypergrad}$ (Algorithm~\ref{alg:damped-hypergrad}), which similarly uses the IFT with the damped Hessian inverse $(\boldH + \epsilon \boldI)^{-1}$; or 3) $\text{IterDiffHypergrad}$ (Algorithm~\ref{alg:iterative-diff}), which computes the hypergradient by differentiating through unrolled inner optimization.}
\vspace{-0.4cm}
\end{figure*}
\vspace{-0.2cm}
\begin{figure*}[!htbp]
\begin{minipage}[t]{0.49\linewidth}
\begin{algorithm}[H]
  \caption{$\textnormal{Optimize}(\outParams, \inParams, T)$}
  \label{alg:optimize}
\begin{algorithmic}[1]
    \State \textbf{Input:} $\gamma$: learning rate
    \State \hspace{3.2em} $T$: unroll steps
    \State \hspace{3.2em} $\inParams$: initialization
    \For{$t = 1, \dots, T$}
        \State $\inParams \gets \inParams - \gamma \nabla_{\inParams} f(\outParams, \inParams)$
    \EndFor
    \State \Return $\inParams$
\end{algorithmic}
\end{algorithm}
\end{minipage}
\begin{minipage}[t]{0.49\linewidth}
\begin{algorithm}[H]
\caption{$\textnormal{IterDiffHypergrad}$}
\label{alg:iterative-diff}
\begin{algorithmic}[1]
    \State \textbf{Input:} $\alpha$: inner LR
    \State \hspace{3.2em} $\beta$: outer LR
    \State \hspace{3.2em} $T$: inner steps
    \State $\hat{\inParams}_T^* \gets \text{Optimize}(\outParams_t, \inParams_t, T)$
    \State $\hat{F} \gets F(\outParams, \hat{\inParams}^*_T)$
    \State Compute $\nabla_{\outParams} \hat{F}$ using auto-diff.
\end{algorithmic}
\end{algorithm}
\end{minipage}
\vspace{-0.2cm}
\caption{\small Helper functions to compute the implicit hypergradient using the Neumann series to approximate the inverse inner loss Hessian.}
\vspace{-0.4cm}
\end{figure*}
\vspace{-0.4cm}
\begin{figure*}[!htbp]
\begin{minipage}[t]{0.49\linewidth}
\begin{algorithm}[H]
\caption{$\textnormal{DampedHypergrad}$}
\label{alg:damped-hypergrad}
\begin{algorithmic}[1]
    \State \textbf{Input:} $\alpha$: inner LR
    \State \hspace{3.2em} $\beta$: outer LR
    \State \hspace{3.2em} $T$: inner steps
    \State \hspace{3.4em} $\epsilon$: damping coeff.
    \State $\hat{\inParams}_T^* \gets \text{Optimize}(\outParams_t, \inParams_t, T)$
    \State $\boldA \gets \left( \frac{\partial^2 f(\outParams_t, \hat{\inParams}_T^*)}{\partial \inParams \partial \inParams^\top} + \epsilon \boldI \right)^{-1}$
    \State $\boldM \gets \frac{\partial^2 f(\outParams_t, \hat{\inParams}_T^*)}{\partial \outParams \partial \inParams}$
    \State $\frac{d \hat{\inParams}_T^*}{d \outParams} \gets \boldA \boldM$
    \State $\hat{\nabla}_{\outParams} F \gets \frac{\partial F}{\partial \outParams} + \left( \frac{d \hat{\inParams}_T^*}{d \outParams} \right)^\top \frac{\partial F(\outParams, \hat{\inParams}_T^*)}{\partial \hat{\inParams}_T^*}$
    \State \Return $\hat{\nabla}_{\outParams} F$
\end{algorithmic}
\end{algorithm}
\end{minipage}
\begin{minipage}[t]{0.49\linewidth}
\begin{algorithm}[H]
\caption{$\textnormal{NeumannHypergrad}$}
\label{alg:neumann-hypergrad}
\begin{algorithmic}[1]
    \State \textbf{Input:} $\alpha$: inner LR
    \State \hspace{3.2em} $\beta$: outer LR
    \State \hspace{3.2em} $T$: inner steps
    \State \hspace{3.2em} $K$: Neumann terms
    \State $\hat{\inParams}_T^* \gets \text{Optimize}(\outParams_t, \inParams_t, T)$
    \State $\frac{d \hat{\inParams}_T^*}{d \outParams} \gets \alpha \sum_{j=0}^K \left(\boldI - \alpha \frac{\partial^2 f(\outParams, \inParams)}{\partial \inParams \partial \inParams^\top} \right)^j$
    \State $\hat{\nabla}_{\outParams} F \gets \frac{\partial F}{\partial \outParams} + \left( \frac{d \hat{\inParams}_T^*}{d \outParams} \right)^\top \frac{\partial F(\outParams, \hat{\inParams}_T^*)}{\partial \hat{\inParams}_T^*}$
    \State \Return $\hat{\nabla}_{\outParams} F$
\end{algorithmic}
\end{algorithm}
\end{minipage}
\vspace{-0.2cm}
\caption{\small $\text{DampedHypergrad}$ uses the IFT with the damped Hessian inverse $(\boldH + \epsilon \boldI)^{-1}$
and $\text{IterDiffHypergrad}$ unrolls the inner optimization and backpropagates through the unroll.
}
\end{figure*}

\clearpage

\section{Equilibrium Concepts}
\label{sec:theory}

We aim to understand the behavior of two popular BLO algorithms: \textit{cold-start} (Algorithm~\ref{alg:cold-start}) and \textit{warm-start} (Algorithm~\ref{alg:warm-start}).
A summary of the updates for each algorithm is shown in Table~\ref{table:warm-cold-start}.
In this section, we first define equilibrium notions that capture the solutions found by warm-start and cold-start BLO.
We then discuss properties of these equilibria, in particular focusing on the norms of the resulting inner and outer parameters.
Finally, we analyze the implicit bias induced by approximating the hypergradient with the truncated Neumann series.

\subsection{Cold-Start Equilibrium}

Here, we introduce a solution concept that captures the behavior of cold-start BLO.
We consider an iterative optimization algorithm used to compute an approximate solution to the inner objective.
We denote a step of inner optimization by $\inParams_{t+1} = \Xi(\outParams, \inParams_t)$; for gradient descent, we have $\Xi(\outParams, \inParams_t) = \inParams_t - \alpha \nabla_{\inParams} f(\outParams, \inParams_t)$.
We denote $K$ steps of inner optimization from $\inParams_0$ by $\Xi^{(K)}(\outParams, \inParams_0)$.
Under certain assumptions (e.g., that $f$ has a unique finite root and that the step size $\alpha$ for the update is chosen appropriately), repeated application of $\Xi$ will converge to a fixpoint.
We denote the fixpoint for an initialization $\inParams_0$ by $\Xi^{(\infty)}(\outParams, \inParams_0)$.
\begin{tcolorbox}[colback=tabblue!15,boxrule=0pt,colframe=white,coltext=black,arc=2pt,outer arc=0pt,valign=center]
\begin{definition}
Let $\boldr(\outParams, \inParams) \triangleq \Xi^{(\infty)}(\outParams, \inParams)$.
Then $(\outParams^\star, \inParams^\star)$ is a \textit{{{\color{cyan}cold-start equilibrium}}} for an initialization $\inParams_0$ if:
\begin{align*}
    \outParams^\star \in \argmin_{\outParams} F(\outParams, \inParams^\star) \quad \text{s.t.} \quad
    \inParams^\star = \boldr(\outParams^\star, \inParams_0)
\end{align*}
\end{definition}
\end{tcolorbox}
In some cases, we can compute the fixpoint of the inner optimization analytically.
In particular, when $f$ is quadratic, the analytic solution minimizes the displacement from $\inParams_0$: $\Xi^{(\infty)}(\outParams, \inParams_0) = \argmin_{\inParams \in \cS(\outParams)} \| \inParams - \inParams_0 \|_2^2$.

\subsection{Warm-Start Equilibrium}
Next, we introduce a solution concept intended to capture the behavior of warm-start BLO.
Warm-starting refers to initializing the inner optimization from the inner parameters obtained in the previous hypergradient computation.
One can consider two variants of warm-starting: (1) using \textit{full inner optimization}, that is, running the inner optimization to convergence starting from $\inParams_t$ to obtain the next iterate $\inParams_{t+1}$; or (2) \textit{partial inner optimization}, where we approximate the solution to the inner problem via a few gradient steps.
\begin{table}
\setlength{\tabcolsep}{4pt}
\centering
\begin{tabular}{@{}cc@{}}
\toprule
\multicolumn{1}{c}{\textbf{Method}}      & \multicolumn{1}{c}{\textbf{Inner Update}}  \\ \midrule
\textbf{{\color{cyan}Full Cold-Start}}
& $\inParams^\star_{k+1} = \Xi^{(\infty)}(\outParams_{k+1}, \inParams_0)$  \\[6pt]
\textbf{{\color{red}Full Warm-Start}}
&
$\inParams^\star_{k+1} = \Xi^{(\infty)}(\outParams_{k+1}, \inParams_k^\star)$ \\[6pt]
\textbf{{\color{orange}Partial Warm-Start}}
&
$\inParams^\star_{k+1} = \Xi^{(T)}(\outParams_{k+1}, \inParams_k^\star)$  \\[6pt]
\bottomrule
\end{tabular}
\caption{\small
Inner parameter updates for {\color{cyan}cold-start}, {\color{red}full warm-start}, and {\color{orange}partial warm-start} BLO.
}
\label{table:warm-cold-start}
\end{table}
Full warm-start can be expressed as computing $\inParams_{k+1}^\star = \Xi^{(\infty)}(\outParams, \inParams_k^\star)$ in each iteration, starting from the previous inner solution $\inParams_k^\star$ rather than $\inParams_0$.
Similarly, partial warm-start can be expressed as $\inParams_{k+1}^\star = \Xi^{(T)}(\outParams, \inParams_k^\star)$, where $T$ is often small (e.g., $T < 10$).

\begin{tcolorbox}[colback=tabblue!15,boxrule=0pt,colframe=white,coltext=black,arc=2pt,outer arc=0pt,valign=center]
\begin{definition}
    Let $\boldr(\outParams, \inParams) \triangleq \Xi^{(T)}(\outParams, \inParams)$.
    Then $(\outParams^\star, \inParams^\star)$ is a \emph{{{\color{orange}({\color{red}full} or partial) warm-start equilibrium}}} if:
        \begin{align*}
            \smash{\outParams^\star \in \argmin_{\outParams} F(\outParams, \inParams^\star) \quad \text{s.t.} \quad
            \inParams^\star = \boldr(\outParams^\star, \inParams^\star)}
        \end{align*}
    For finite $T$, the solution is a {\color{orange}\textit{partial warm-start equilibrium}}.
    As $T \to \infty$, we obtain the {\color{red}\textit{full warm-start equilibrium}}.
\end{definition}
\end{tcolorbox}
\vspace{-0.2cm}

\subsection{Solution Properties}
In this section, we examine properties of cold-start and warm-start equilibria---we analyze the norms of the inner and outer parameters given by different methods.
First, we show that, when $F$ and $f$ are quadratic, cold-start equilibria yield minimum-norm outer parameters.
Then, we analyze the implicit bias induced by the hypergradient approximation, in particular by using the truncated Neumann series to estimate the inverse Hessian for implicit differentiation.
To make the analysis tractable, we make the following assumption:
\begin{tcolorbox}[colback=tabblue!15,boxrule=0pt,colframe=white,coltext=black,arc=2pt,outer arc=0pt,valign=center]
\begin{assumption}[Quadratic Objectives]
\label{assumption:quadratic}
We assume that both the inner and outer objectives, $f$ and $F$, are convex, lower-bounded quadratics.
We let:
\begin{align*}
    \inObj(\outParams, \inParams)
    =
    \frac{1}{2}
    \begin{bmatrix}
      \inParams^\top & \outParams^\top
    \end{bmatrix}
    \begin{bmatrix}
      \boldA & \boldB \\
      \boldB^\top & \boldC
    \end{bmatrix}
    \begin{bmatrix}
      \inParams \\
      \outParams
    \end{bmatrix} +
    \boldd^\top \inParams + \bolde^\top \outParams + c
\end{align*}
where $\boldA \in \mathbb{R}^{|\inSpace| \times |\inSpace|}$ is positive semi-definite, $\boldB \in \mathbb{R}^{|\inSpace| \times |\outSpace|}$, $\boldC \in \mathbb{R}^{|\outSpace| \times |\outSpace|}$, $\boldd \in \mathbb{R}^{|\inSpace|}$, $\bolde \in \mathbb{R}^{|\outSpace|}$, and $c \in \mathbb{R}$.
The PSD assumption implies that $f$ is a convex quadratic in $\inParams$ for a fixed $\outParams$.
For the outer objective, we restrict our analysis to the case where $\outObj$ only depends directly on $\boldw$ (which encompasses many common applications including hyperparameter optimization):
\begin{align*}
    \outObj(\outParams, \inParams)
    &=
    \frac{1}{2} \boldw^\top \boldP \boldw + \boldf^\top \boldw + h\,,
\end{align*}
where $\boldP \in \mathbb{R}^{|\inSpace| \times |\inSpace|}$ is positive semi-definite.
\end{assumption}
\end{tcolorbox}
\vspace{-0.2cm}

We assume that the inner objective is a convex (but not strongly-convex) quadratic, which may have \textit{many global minima}, providing a tractable setting to analyze implicit bias.
This allows us to model an overparameterized linear regression problem with data matrix $\boldsymbol{\Phi} \in \mathbb{R}^{n \times p}$, which corresponds to a quadratic with curvature $\boldsymbol{\Phi}^\top \boldsymbol{\Phi}$.
When the number of parameters is greater than the number of training examples ($p > n$), $\boldsymbol{\Phi}^\top \boldsymbol{\Phi}$ is PSD.
In this case, $f$ has a manifold of valid solutions~\citep{wu2020optimal}.

\begin{tcolorbox}[colback=tabblue!15,boxrule=0pt,colframe=white,coltext=black,arc=2pt,outer arc=0pt,valign=center]
\begin{statement}[{{\color{cyan}Cold-start BLO converges to a cold-start equilibrium.}}]
\label{thm:cold-start-equilibrium}
    Suppose $f$ and $F$ satisfy Assumption~\ref{assumption:quadratic}.
    If the inner parameters are initialized to $\inParams_0$ and learning rates are set appropriately to guarantee convergence, then the cold-start algorithm (Algorithm~\ref{alg:cold-start}) using exact hypergradients converges to a \textit{cold-start equilibrium}.
\end{statement}
\begin{proof}
    The proof is provided in Appendix~\ref{app:cold-start-equilibrium-proof}.
\end{proof}
\end{tcolorbox}
\vspace{-0.2cm}

\paragraph{Implicit Bias of Cold-Start for Inner Solutions.}
Because cold-start BLO optimizes the inner parameters \textit{from initialization to convergence} for each outer iteration, the inner solution inherits properties from the single-level optimization literature.
For example, in linear regression trained with gradient descent, the inner solution will minimize displacement from the initialization $\norm{\boldw - \boldw_0}_2^2$.
Recent work aims to generalize such statements to other model classes and optimization algorithms---e.g., obtaining min-norm solutions with algorithm-specific norms~\citep{gunasekar2018characterizing}.
Generalizing the results for various types of neural nets is an active research area~\citep{vardi2021implicit}.

\paragraph{Implicit Bias of Cold-Start for Outer Solutions.}
In the following theorem, we show that cold-start BLO from outer initialization $\outParams_0$, converges to the outer solution $\outParams^\star$ that minimizes displacement from $\outParams_0$.

\begin{tcolorbox}[colback=tabblue!15,boxrule=0pt,colframe=white,coltext=black,arc=2pt,outer arc=0pt,valign=center]
\begin{theorem}[Min-Norm Outer Parameters]
\label{thm:min-norm-outer}
Consider cold-start BLO (Algorithm~\ref{alg:cold-start}) with exact hypergradients starting from outer initialization $\outParams_0$.
Assume that for each outer iteration, the \textit{inner parameters} are re-initialized to $\inParams_0 = \boldzero$ and optimized with an appropriate learning rate that ensures convergence.
Then cold-start BLO converges to an outer solution $\outParams^\star$ with minimum $L_2$ distance from $\outParams_0$:
\begin{equation}
\outParams^\star = \argmin_{\outParams \in \argmin_{\outParams} F^\star(\outParams)}  \norm{\outParams - \outParams_0}^2\,.
\end{equation}
where we define $F^\star(\outParams) \triangleq F(\outParams, \inParams^\star_{\outParams})$, where $\inParams^\star_{\outParams}$ is the minimum-displacement inner solution from $\inParams_0$, that is, $\inParams^\star_{\outParams} = \argmin_{\inParams \in \cS(\outParams)} \| \inParams - \inParams_0 \|^2$.
\end{theorem}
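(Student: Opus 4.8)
The strategy is to collapse the nested cold-start dynamics into ordinary gradient descent on a single convex quadratic in $\outParams$, and then quote the classical implicit-bias characterization of gradient descent on a (possibly degenerate) quadratic.

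First I would make the inner level explicit. Since $f$ is a lower-bounded convex quadratic in $\inParams$ (Assumption~\ref{assumption:quadratic}), for each $\outParams$ the solution set $\cS(\outParams) = \{\inParams : \boldA\inParams + \boldB\outParams + \boldd = \boldzero\}$ is a nonempty affine subspace, a translate of $\ker\boldA$. By the remark following the cold-start equilibrium definition, running the inner optimizer to convergence from $\inParams_0 = \boldzero$ returns the minimum-norm element of this set, namely $\inParams^\star_\outParams = -\boldA^+(\boldB\outParams + \boldd)$, which is an \emph{affine} function of $\outParams$; differentiating gives $\tfrac{d\inParams^\star_\outParams}{d\outParams} = -\boldA^+\boldB$, which is precisely the canonical response Jacobian $\boldH^+\boldM$ of Section~\ref{sec:background} (see Appendix~\ref{app:min-norm-jacobian}). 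Because $\outObj$ depends on its arguments only through $\inParams=\boldw$, the exact hypergradient is $\big(\tfrac{d\inParams^\star_\outParams}{d\outParams}\big)^\top \tfrac{\partial F}{\partial \inParams}\big|_{\inParams^\star_\outParams}$, which by the chain rule is exactly $\nabla_\outParams F^\star(\outParams)$ for $F^\star(\outParams) \triangleq F(\outParams, \inParams^\star_\outParams)$. Substituting the affine $\inParams^\star_\outParams$ into the quadratic $F$ shows $F^\star$ is itself a convex quadratic, $F^\star(\outParams) = \tfrac12 \outParams^\top \mathbf{G}\outParams + \boldb^\top\outParams + \mathrm{const}$ with $\mathbf{G} = (\boldA^+\boldB)^\top \boldP (\boldA^+\boldB) \succeq 0$. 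Hence Algorithm~\ref{alg:cold-start} with exact hypergradients is exactly $\outParams_{t+1} = \outParams_t - \beta\nabla F^\star(\outParams_t)$, i.e.\ gradient descent on $F^\star$ started from $\outParams_0$.

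Next I would invoke the implicit bias of gradient descent on a convex quadratic. Since the theorem assumes a learning rate ensuring convergence, $F^\star$ attains its minimum, so $\boldb \in \mathrm{range}(\mathbf{G})$ and the minimizer set $M = \{\outParams : \mathbf{G}\outParams + \boldb = \boldzero\}$ is a translate of $\ker\mathbf{G}$. Every gradient $\nabla F^\star(\outParams) = \mathbf{G}\outParams + \boldb$ lies in $\mathrm{range}(\mathbf{G})$, so by induction all iterates stay in $\outParams_0 + \mathrm{range}(\mathbf{G})$, and taking limits $\outParams^\star \in \big(\outParams_0 + \mathrm{range}(\mathbf{G})\big) \cap M$. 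Using the orthogonal decomposition $\mathbb{R}^{|\outSpace|} = \mathrm{range}(\mathbf{G}) \oplus \ker(\mathbf{G})$, that intersection is the single point obtained by projecting $\outParams_0$ onto $M$, equivalently $\outParams^\star = \argmin_{\outParams : \nabla F^\star(\outParams) = \boldzero}\norm{\outParams - \outParams_0}^2$, which is the claim; pairing $\outParams^\star$ with $\inParams^\star_{\outParams^\star}$ yields a cold-start equilibrium, consistent with Statement~\ref{thm:cold-start-equilibrium}.

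The step I expect to be the main obstacle is the first one: correctly identifying the ``exact'' hypergradient---defined via the min-Frobenius-norm (pseudoinverse) response Jacobian---with the true gradient of $F^\star$ along the min-norm inner branch. One must verify that the Jacobian of $\outParams \mapsto \argmin_{\inParams\in\cS(\outParams)}\norm{\inParams}^2$ really is $\boldH^+\boldM$ (this uses that the min-norm point lies in $\mathrm{range}(\boldA) = (\ker\boldA)^\perp$, which is what singles out the pseudoinverse among all right inverses), and that no $\partial F/\partial\outParams$ term appears because $F$ is a function of $\inParams$ alone. The rest is bookkeeping around the null directions of $\mathbf{G}$: that gradients never leave $\mathrm{range}(\mathbf{G})$, that $\mathrm{range}(\mathbf{G}) \perp \ker(\mathbf{G})$ pins the limit down uniquely, and that a minimizer exists---granted by the stated convergence hypothesis.
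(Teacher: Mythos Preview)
Your proposal is correct and follows essentially the same route as the paper: compute the min-norm inner solution $\inParams^\star_\outParams = -\boldA^+(\boldB\outParams+\boldd)$, substitute into $F$ to obtain a convex quadratic $F^\star(\outParams)$ with PSD curvature $\boldB^\top\boldA^+\boldP\boldA^+\boldB$ (the paper's Lemma~\ref{lemma:psd}), and then invoke the implicit bias of gradient descent on a degenerate quadratic (the paper's Lemma~\ref{lemma:gd}, which is exactly your range/kernel projection argument). If anything, you are more explicit than the paper about the step you flagged as the potential obstacle---verifying that the canonical pseudoinverse response Jacobian coincides with the true derivative of the min-norm branch so that the ``exact hypergradient'' really is $\nabla_\outParams F^\star$---which the paper essentially asserts without elaboration.
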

\begin{proof}
    The proof is provided in Appendix~\ref{app:proof-min-norm-outer}.
\end{proof}
\end{tcolorbox}

Next, we observe that the full warm-start and cold-start algorithms are equivalent for strongly-convex inner problems.

\begin{tcolorbox}[colback=tabblue!15,boxrule=0pt,colframe=white,coltext=black,arc=2pt,outer arc=0pt,valign=center]
\begin{remark}[Equivalence of Full Warm-Start and Cold- Start in the Strongly Convex Regime] \label{remark:cold-full-warm}
    When the inner problem $f(\outParams, \inParams)$ is strongly convex in $\inParams$ for each $\outParams$, then {\color{red} full warm-start} (Algorithm~\ref{alg:warm-start}) and {\color{cyan} cold-start} (Algorithm~\ref{alg:cold-start}) are equivalent.
\end{remark}
\begin{proof}
    The proof is provided in Appendix~\ref{app:cold-full-warm-equiv-proof}
\end{proof}
\end{tcolorbox}

We relate partial and full warm-start equilibria as follows:

\begin{tcolorbox}[colback=tabblue!15,boxrule=0pt,colframe=white,coltext=black,arc=2pt,outer arc=0pt,valign=center]
\begin{statement}[Inclusion of Partial Warm-Start Equilibria.] \label{thm:inclusion}
Every partial warm-start equilibrium is a full warm-start equilibrium.
In addition, if $\Xi(\outParams, \inParams) = \inParams - \alpha \nabla_{\inParams} f(\outParams, \inParams)$ with a fixed (non-decayed) step size $\alpha$, then full warm-start equilibria are also partial warm-start equilibria.
\end{statement}
\begin{proof}
    The proof is provided in Appendix~\ref{app:inclusion-proof}.
\end{proof}
\end{tcolorbox}

\subsubsection{Implicit Bias from Hypergradient Approximation}

\paragraph{Neumann Series.}
\begin{wrapfigure}[11]{r}{0.4\linewidth}
  \vspace{-0.4cm}
  \begin{center}
    \includegraphics[width=\linewidth]{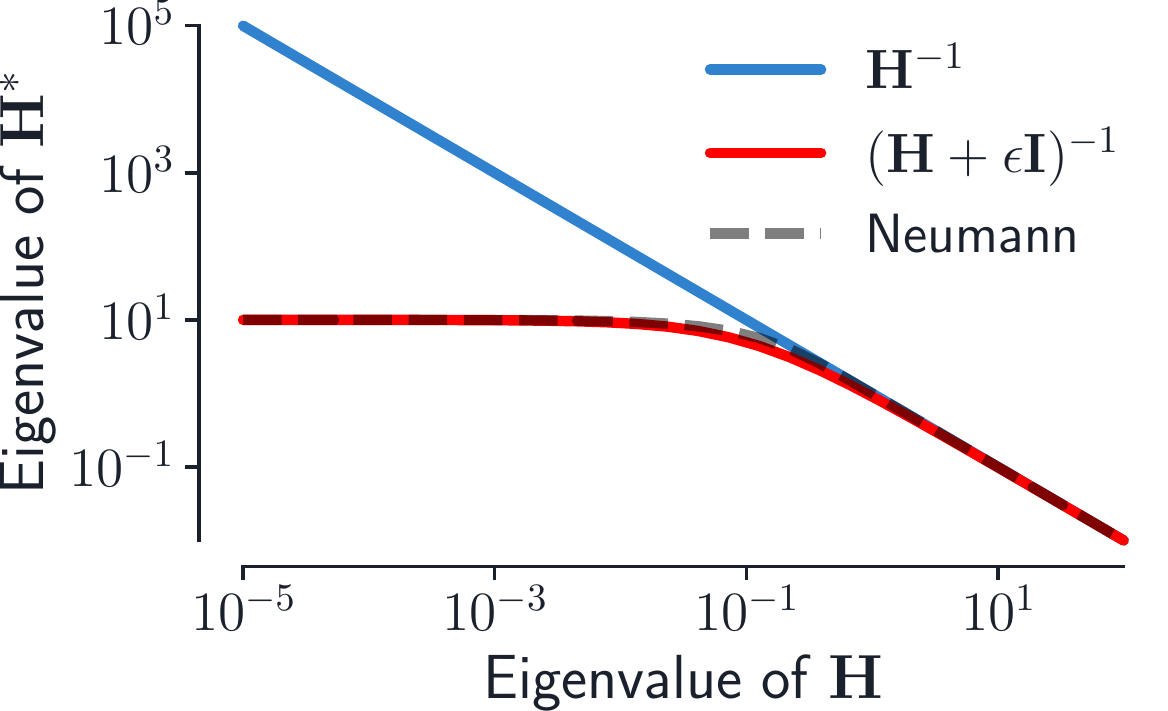}
  \end{center}
  \caption{\small
  Eigenvalues of various matrix functions of $\boldH$ (denoted $\boldH^\star$ in the diagram): $\boldH^{-1}$, $(\boldH + \epsilon \boldI)^{-1}$, and the Neumann series approximation $\alpha \sum_{j=0}^K (\boldI - \alpha \boldH)^j$.
  Here, $\epsilon = \frac{1}{\alpha K}$.
  }
  \label{fig:sv-comparison}
\end{wrapfigure}
Next, we investigate the impact of the hypergradient approximation on the converged outer solution.
In practice, one often uses the truncated Neumann series to estimate the inverse Hessian when computing the implicit hypergradient.
Note that:
\begin{tcolorbox}[colback=tabblue!15,boxrule=0pt,colframe=white,coltext=black,arc=2pt,outer arc=0pt,valign=center]
\vspace{-0.4cm}
\begin{align}
    \alpha \sum_{j=0}^K (\boldI - \alpha \boldH)^j \approx (\boldH + \epsilon \boldI)^{-1}
\end{align}
\end{tcolorbox}
where $\epsilon = \frac{1}{\alpha K}$.
This is known from the spectral regularization literature~\citep{gerfo2008spectral,RosascoLecture}.
We show in Appendix~\ref{app:proximal-br} that the damped Hessian inverse $(\boldH + \epsilon \boldI)^{-1}$ corresponds to the hypergradient of a proximally-regularized inner objective: $\hat{f}(\outParams, \inParams) = f(\outParams, \inParams) + \frac{\epsilon}{2} \| \inParams - \inParams' \|_2^2$, where $\inParams' \in \argmin_{\inParams} f(\outParams, \inParams)$.
The damping prevents the inner optimization from moving far in low-curvature directions.
Consider the spectral decomposition of the real symmetric matrix $\boldH = \boldU \boldD \boldU^\top$, where $\boldD$ is a diagonal matrix containing the eigenvalues of $\boldH$, and $\boldU$ is an orthogonal matrix.
Then,
$
    (\boldH + \epsilon \boldI)^{-1}
    =
    (\boldU \boldD \boldU^\top + \epsilon \boldI)^{-1}
    =
    (\boldU (\boldD + \epsilon \boldI) \boldU^\top)^{-1}
    =
    \boldU^\top (\boldD + \epsilon \boldI)^{-1} \boldU
$.
If $\lambda$ is an eigenvalue of $\boldH$, then the corresponding eigenvalue of $\boldH^{-1}$ is $\frac{1}{\lambda}$.
In contrast, the corresponding eigenvalue of $(\boldH + \epsilon \boldI)^{-1}$ is $\frac{1}{\lambda + \epsilon}$.
When $\epsilon \ll \lambda$, $\frac{1}{\lambda + \epsilon} \approx \frac{1}{\lambda}$; when $\lambda \ll \epsilon$, $\frac{1}{\lambda + \epsilon} \approx \frac{1}{\epsilon}$.
Thus, the influence of small eigenvalues (associated with low-curvature directions) is diminished, and the truncated Neumann series primarily takes into account high-curvature directions.
Figure~\ref{fig:sv-comparison} illustrates the relationship between the eigenvalues of $\boldH^{-1}$, $(\boldH + \epsilon \boldI)^{-1}$, and $\alpha \sum_{j=0}^K (\boldI - \alpha \boldH)^j$.

\paragraph{Unrolling.}
The implicit bias induced by approximating the hypergradient via $K$-step unrolled differentiation is qualitatively similar to the bias induced by the truncated Neumann series.
For quadratic inner objectives $f$, the truncated Neumann series coincides with the result of differentiating through $K$ steps of unrolled gradient descent, because the Hessian of a quadratic is constant over the inner optimization trajectory~\citep{lorraine2020optimizing}.
Note that gradient descent on a quadratic converges (for step-size $\alpha \leq 1 / \|\boldH\|_2$) more rapidly in high-curvature directions than in low-curvature directions.
Thus, the approximate best response obtained by truncated unrolling only takes into account high-curvature directions of the inner objective, and is less sensitive to low-curvature directions.
In turn, the response Jacobian will only capture how the inner parameters depend on the outer parameters in these high-curvature directions.
Note that for general objectives $f$ (e.g., when training neural networks), differentiating through unrolling only coincides with the Neumann series when the inner parameters are at a stationary point of $f$.

\section{Empirical Overparameterization Results}
\label{sec:experiments}

Many large-scale empirical studies---in particular in the areas of hyperparameter optimization and data augmentation---use warm-start bilevel optimization~\citep{hataya2020meta,ho2019population,mounsaveng2021learning,peng2018jointly,tang2020onlineaugment}.
In this section, we introduce simple tasks based on dataset distillation, designed to provide insights into the phenomena at play.
First, we show that when the inner problem is overparameterized, the inner parameters $\inParams$ can retain information associated with different settings of the outer parameters over the course of joint optimization.
Then, we show that when the outer problem is overparameterized, the choice of hypergradient approximation can affect which outer solution is found.
Experimental details and extended results are provided in Appendix~\ref{app:exp-details}.

\begin{figure*}[t]
\centering
\includegraphics[width=0.95\linewidth]{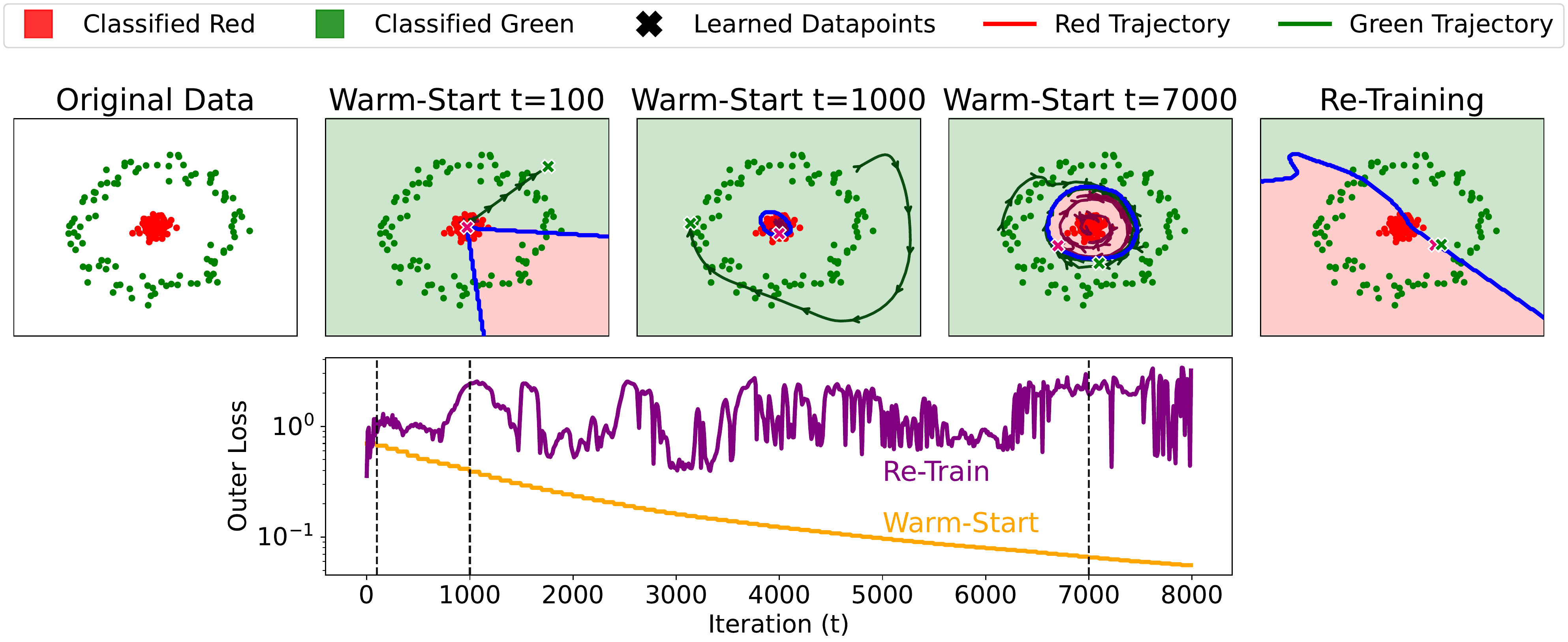}
\caption{\small
Dataset distillation for binary classification, with two learned datapoints (outer parameters) adapted jointly with the model weights (inner parameters).
\textbf{Top left:} The original data distribution we wish to distill; \textbf{Top middle three plots:} Visualizations of snapshots during training with warm-start BLO, at iterations $t \in \{100, 1000, 7000\}$ (indicated by the dashed vertical lines in the bottom plot).
In each middle figure, we plot the \textit{new portion of the synthetic datapoint trajectory} since the previous snapshot, shown by a curve with arrows; \textbf{Top right:} Decision boundary when re-training to convergence on the final values of the two distilled datapoints yields a poor solution for the original data.
\textbf{Bottom:} We show the outer loss over the course of a single run of warm-start BLO corresponding to the middle three plots in the top row.
We also demonstrate that none of the intermediate synthetic datapoint pairs along the trajectory is sufficient to fit the original data well, by re-training on each intermediate point (shown by the purple curve).
}
\label{fig:regression_and_classification}
\end{figure*}

\subsection{Inner Overparameterization: Dataset Distillation}
\label{sec:dataset-distillation}

In dataset distillation, the original dataset is only accessed in the outer objective; thus, one might expect that the lower-dimensional distilled dataset would act as an information bottleneck between the objectives.
Because the outer objective is only used directly to update the outer variables, it would seem intuitive that all of the information about the outer objective is compressed into the outer variables.
While this is correct for the cold-start equilibrium, we show that it does not hold for the warm-start equilibrium: \emph{a surprisingly large amount of information can leak from the original dataset to the inner variables (e.g., network weights)}.

Consider a 2D binary classification task where the classes form concentric rings (Figure~\ref{fig:regression_and_classification}).
We aim to learn \textit{two distilled datapoints} (one per class) to model the circular decision boundary.
One may \textit{a priori} expect that this would not be possible when training an MLP on only the two distilled points; indeed, we observe poor decision boundaries for the original dataset when training a model to convergence on the synthetic datapoints (Figure~\ref{fig:regression_and_classification}, Top Right).
However, when performing warm-start alternating updates on the MLP parameters and the learned datapoints, the datapoints follow a nontrivial \textit{trajectory}, tracing out the decision boundary between classes over time (Figure~\ref{fig:regression_and_classification}, middle three plots).
The model trained jointly with the two learned datapoints fits to the full trajectory of those datapoints.
Thus, warm-started BLO yields a model that achieves nearly the same outer loss as one trained directly on the original data, despite only training on a single datapoint per class.
See the caption of Figure~\ref{fig:regression_and_classification} for details on interpreting this result.
We provide additional results in Appendix~\ref{app:extended-dataset-distillation}, where we consider the three-class variant of this problem: we show that similar results hold when learning three distilled datapoints (one per class), and even when learning just two distilled datapoints along with their soft labels---in which case one datapoint switches its label over the course of training.

\paragraph{Warm-Start Memory.}
\begin{wrapfigure}[17]{r}{0.6\linewidth}
  \vspace{-0.6cm}
  \begin{center}
    \includegraphics[width=\linewidth]{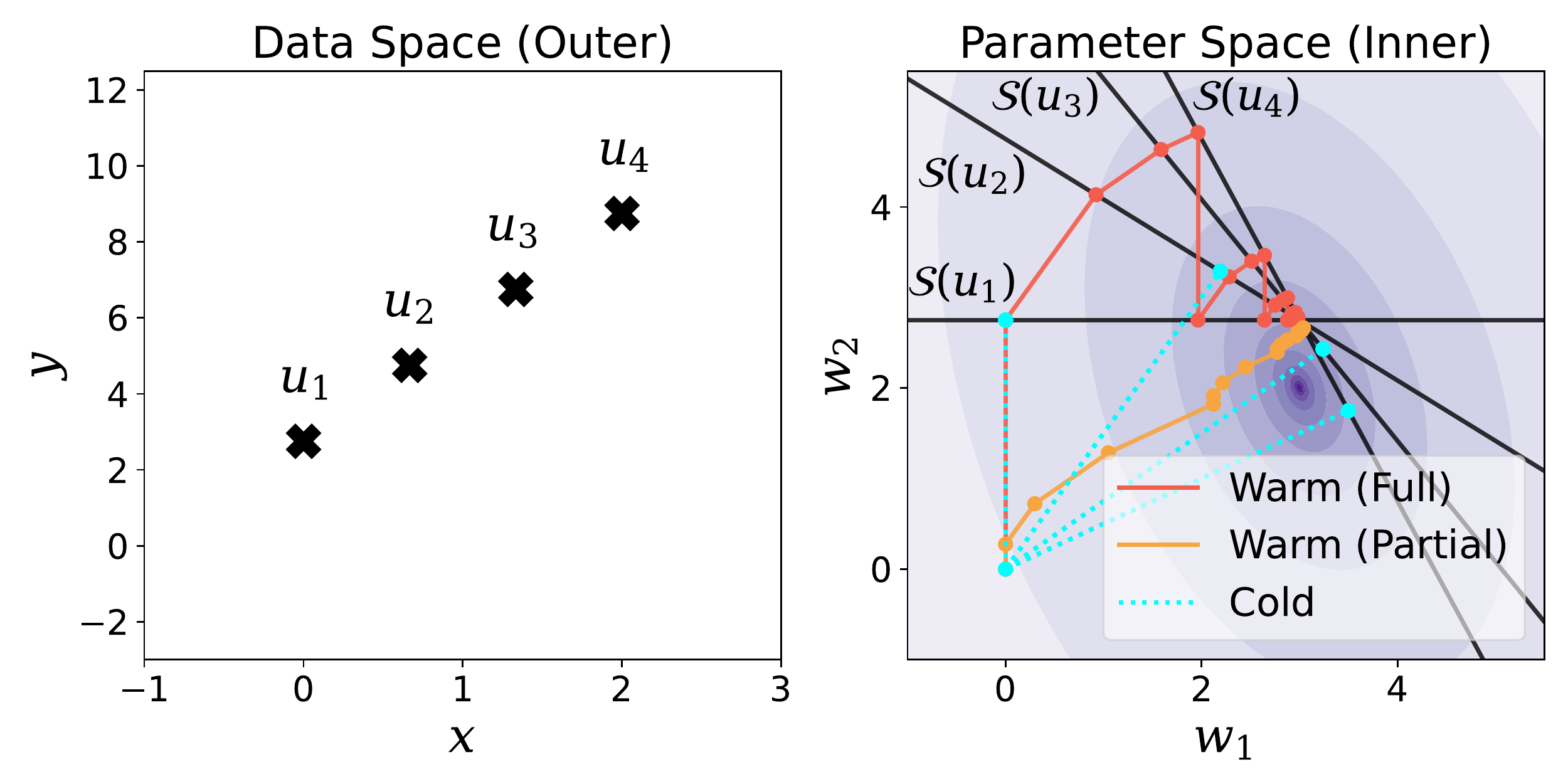}
  \end{center}
  \vspace{-0.2cm}
  \caption{\small
  Sketch illustrating projection onto a fixed sequence of solution sets $\{ \cS(\outParams_i) \}_{i=1}^4$, from different initializations. {\color{red}Warm-start with full inner optimization} projects from one solution set to the next; {\color{orange}warm-start with partial inner optimization} takes a step in the direction towards each solution set; and {\color{cyan}cold-start} re-projects from the origin.
  }
  \label{fig:projection}
\end{wrapfigure}
Here we provide intuition for the warm-start behavior observed in Figure~\ref{fig:regression_and_classification}.
In particular, we discuss how warm-start BLO can induce a \textit{memory effect}.
Figure~\ref{fig:projection} sketches warm- and cold-start algorithms on a toy example where we can visualize the steps of each algorithm in the inner parameter space.
The solution sets for four \textit{fixed} values of a single synthetic datapoint are shown by the solid black lines in Figure~\ref{fig:projection}, and outer loss contours are shown in the background.
The inner parameters are initialized at the origin.
Due to the implicit bias of the inner optimization problem, in each iteration of cold-start BLO, the initial inner parameters $\inParams_0 = \boldzero$ are projected onto the solution set corresponding to the current outer parameter: $\inParams_{k+1}^\star = \Pi(\boldzero, \cS(\outParams_k))$,
where $\Pi(\cdot, \mathcal{C})$ denotes projection onto the (convex) set $\mathcal{C}$.
In contrast, full warm-start projects the previous inner parameters onto the current solution set: $\inParams_{k+1}^\star = \Pi(\inParams_k^\star, \cS(\outParams_k))$.
If one cycles through the solution sets repeatedly, then full warm-start BLO is equivalent to the Kaczmarz algorithm~\citep{karczmarz1937angenaherte}, a classic alternating projection algorithm for finding a point in the intersection of the constraint sets (see Appendix~\ref{app:iterated-projection} for details).
In this case, the inner parameters will converge to the intersection of the solution sets $\{ \cS(\outParams_i) \}_{i=1}^4$, in effect yielding inner parameters that perform well for several outer parameters  simultaneously.
In the case of dataset distillation, this corresponds to model weights which fit all of the distilled datapoints over the outer optimization trajectory.

\begin{figure*}[t]
    \centering
    \begin{subfigure}{0.29\textwidth}
        \includegraphics[width=\linewidth]{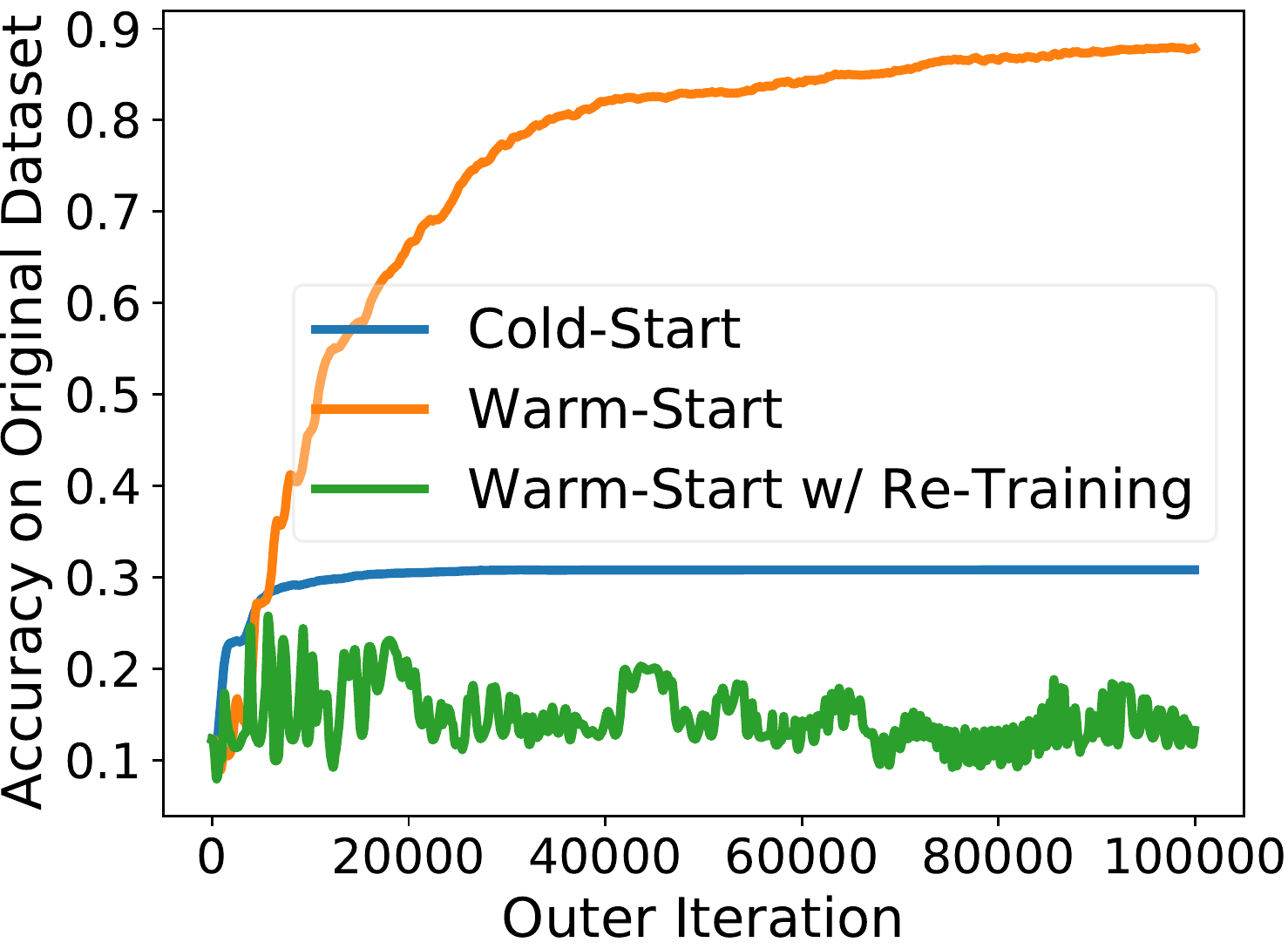}
        \caption{Accuracies}
        \label{fig:mnist-accs}
    \end{subfigure}
    \quad
    \begin{subfigure}{0.29\textwidth}
        \includegraphics[width=\linewidth]{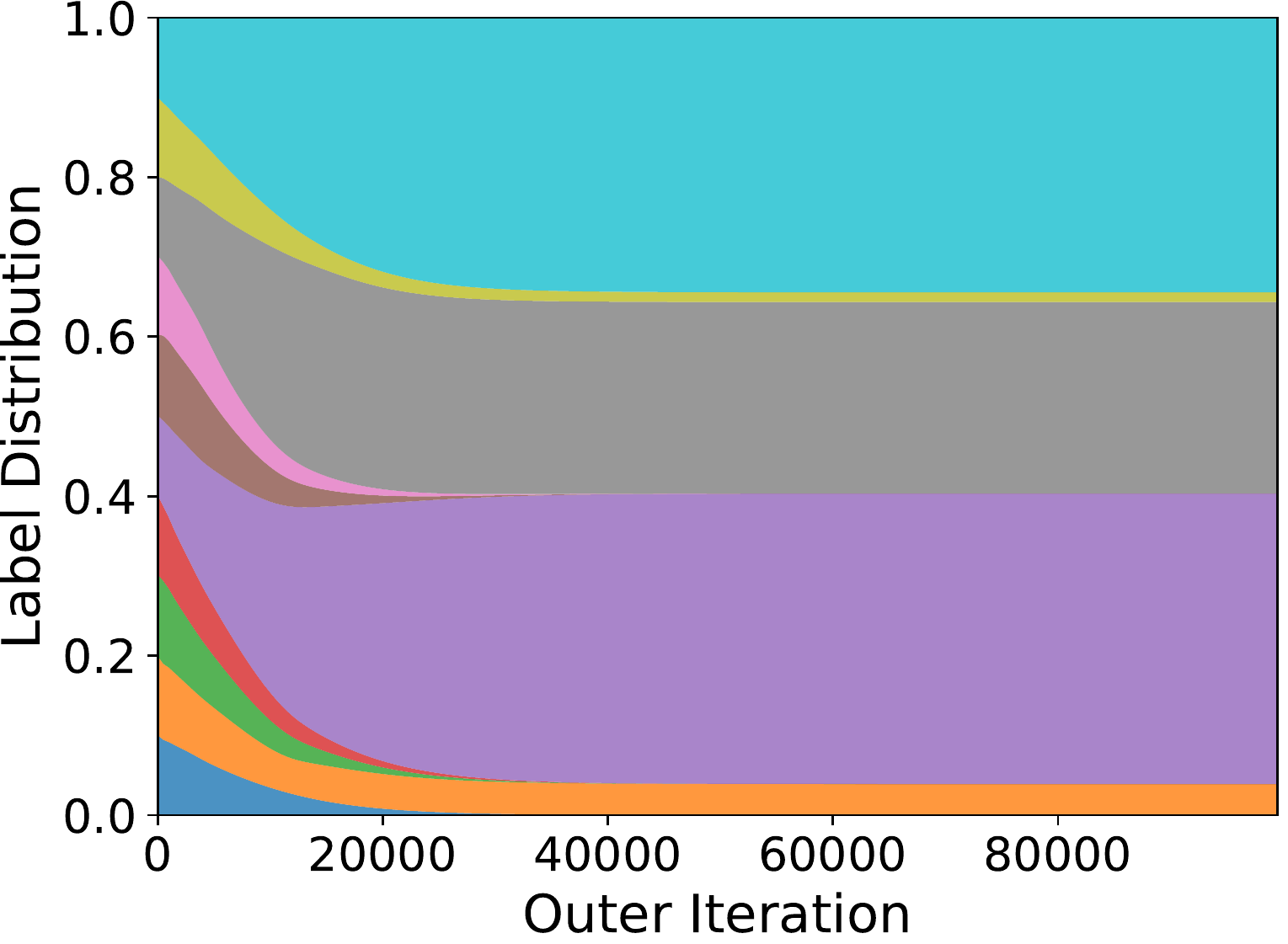}
        \caption{{\color{cyan}Cold-start} soft labels}
        \label{fig:mnist-soft-labels}
    \end{subfigure}
    \quad
    \begin{subfigure}{0.29\textwidth}
        \includegraphics[width=\linewidth]{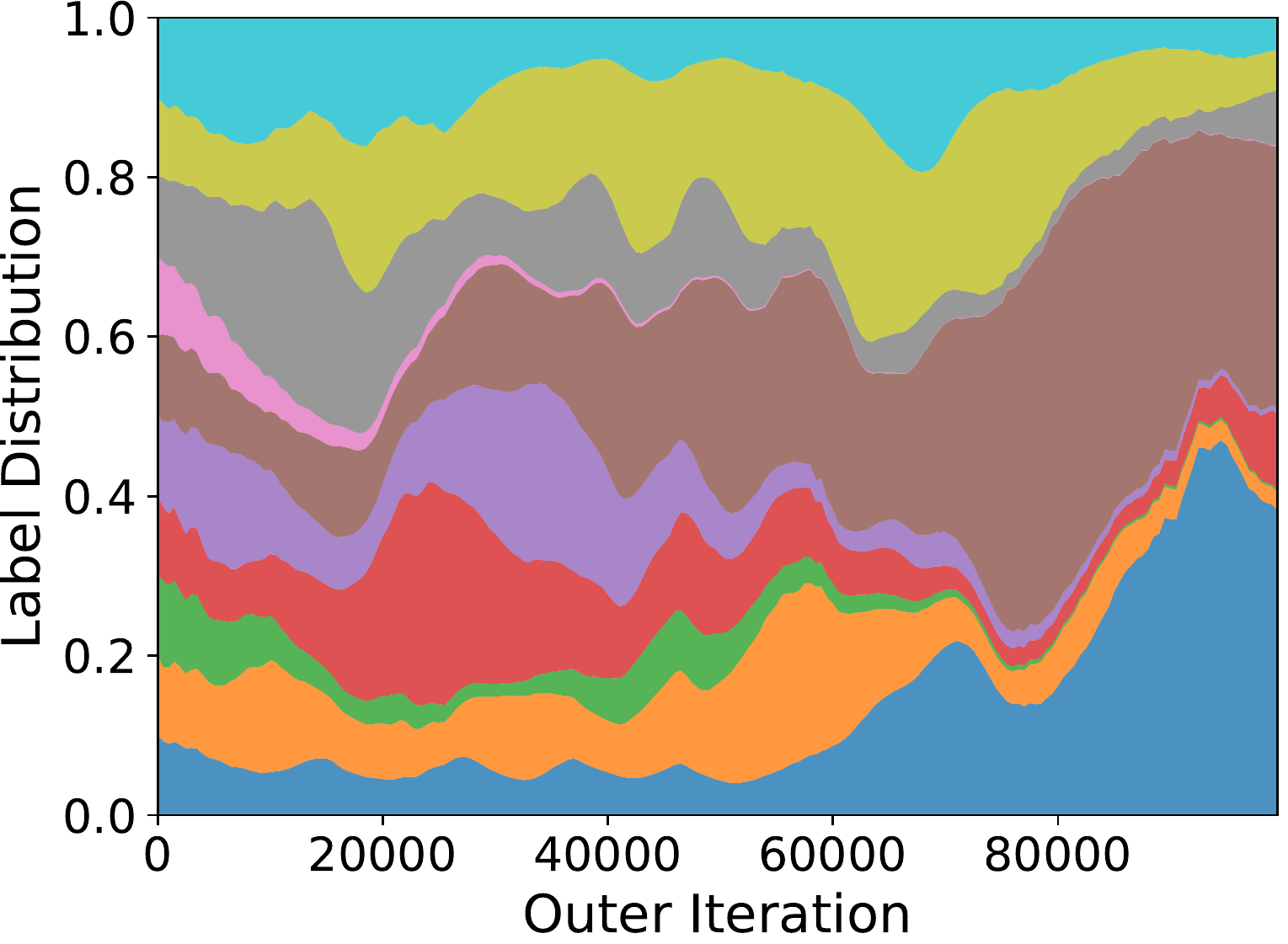}
        \caption{{\color{orange}Warm-start} soft labels}
        \label{fig:mnist-warm-labels}
    \end{subfigure}
    \label{fig:mnist-distillation}
    \caption{\small \textbf{MNIST dataset distillation using a linear classifier.} Here, we learn a single synthetic datapoint (a $28 \times 28$ image canvas) and corresponding soft label (a 10-dimensional vector representing class logits).}
\end{figure*}

\paragraph{Warm-Start vs Cold-Start in High-Dimensions.}
To illustrate warm-start phenomena in high-dimensional problems, we ran dataset distillation on MNIST using a linear classifier.
The original dataset to be distilled consists of 10000 examples from the MNIST training set.
The BLO methods are tasked with learning a single $28 \times 28$ image canvas and a 10-dimensional vector of soft labels---e.g., the outer parameters are $28^2 + 10 = 794$ dimensional.
Figure~\ref{fig:mnist-accs} shows the accuracy of the model on the original dataset, when optimizing a \textit{single synthetic datapoint and its soft label} with warm-start, cold-start, and warm-start + re-training (which trains a model from scratch on the warm-start synthetic datapoint at the current outer iteration).
We visualize the soft label evolution in Figures~\ref{fig:mnist-soft-labels} and \ref{fig:mnist-warm-labels}, showing classes 0-9 as colored regions.
While the cold-start soft label quickly converges to a mixture of three classes, warm-start continues to adapt the soft label over the course of joint optimization, effectively training the inner model on all 10 classes (e.g., by placing more weight on different classes at different timesteps along the training trajectory).
We obtained similar results on MNIST, FashionMNIST, and CIFAR-10, with 1 or 10 synthetic datapoints (see Table \ref{tab:high-dim-distill}).
In addition to dataset distillation, we observe similar phenomena in hyperparameter optimization.
We trained a linear data augmentation (DA) network that transforms inputs before feeding them into a classifier; here, the DA net parameters are hyperparameters $\outParams$, tuned on the validation set.
We used subsampled training sets consisting of 50 datapoints---so that data augmentation is beneficial---and evaluated performance on the full validation set.
Table~\ref{tab:high-dim-distill} compares warm- and cold-start BLO on this task.
Note that in order to compare with cold-start solutions (which need $10^3$--$10^4$ inner optimization steps), we \emph{require} tractable inner problems like linear models or MLPs.

\begin{table*}
\centering
\footnotesize
\begin{tabular}{@{}cccc|ccc@{}}
\toprule
 & \multicolumn{3}{c|}{\textbf{Dataset Distillation}}    & \multicolumn{3}{c}{\textbf{Data Augmentation Net}} \\ \midrule
\textbf{Method}                & \textbf{MNIST} & \textbf{Fashion} & \textbf{CIFAR-10} & \textbf{MNIST}  & \textbf{Fashion} & \textbf{CIFAR-10} \\ \midrule
{\color{cyan}\textbf{Cold-Start}}  & {\color{purple}30.7} / {\color{blue}89.1} & {\color{purple}33.2} / {\color{blue}83.0} & {\color{purple}17.6} / {\color{blue}46.9} & 84.86 & 84.04 & 45.38 \\
{\color{orange}\textbf{Warm-Start}} & {\color{purple}90.8} / {\color{blue}97.5} & {\color{purple}88.2} / {\color{blue}94.2} & {\color{purple}50.3} / {\color{blue}59.8} & 92.81 & 89.51 & 59.30      \\
\textbf{{\color{orange}Warm-Start} + Retraining} & {\color{purple}12.9} / {\color{blue}17.1} & {\color{purple}7.0} / {\color{blue}12.6} & {\color{purple}10.2} / {\color{blue}8.9} & 9.15 & 25.32 & 11.12  \\ \bottomrule
\end{tabular}
\vspace{-0.1cm}
\caption{\small \textbf{Columns 1-3:} Accuracy on original data with {\color{purple}1}/{\color{blue}10} synthetic samples. \textbf{Columns 4-6:} Learning a linear data augmentation network.}
\label{tab:high-dim-distill}
\end{table*}

\paragraph{Warm-Start Takeaways.}
Warm-start BLO yields outer parameters that \textit{fail to generalize} under re-initialization of the inner problem.
In both toy and high-dimensional problems, re-training with the final outer parameters (e.g., discarding the outer optimization trajectory) yields a model that performs poorly on the outer objective.
In addition, warm-start BLO \textit{leaks information} about the outer objective to the inner parameters, which can lead to overestimation of performance.
For example, when adapting a small number of hyperparameters online, warm-start BLO may overfit the validation set, yielding a model that fails to generalize to the test set.

\subsection{Outer Overparameterization: Anti-Distillation}
\label{sec:overparam-outer}

\begin{wrapfigure}[14]{r}{0.45\linewidth}
    \vspace{-0.4cm}
    \centering
    \includegraphics[width=\linewidth]{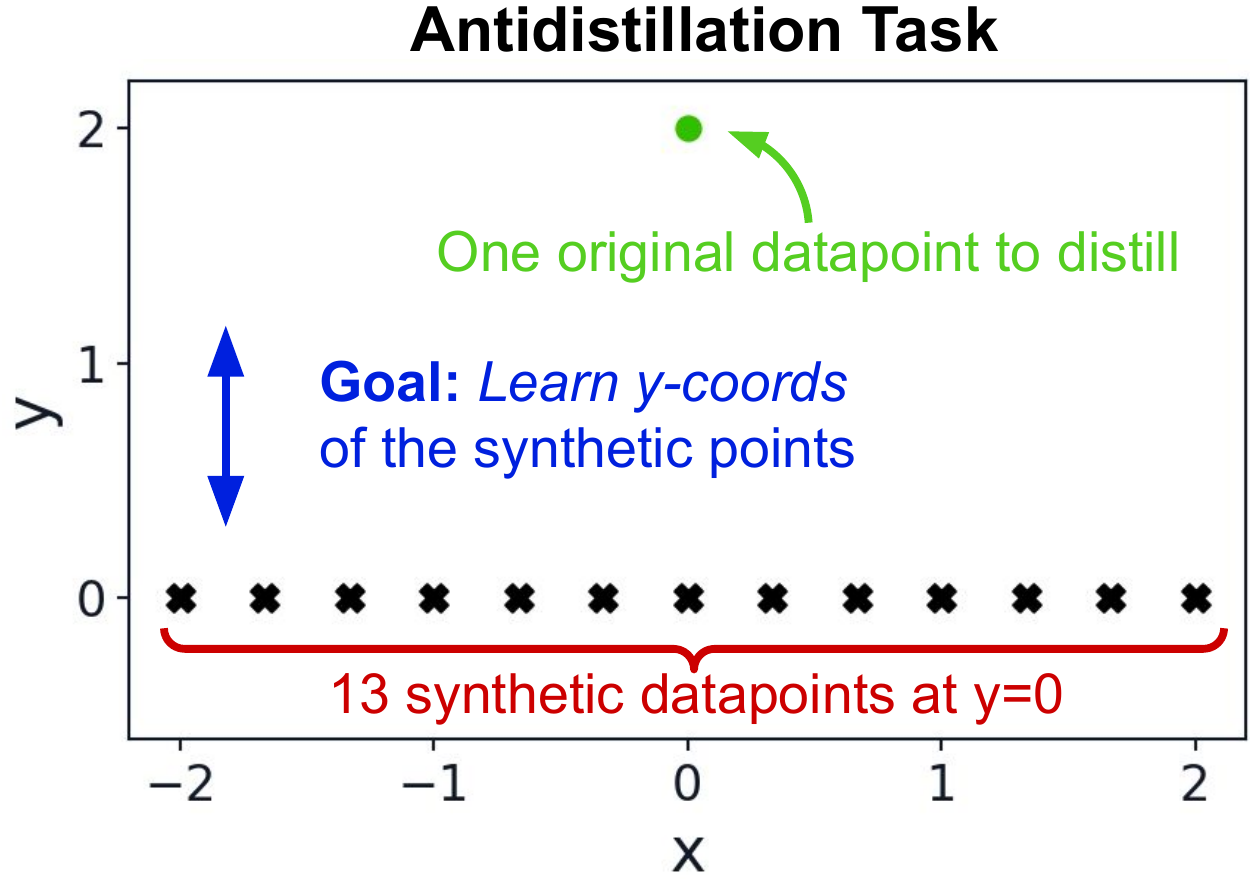}
    \vspace{-0.7cm}
    \caption{\small Antidistillation regression task setup.}
    \label{fig:antidistillation-setup}
\end{wrapfigure}
In the previous section, we investigated implicit bias resulting from the choice of cold- vs warm-start BLO algorithms.
Next, we focus on cold-start BLO, and show that when the outer problem is overparameterized, different hypergradient approximations can lead to vastly different outer solutions.
In particular, we construct a simple problem that illustrates the impact of using the truncated Neumann series or damping to approximate the inverse Hessian when computing the hypergradient.
We propose a task related to dataset distillation, but where we have \textit{more} learned datapoints than original dataset examples, which we term \textit{anti-distillation} (see Figure~\ref{fig:antidistillation-setup} for an illustration of the problem setup).
Here, there are many valid ways to set the learned datapoints such that a model trained on those points achieves good performance on the original data: any solution that places one learned datapoint on top of the original point perfectly fits the outer objective.

\paragraph{Anti-Distillation.}
Consider a regression task with inner objective $f(\outParams, \inParams) = \frac{1}{2} \| \boldPhi \inParams - \outParams \|_2^2$, where the outer parameters $\outParams$ represent only the targets, not the inputs.
The outer objective computes the loss on a fixed set of original datapoints $\{ (x^{(o)}_i, y^{(o)}_i) \}_{i=1}^P$: $F(\outParams, \inParams) = \frac{1}{2} \| \boldPhi^{(o)} \inParams - \boldy \|_2^2$.
Here, $f$ and $F$ are quadratics that satisfy Assumption~\ref{assumption:quadratic}.
The distillation task is to learn the $y$-coordinates of $N$ synthetic datapoints $\{ (x_i, y_i) \}_{i=1}^N$ with fixed $x$-coordinates.
The design matrix $\boldPhi$ is $N \times D$, where each row is a $D$-dimensional feature vector given by the following Fourier basis mapping:
\begin{align*}
    \boldphi(x)
    =
    \begin{bmatrix}
      1
      &
      \smash{
      \underbrace{
      \begin{matrix}
      2^{L-1} \cos(x)
      &
      2^{L-2} \cos(2x)
      &
      \cdots
      &
      \cos(Lx)
      \end{matrix}}_{L \cos \text{terms}}
      }
      &
      \smash{
      \underbrace{
      \begin{matrix}
      2^{L-1} \sin(x)
      &
      2^{L-2} \sin(2x)
      &
      \cdots
      &
      \sin(Lx)
      \end{matrix}}_{L \sin \text{terms}}
      }
    \end{bmatrix}
\end{align*}
\vspace{0.05cm}

%
We designed $\boldphi(x)$ such that low-frequency terms have larger amplitudes than high-frequency terms, yielding an inductive bias for smoothness: it is easier (e.g., requires a smaller adjustment to the weights $\inParams$) to fit a given curve using the low-frequency terms than using the high-frequency terms.
Thus, the minimum-norm solution found by gradient descent will explain as much as possible using low frequency terms.
Note that neural networks have also been observed to have a bias towards low-frequency information~\citep{basri2020frequency,rahaman2019spectral,tancik2020fourier}.\footnote{In Appendix~\ref{app:extended-anti-distillation}, we show a similar experiment using an MLP in the inner problem instead of a fixed Fourier basis mapping, where we observe qualitatively similar behavior.}

\paragraph{Hypergradient Approximations.}
We consider the cold-start solution for the inner problem from initialization $\inParams_0 = \boldzero$, which can be computed analytically as $\inParams^\star = \boldPhi^+ \outParams$.
At this inner solution, we compare three different methods to compute the hypergradient $\nabla_{\outParams} F(\outParams, \inParams)$, which differ in how they estimate the response Jacobian $\frac{\partial \inParams}{\partial \outParams}$: 1) differentiation through the closed-form, min-norm solution yields $\frac{\partial (\boldPhi^+ \outParams)}{\partial \outParams} = \boldPhi^+$; 2) implicit differentiation using the truncated, $K$-term Neumann series yields $\alpha \left( \sum_{j=0}^K (\boldI - \alpha \boldH)^j \right) \boldM$; and 3) implicit differentiation using the damped Hessian inverse yields $(\boldH + \epsilon \boldI)^{-1} \boldM$.

\begin{figure*}
    \centering
    \begin{subfigure}{.3\textwidth}
    \vspace{-0.2cm}
    \includegraphics[width=\linewidth]{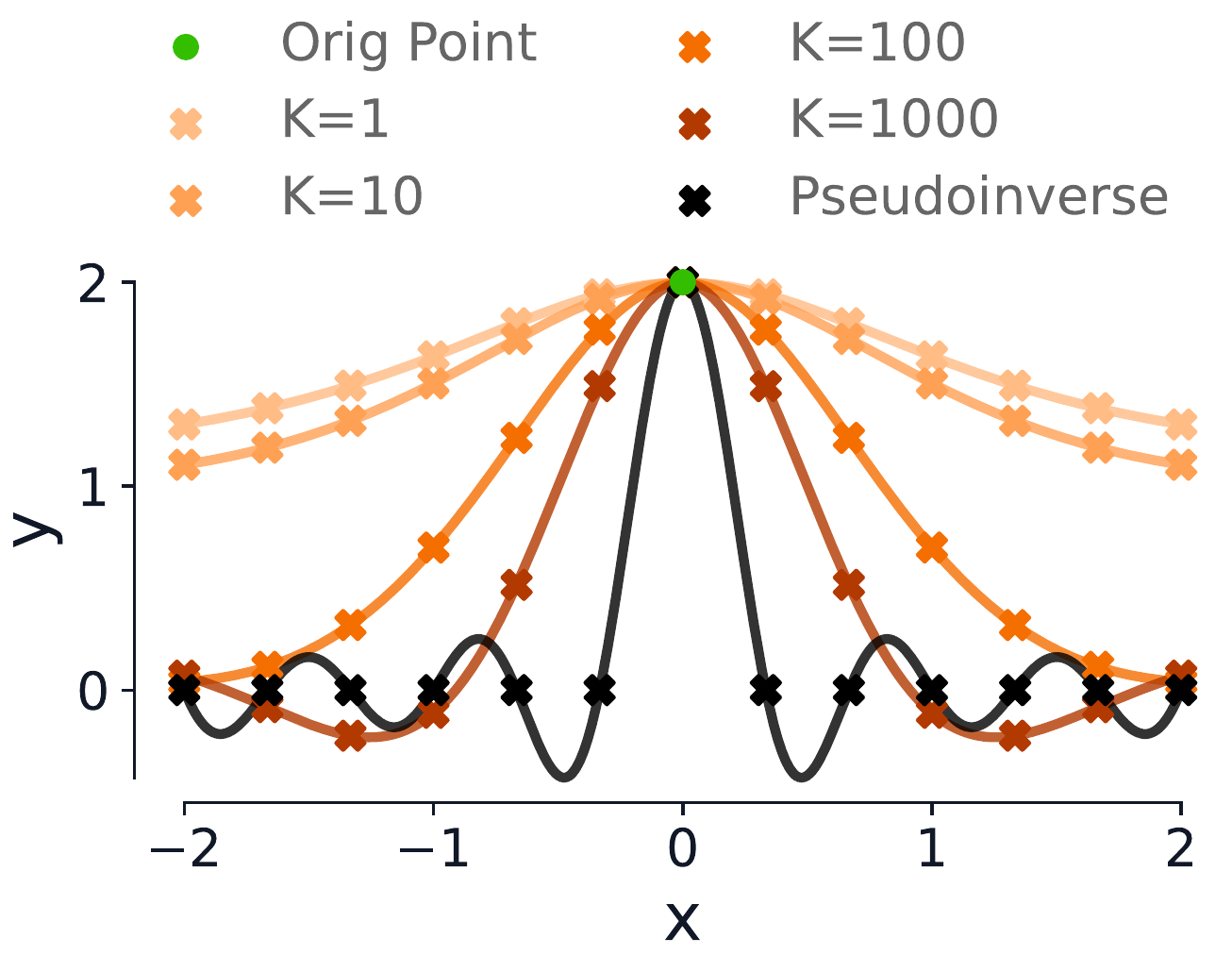}
      \vspace{-0.4cm}
      \caption{Neumann/unrolling}
      \label{fig:antidist-neumann}
    \end{subfigure}
    \quad
    \begin{subfigure}{.31\textwidth}
    \vspace{-0.2cm}
    \includegraphics[width=\linewidth]{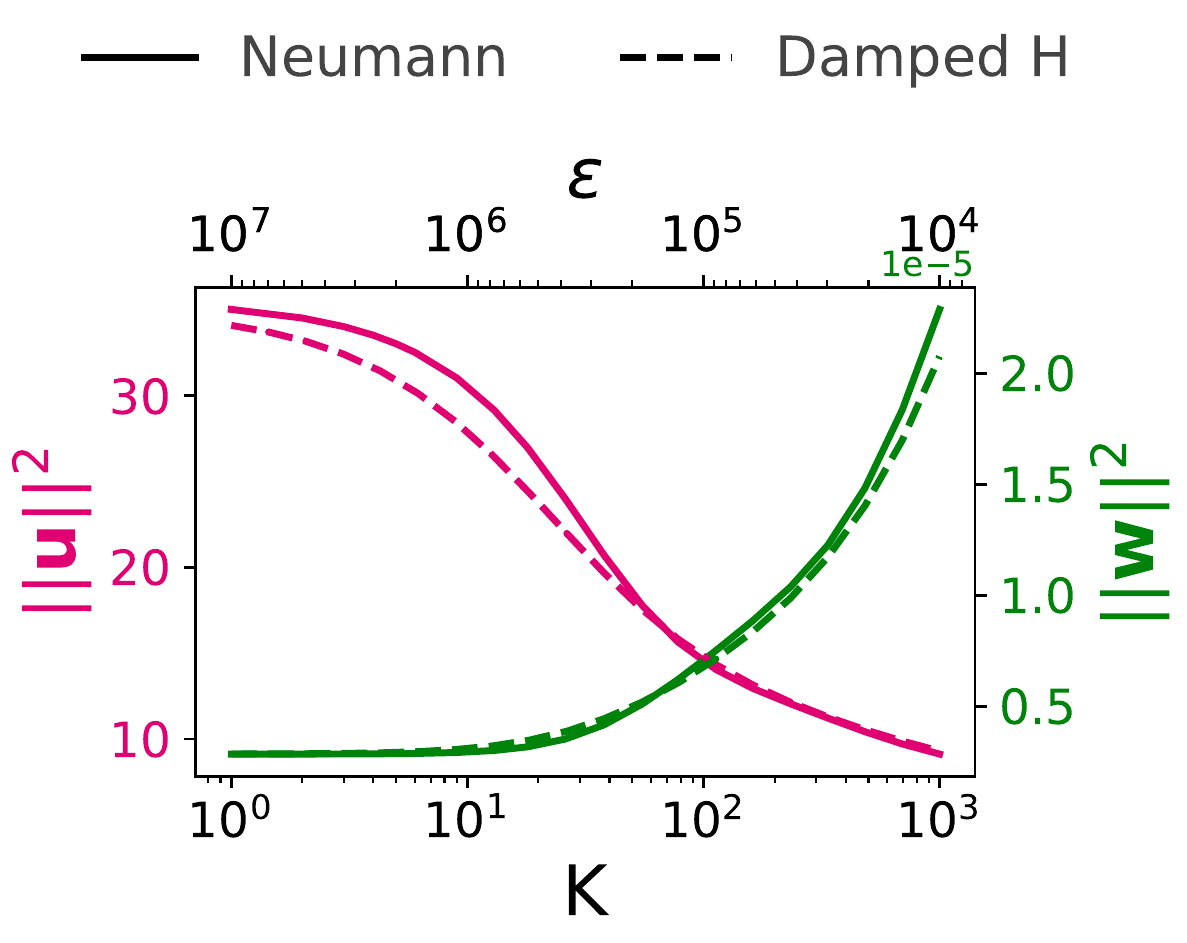}
    \vspace{-0.6cm}
    \caption{Parameter norms}
    \label{fig:antidist-norms}
    \end{subfigure}
    \quad
    \begin{subfigure}{.24\textwidth}
    \includegraphics[width=\linewidth]{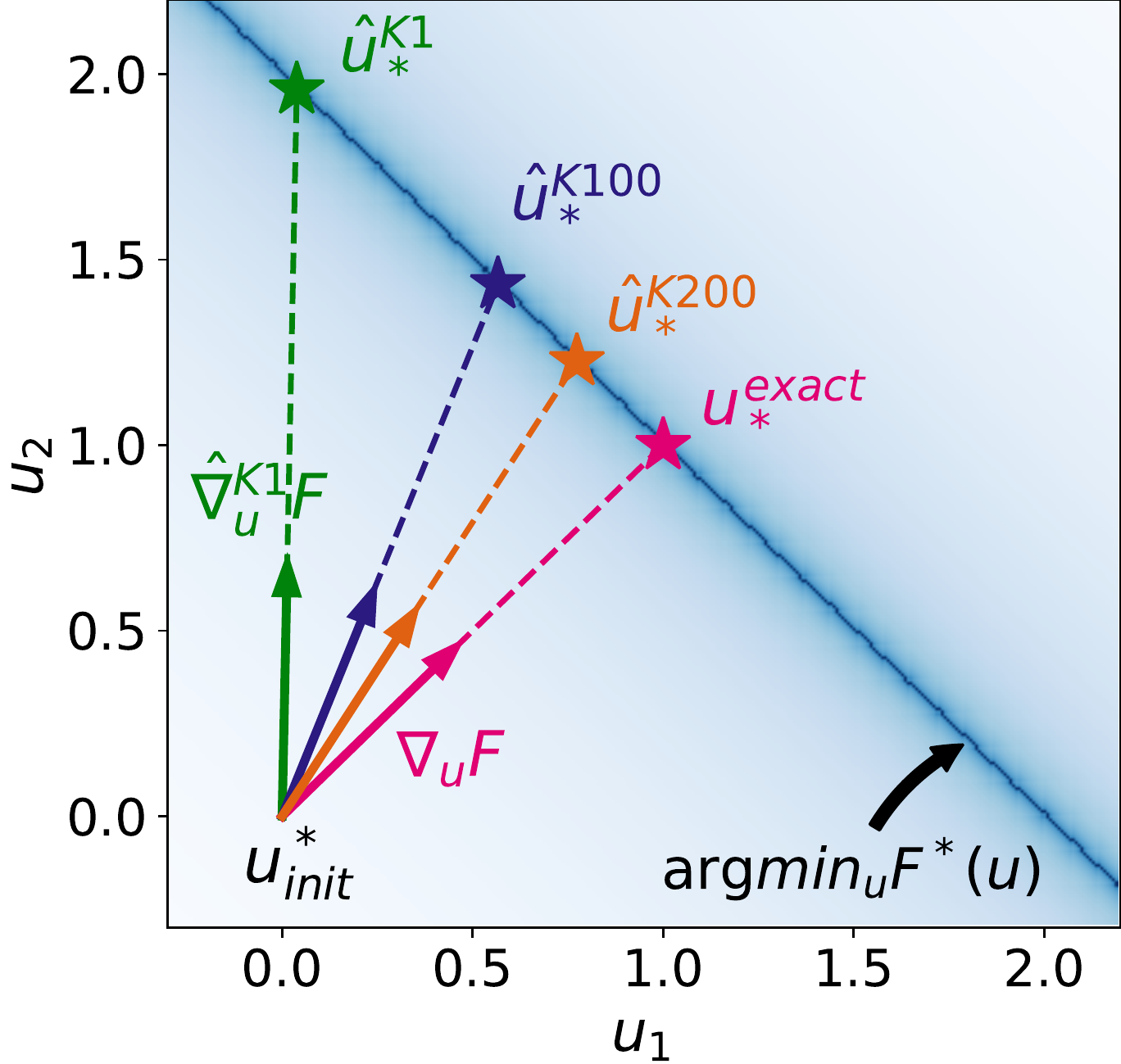}
    \caption{Approx. hypergrads}
    \label{fig:outer-hypergrad-approx}
    \end{subfigure}
    \vspace{-0.2cm}
    \caption{\footnotesize
    \textbf{Antidistillation task for linear regression with an overparameterized outer objective.}
    We learn the $y$-component of 13 synthetic datapoints such that a regressor trained on those points will fit a single original dataset point, shown by the {\color{OliveGreen}green dot at $(0,2)$}.
    Fig.~\textbf{(a)} shows the learned datapoints (outer parameters) obtained via different truncated Neumann approximations to the hypergradient.
    Fig.~\textbf{(b)} shows the norms of the outer parameters $\|\outParams\|^2$ as a function of $K$ (for Neumann/unrolling) or $\epsilon$ (for the damped Hessian inverse).
    We observe that better hypergradient approximations (e.g., larger $K$ or smaller $\epsilon$) lead to smaller norm outer parameters, because they account for both high- and low-curvature directions of the inner objective.
    Fig.~\textbf{(c)} visualizes outer optimization trajectories in the outer parameter space, to provide intuition for the behavior in \textbf{(a)} and \textbf{(b)}.
    We consider antidistillation with 1 original datapoint and 2 learned datapoints.
    We show the true hypergradient $\nabla_{\outParams} F$, approximations using truncated Neumann series $\hat{\nabla}_{\outParams} F$, and the converged outer parameters for each setting, e.g., $\hat{\outParams}^{K1}_\star$.
    }
    \label{fig:overparam-regression}
    \vspace{-0.3cm}
\end{figure*}

\paragraph{Results.}
When we estimate the response Jacobian with a small number of steps of unrolling, the inner optimizer will do its best to fit the data using the low-frequency basis terms, which correspond to high-curvature directions of the inner loss.
In Figure~\ref{fig:antidist-neumann}, we show the solutions obtained for different truncations of the Neumann series; also note that, because this problem is quadratic, the $K$-step unrolled hypergradient coincides with the $K$-step Neumann series hypergradient.
To demonstrate the relationship $\alpha \sum_{j=0}^K (\boldI - \alpha \boldH)^j \approx (\boldH + \frac{1}{\alpha K} \boldI)^{-1}$ empirically, we also show the distilled datasets obtained with the damped inverse Hessian hypergradient approximation, for various damping factors $\epsilon \in \{ \text{1e4, 1e5, 1e6, 1e7} \}$ (where $\epsilon = \frac{1}{\alpha K}$), in Appendix~\ref{app:extended-anti-distillation} Figure~\ref{fig:overparam-epsilon}, where we observe similar behavior to the Neumann series.
In Figure~\ref{fig:antidist-norms} we plot the norms of the converged inner and outer parameters for a range of Neumann truncation lengths $K$ and damping factors $\epsilon$.
We see that although they are not equivalent, the Neumann and damped Hessian approximate hypergradients behave similarly.
In Appendix~\ref{app:exp-details}, Figure~\ref{fig:overparam-mlp}, we show similar behavior when using an MLP rather than a linear model.
Also note that this effect occurs in general for quadratic problems that are outer-overparameterized; it does not depend on inner overparameterization (see Appendix~\ref{app:extended-anti-distillation}).

Figure~\ref{fig:outer-hypergrad-approx} visualizes optimization trajectories in the outer parameter space to provide additional intuition for the behavior of the outer parameter norms in Figure~\ref{fig:antidist-norms}.
We consider antidistillation with 1 original and 2 learned datapoints.
We use $\hat{\nabla}_{\outParams}^K F(\outParams, \inParams)$ to denote the approximate hypergradient obtained via implicit differentiation with the $K$-term truncated Neumann series.
We show the true hypergradient $\nabla_{\outParams} F$ (which we define in terms of the pseudoinverse), approximations using truncated Neumann series $\hat{\nabla}_{\outParams} F$, and the converged outer parameters for each setting, e.g., $\hat{\outParams}^{K1}_\star$.
We observe that: 1) when the outer problem is overparameterized, approximate hypergradients converge to valid solutions in $\argmin_{\outParams} F^\star(\outParams)$; and 2) the exact hypergradient converges to the min-norm outer solution.

\vspace{-0.1cm}
\section{Related Work}
\label{sec:related-work}

\paragraph{Overparameterization.}
Overparameterization has long been studied in single-level optimization, generating key insights such as neural network behavior in the infinite-width limit~\citep{jacot2018neural,sohl2020infinite,lee2019wide}, double descent phenomena~\citep{nakkiran2019deep,belkin2018reconciling}, the ability to fit random labels~\citep{zhang2016understanding}, and the inductive biases of optimizers.
However, prior work has not investigated implicit bias in bilevel optimization.
While we focus quadratic problems, note that in single-level optimization, analyses based on the Neural Tangent Kernel (NTK)~\citep{sohl2020infinite,lee2019wide} are often extendable to nonlinear networks.
Implicit bias has a long history in machine learning: many works have observed and studied the connection between early stopping and $L_2$ regularization~\citep{strand1974theory,morgan1989generalization,friedman2003gradient,yao2007early}.
Interest in implicit bias has increased over the past few years~\citep{nacson2019stochastic,soudry2018implicit,suggala2018connecting,poggio2019theoretical,ji2019implicit,ali2019continuous,ali2020implicit}.

\paragraph{Prior Work using Warm Start.}
Many papers perform joint optimization of the inner and outer parameters (e.g., data augmentations together with a base model), such as~\citep{hataya2020faster,hataya2020meta,ho2019population,mounsaveng2021learning,peng2018jointly,tang2020onlineaugment}.
~\citep{ghadimi2018approximation},~\citep{hong2020two}, and ~\citep{ji2020bilevel} propose bilevel algorithms that use warm-starting; however, they focus on analyzing the convergence rates of their algorithms and do not consider inner underspecification.

\paragraph{Gap Between Theory \& Practice.}
Existing BLO theory typically assumes unique solutions to the inner (and sometimes outer) problem, and focuses on showing that approximation methods get (provably) close to the solution.
~\citet{shaban2019truncated} provide conditions where optimization with an approximate hypergradient $\hat{\boldh}$ from truncated unrolling converges to a BLO solution, but they assume uniqueness of the inner solution.
~\citet{grazzi2020iteration,grazzi2020convergence} study iteration complexity and convergence of hypergradient approximations in the strongly-convex inner problem setting.
Several other works focus on convergence rate analyses~\citep{ji2021bilevel,yang2021provably,ji2021lower}.

\paragraph{Hyperparameter Optimization (HO).}
There are three main approaches for gradient-based HO: 1) differentiating through unrolls of the inner problem, sometimes called \textit{iterative differentiation}~\citep{domke2012generic,maclaurin2015gradient,shaban2019truncated}; 2) using \textit{implicit differentiation} to compute the response Jacobian assuming that the inner optimization has converged~\citep{larsen1996design,bengio2000gradient,foo2008efficient,pedregosa2016hyperparameter}; and 3) using a \textit{hypernetwork}~\cite{ha2016hypernetworks} to approximate the best-response function locally, $\hat{\inParams}_{\phi}(\boldlam) \approx \inParams^\star(\boldlam)$, such that the outer gradient can be computed using the chain rule through the hypernetwork, $\frac{\partial \LV}{\partial \boldlam} = \frac{\partial \LV}{\partial \hat{\inParams}\phi(\boldlam)} \frac{\partial \hat{\inParams}\phi(\boldlam)}{\partial \boldlam}$.
Hypernetworks have been applied to HO in~\citep{lorraine2018stochastic,mackay2019self,bae2020delta}.
~\citet{mackay2019self} and ~\citet{bae2020delta} have observed that STNs learn online hyperparameter schedules (e.g., for dropout rates and augmentations) that can outperform any fixed hyperparameter value.
We believe that warm-start effects may partially explain the observed improvements from hyperparameter schedules.

\paragraph{Truncation Bias.}
We restrict our focus to bilevel problems in which the outer parameters \textit{affect the fixed points} of the inner problem---this includes dataset distillation and hyperparameter optimization for most regularizers, but not the \textit{optimization hyperparameters} such as the learning rate and momentum.
Truncation bias has been shown to lead to critical failures when used for tuning such hyperparameters~\citep{wu2018understanding,metz2019understanding}.
In contrast, greedy adaptation of regularization hyperparameters has been successful empirically, via population-based training~\citep{jaderberg2017population}, hypernetwork-based HO~\citep{lorraine2018stochastic,mackay2019self,bae2020delta}, and online implicit differentiation~\citep{lorraine2020optimizing,hataya2020meta}.

\paragraph{Hysteresis.}
~\citet{luketina2016scalable} are among the first we are aware of, who considered the effect of hysteresis (e.g., path-dependence) on the model obtained via alternating optimization of the model parameters and hyperparameters.
The HO algorithm they introduce, dubbed T1-T2, uses the IFT with the identity matrix as an approximation to the inverse Hessian (e.g., equivalent to using $K=0$ terms of the Neumann series).
Interestingly, in contrast to more recent HO papers (particularly ones that focus on data augmentation), ~\citet{luketina2016scalable} found that re-training the model from scratch performed better than the online model.
One potential explanation for this is the choice of hyperparameters: they adapted L2 coefficients and Gaussian noise added to inputs and activations, rather than augmentations.

\section{Conclusion}
\label{sec:conclusion}
Most work on bilevel optimization has made the simplifying assumption that the solutions to the inner and outer problems are unique.
However, this does not hold in many practical applications, such as dataset distillation for overparameterized neural networks.
We investigated overparameterized bilevel optimization, where either the inner or outer problems may admit non-unique solutions.
We formalized warm- and cold-start equilibria, which correspond to common BLO algorithms.
We analyzed the properties of these equilibria, and algorithmic choices such as the number of Neumann series terms used to approximate the hypergradient.
We presented several tasks illustrating how these choices can significantly affect the solutions obtained in practice.
More generally, we highlighted the importance of, and laid groundwork for, analyzing the effects of overparameterization in nested optimization problems.

\section*{Acknowledgements}
We thank James Lucas, Guodong Zhang, and David Acuna for helpful feedback on the manuscript.
Resources used in this research were provided, in part, by the Province of Ontario, the Government of Canada through CIFAR, and companies sponsoring the Vector Institute (\url{www.vectorinstitute.ai/partners}).

{\small
\bibliographystyle{icml2022}
\bibliography{paper}
}

\clearpage

\appendix

\onecolumn

\section*{Appendix}

This appendix is structured as follows:
\begin{itemize}
    \item In Section~\ref{app:notation}, we provide an overview of the notation we use.
    \item In Section~\ref{app:derivations}, we provide derivations of formulas used in the main text.
    \item In Section~\ref{app:proofs}, we provide proofs of the theorems in the main text.
    \item In Section~\ref{app:proximal-br}, we derive the response Jacobian for a proximal inner objective, recovering a form of the IFT using the damped Hessian inverse.
    \item In Section~\ref{app:unrolling-and-neumann}, we provide an overview of the result from ~\cite{lorraine2020optimizing} that shows that differentiating through $K$ steps of unrolling starting from optimal inner parameters $\inParams^\star$ is equivalent to approximating the inverse Hessian with the first $K$ terms of the Neumann series.
    \item In Section~\ref{app:exp-details}, we provide experimental details and extended results, as well as a \href{https://docs.google.com/document/d/1whS9xmU_gfwsqtpUchf3ltlXSxcA_8iPoMwAfNqXNMs/edit?usp=sharing}{link to an animation} of the bilevel training dynamics for the dataset distillation task from Section~\ref{sec:dataset-distillation}.
\end{itemize}

\clearpage

\section{Notation}
\label{app:notation}
\begin{table}[H]
\begin{center}
    \begin{tabular}{c c}
        \toprule
        $F$ & Outer objective \\[2pt]
        $f$ & Inner objective \\[2pt]
        $\outParams$ & Outer variables \\[2pt]
        $\inParams$ & Inner variables \\[2pt]
        $\mathcal{U}$ & Outer parameter space \\[2pt]
        $\mathcal{W}$ & Inner parameter space \\[2pt]
        $\rightrightarrows$ & Multi-valued mapping between two sets \\[2pt]
        $\cS(\outParams)$ & Set-valued response mapping for $\outParams$: $\cS(\outParams) = \argmin_{\inParams} f(\outParams, \inParams)$ \\[2pt]
        $\boldPhi$ & Design matrix, where each row corresponds to an example \\[2pt]
        $||\cdot||_2^2$ & (Squared) Euclidean norm \\[2pt]
        $||\cdot||_F^2$ & (Squared) Frobenius norm \\[2pt]
        $\boldA^+$ & Moore-Penrose pseudoinverse of $\boldA$ \\[2pt]
        $\outParams^\star_k$ & A fixpoint of the outer problem obtained via the $k$-step unrolled hypergradient \\[2pt]
        $\hat{\nabla}_{\outParams}^{K10} F$ & Hypergradient approximation using $K=10$ terms of the Neumann series \\[2pt]
        $\alpha$ & Learning rate for inner optimization or step size for the Neumann series \\[2pt]
        $\beta$ & Learning rate for outer optimization \\[2pt]
        $k$ & Number of unrolling iterations / Neumann steps \\[2pt]
        $F^\star(\outParams)$ & $F(\outParams, \inParams^\star_{\outParams})$ where $\inParams^\star_{\outParams} \in \cS(\outParams)$ \\[2pt]
        $\inParams^\star$ & An inner solution, $\inParams^\star \in \cS(\outParams) = \argmin_{\inParams} f(\outParams, \inParams)$ \\[2pt]
        $\inParams_0$ & The inner parameter initialization \\[2pt]
        $\mathcal{N}(\boldA)$ & Nullspace of $\boldA$ \\[2pt]
        PSD & Positive semi-definite \\[2pt]
        Neumann series & If $(\boldI - \boldA)$ is contractive, then $\boldA^{-1} = \sum_{j=0}^\infty (\boldI - \boldA)^j$ \\[2pt]
        {\begin{tabular}{c}
            Implicit Differentiation\\
            Hypergradient
        \end{tabular}} & $\frac{d F}{d \outParams} = \pp{\frac{\partial \inParams^\star}{\partial \outParams}}^\top \pp{\frac{\partial F(\outParams, \inParams^\star)}{\partial \inParams}} = - \pp{\frac{\partial^2 f(\inParams^\star, \outParams)}{\partial \inParams \partial \inParams^\top}}^{-1} \pp{\frac{\partial^2 f(\inParams^\star, \outParams)}{\partial \outParams \partial \inParams}} \pp{\frac{\partial F(\outParams, \inParams^\star)}{\partial \inParams}}$ \\[8pt]
        $\textnormal{HypergradApprox}(\outParams, \inParams)$ & Compute hypergradient approximation at given inner/outer parameters \\[2pt]
        Optimistic BLO & $\outParams^\star \in \argmin_{\outParams} F(\outParams, \inParams^\star_{\outParams}) \quad \text{s.t.} \quad \inParams^\star_{\outParams} \in \argmin_{\inParams \in \cS(\outParams)} F(\outParams, \inParams)$ \\[2pt]
        Pessimistic BLO & $\outParams^\star \in \argmin_{\outParams} F(\outParams, \inParams^\star_{\outParams}) \quad \text{s.t.} \quad \inParams^\star_{\outParams} \in \argmax_{\inParams \in \cS(\outParams)} F(\outParams, \inParams)$ \\[2pt]
        BLO & Bilevel optimization \\[2pt]
        HO & Hyperparameter optimization \\[2pt]
        NAS & Neural architecture search \\[2pt]
        DD & Dataset distillation \\[2pt]
        BR & Best-response\\[2pt]
        IFT & Implicit Function Theorem\\[2pt]
        \bottomrule
    \end{tabular}
\end{center}
\caption{Summary of the notation and abbreviations used in this paper.}
\label{tab:TableOfNotation}
\end{table}

\section{Derivations}
\label{app:derivations}

\subsection{Minimum-Norm Response Jacobian}
\label{app:min-norm-jacobian}
Suppose we have the following inner problem:
$$
f(\inParams, \outParams) = \frac{1}{2} \norm{\boldPhi \inParams - \outParams}^2_2
$$
The gradient, Hessian, and second-order mixed partial derivatives  of $f$ are:
$$\frac{\partial f(\inParams, \outParams)}{\partial \inParams} = \boldPhi^\top \boldPhi \inParams - \boldPhi^\top \outParams
\qquad
\frac{\partial^2 f(\inParams, \outParams)}{\partial \inParams \partial \inParams^\top} = \boldH = \boldPhi^\top \boldPhi
\qquad
\frac{\partial^2 f(\inParams, \outParams)}{\partial \outParams \partial \inParams} = \boldPhi^\top
$$
The minimum-norm response Jacobian is:
\begin{align}
    \frac{\partial \inParams^\star(\outParams)}{\partial \outParams} = \argmin_{\boldH \boldM = \boldPhi} || \boldM ||_F^2
\end{align}
We need a solution to the linear system $\boldH \boldM = \boldPhi$.
Assuming this system is satisfiable, the matrix $\boldQ = \boldH^+ \boldPhi$ is a solution that satisfies $||\boldQ||_F^2 \leq ||\boldM||_F^2$ for any matrix $\boldM$:
\begin{align}
    \boldQ = \boldH^+ \boldPhi = (\boldPhi^\top \boldPhi)^+ \boldPhi = (\boldPhi^\top \boldPhi)^{-1} \boldPhi = \boldPhi^+
\end{align}
Thus, the minimum-norm response Jacobian is the Moore-Penrose pseudoinverse of the feature matrix, $\boldPhi^+$.

\subsection{Iterated Projection.}
\label{app:iterated-projection}

Alternating projections is a well-known algorithm for computing a point in the intersection of convex sets.
Dijkstra's projection algorithm is a modified version which finds a specific point in the intersection of the convex sets, namely the point obtained by projecting the initial iterate onto the intersection.
Iterated algorithms for projection onto the intersection of a set of convex sets are studied in~\citep{stovsic2016projection}.
~\citet{angelos1998limit} show that successive projections onto hyperplanes can converge to limit cycles.
~\citet{mishachev2019realization} study algorithms based on sequential projection onto hyperplanes defined by equations of a linear system (like Kaczmarz), and show that with specifically-chosen systems of equations, the limit polygon of such an algorithm can be any predefined polygon.

\paragraph{Closed-Form Projection Onto the Solution Set}
\label{app:projection}

Here, we provide a derivation of the formula we use to compute the analytic projections onto the solution sets given by different hyperparameters in Figure~\ref{fig:projection}.
This derivation is known in the literature on the Kaczmarz algorithm~\citep{karczmarz1937angenaherte}; we provide it here for clarity.

Consider homogeneous coordinates for the data $\mathbf{x} = [x, 1]$ such that we can write the weights as $\mathbf{w} = [w_1, w_2]$ and the model as $\mathbf{w}^\top \mathbf{x} = y$.
Given an initialization $\mathbf{w}_0$, we would like to find the point $\mathbf{w}^*$ such that:
\begin{equation}
\mathbf{w}^* = \arg \min_{\{ \mathbf{w} \mid \mathbf{w}^\top \mathbf{x} = y \}} \frac{1}{2} || \mathbf{w} - \mathbf{w}_0 ||^2    
\end{equation}

To find the solution to this problem, we write the Lagrangian:
\begin{equation}
L(\mathbf{w}, \lambda) = \frac{1}{2} || \mathbf{w} - \mathbf{w}_0 ||^2 + \lambda (\mathbf{w}^\top \mathbf{x} - y) 
\end{equation}

The solution to this problem is the stationary point of the Lagrangian. Taking the gradient and equating its components to 0 gives:
\begin{align}
\nabla_{\mathbf{w}} L(\mathbf{w}, \lambda) &= \mathbf{w} - \mathbf{w}_0 + \lambda \mathbf{x} = 0 \\
\nabla_{\lambda} L(\mathbf{w}, \lambda) &= \mathbf{w}^\top \mathbf{x} - y = 0
\end{align}
From the first equation, we have:
\begin{equation}
\mathbf{w} = \mathbf{w}_0 - \lambda \mathbf{x}    
\end{equation}

Plugging this into the second equation gives:
\begin{align}
\mathbf{x}^\top \mathbf{w} - y &= 0 \\
\mathbf{x}^\top (\mathbf{w}_0 - \lambda \mathbf{x}) - y &= 0 \\
\mathbf{x}^\top \mathbf{w}_0 - \lambda \mathbf{x}^\top \mathbf{x} - y &= 0 \\
\lambda \mathbf{x}^\top \mathbf{x} &= \mathbf{x}^\top \mathbf{w}_0 - y \\
\lambda &= \frac{\mathbf{x}^\top \mathbf{w}_0 - y}{\mathbf{x}^\top \mathbf{x}}
\end{align}

Finally, plugging this expression for $\lambda$ back into the equation $\mathbf{w} = \mathbf{w}_0 - \lambda \mathbf{x}$, we have:
\begin{equation}
\mathbf{w} = \mathbf{w}_0 - \frac{\mathbf{x}^\top \mathbf{w}_0 - y}{\mathbf{x}^\top \mathbf{x}} \mathbf{x}    
\end{equation}

\section{Proofs}
\label{app:proofs}

First, we prove a well-known result on the implicit bias of gradient descent used to optimize convex quadratic functions (Lemma~\ref{lemma:gd}), which we use in this section.

\begin{tcolorbox}[colback=tabblue!15,boxrule=0pt,colframe=white,coltext=black,arc=2pt,outer arc=0pt,valign=center]
\begin{lemma}[Gradient Descent on a Quadratic Function Finds a Min-Norm Solution]
\label{lemma:gd}
Suppose we have a lower-bounded, convex quadratic function:
\begin{equation}
    g(\inParams) = \frac{1}{2} \inParams^\top \boldA \inParams + \boldb^\top \inParams + c \,,
\end{equation}
where $\boldA \in \mathbb{R}^{|\inSpace| \times |\inSpace|}$ is symmetric positive semidefinite, $\boldb \in \mathbb{R}^{|\inSpace|}$, and $c \in \mathbb{R}$.
If we start from initialization $\inParams_0$, then gradient descent with an appropriate learning rate will converge to a solution $\inParams^\star \in \argmin_{\inParams} g(\inParams)$ which has minimum $L_2$ distance from $\inParams_0$:
\begin{equation}
    \inParams^\star \in \argmin_{\inParams \in \argmin_{\inParams} g(\inParams)} \frac{1}{2} \norm{\inParams - \inParams_0}^2_2 \,.
\end{equation}
\end{lemma}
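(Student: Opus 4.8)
The plan is to track the gradient descent iterates in the eigenbasis of $\boldA$, separating the component that lives in $\mathrm{range}(\boldA)$ (which moves and converges) from the component in the nullspace $\mathcal{N}(\boldA)$ (which is frozen at its initial value). First I would use the lower-boundedness hypothesis: if $\boldb$ had a nonzero component in $\mathcal{N}(\boldA)$, then $g$ would tend to $-\infty$ along that direction, so we must have $\boldb \in \mathrm{range}(\boldA)$, say $\boldb = \boldA \boldv$. Consequently $\nabla g(\inParams) = \boldA \inParams + \boldb = \boldA(\inParams + \boldv) \in \mathrm{range}(\boldA)$ for every $\inParams$. Since each GD step $\inParams_{t+1} = \inParams_t - \alpha \nabla g(\inParams_t)$ modifies the iterate only by a vector in $\mathrm{range}(\boldA)$, induction gives $\inParams_t - \inParams_0 \in \mathrm{range}(\boldA)$ for all $t$; and because $\boldA$ is symmetric, $\mathrm{range}(\boldA) = \mathcal{N}(\boldA)^\perp$, so the orthogonal projection of $\inParams_t$ onto $\mathcal{N}(\boldA)$ equals that of $\inParams_0$ for all $t$.

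Next I would establish convergence. Diagonalizing $\boldA = \boldQ \boldsymbol{\Lambda} \boldQ^\top$ and writing $\boldz_t = \boldQ^\top \inParams_t$, the recursion decouples into scalar updates $z_{t+1,i} = (1 - \alpha \lambda_i) z_{t,i} - \alpha (\boldQ^\top \boldb)_i$. For eigenvalues $\lambda_i > 0$ and step size $0 < \alpha < 2/\lambda_{\max}(\boldA)$ (the ``appropriate learning rate''), each such update is a contraction converging to the fixed point $-(\boldQ^\top \boldb)_i / \lambda_i$; for $\lambda_i = 0$ we have $(\boldQ^\top \boldb)_i = 0$ since $\boldb \in \mathrm{range}(\boldA)$, so that coordinate stays constant. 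Hence $\inParams_t \to \inParams_\infty$ with $\boldA \inParams_\infty + \boldb = \boldzero$, i.e. $\inParams_\infty \in \argmin_{\inParams} g(\inParams)$.

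Finally I would identify the limit with the claimed solution. The minimizer set is the affine subspace $\mathcal{M} = \inParams_\infty + \mathcal{N}(\boldA)$, and the (unique) point of $\mathcal{M}$ closest to $\inParams_0$ is characterized by having its displacement from $\inParams_0$ orthogonal to $\mathcal{N}(\boldA)$. Since the first paragraph showed $\inParams_\infty - \inParams_0 \in \mathrm{range}(\boldA) = \mathcal{N}(\boldA)^\perp$, the limit $\inParams_\infty$ is exactly that projection, which gives $\inParams_\infty \in \argmin_{\inParams \in \argmin_{\inParams} g(\inParams)} \tfrac12 \norm{\inParams - \inParams_0}_2^2$. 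The main obstacle I anticipate is not any single hard step but the careful bookkeeping of the range/nullspace splitting — in particular making sure the lower-boundedness assumption is invoked exactly where it is needed (to place $\boldb$ in $\mathrm{range}(\boldA)$ and to rule out divergent nullspace directions) and pinning down what ``appropriate learning rate'' must mean for the contraction argument.
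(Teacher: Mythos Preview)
Your proposal is correct and follows essentially the same approach as the paper: both arguments split $\mathbb{R}^{|\inSpace|}$ into $\mathrm{range}(\boldA)$ and $\mathcal{N}(\boldA)$, use lower-boundedness to place $\boldb$ in the range, show the nullspace component of the iterate is frozen while the range component contracts for $\alpha < 2/\lambda_{\max}(\boldA)$, and then identify the limit as the orthogonal projection of $\inParams_0$ onto the affine minimizer set. The only cosmetic difference is that you diagonalize $\boldA$ and argue coordinate-wise, whereas the paper carries out the same computation in pseudoinverse notation, deriving the closed-form iterate $\inParams_{k+1} = -\boldA^+\boldb + (\boldI - \alpha\boldA)^k(\inParams_0 + \boldA^+\boldb)$ and then splitting $\inParams_0$ via $\boldI = (\boldI - \boldA^+\boldA) + \boldA^+\boldA$.
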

\end{tcolorbox}
\begin{proof}
Note that $\nabla_{\inParams} g(\inParams) = \boldA \inParams + \boldb$.
By the lower-bounded assumption, the minimum of $g$ is reached when $\nabla_{\inParams} g(\inParams) = 0$; one solution, which is the minimum-norm solution with respect to the origin, is $\inParams = - \boldA^+ \boldb$, where $\boldA^+$ denotes the Moore-Penrose pseudoinverse of $\boldA$.
The minimum-cost subspace is defined by:
\begin{equation}
\argmin_{\inParams} g(\inParams) = \{ \inParams^\star + \inParams' \mid \inParams' \in \mathcal{N}(\boldA) \} \,,
\end{equation}
where $\inParams^\star$ is any specific minimizer of $g$ and $\mathcal{N}(\boldA)$ denotes the nullspace of $\boldA$.
The closed-form solution for the minimizer of $g$ which minimizes the $L_2$ distance from some initialization $\inParams_0$ is:
\begin{equation}
\inParams^\star = - \boldA^+ \boldb + (\boldI - \boldA^+ \boldA) \inParams_0\,.
\end{equation}
Note that $(\boldI - \boldA^+ \boldA) \inParams_0$ is in the nullspace of $\boldA$, since $\boldA (\boldI - \boldA^+ \boldA) \inParams_0 = \boldA \inParams_0 - \boldA \boldA^+ \boldA \inParams_0 = \boldA \inParams_0 - \boldA \inParams_0 = \boldzero$.
Next, we derive a closed-form expression for the result of $k$ steps of gradient descent with a fixed learning rate $\alpha$.
We write the recurrence:
\begin{align}
    \inParams_{k+1} &= \inParams_k - \alpha \nabla_{\inParams} g(\inParams_k) \\
            &= \inParams_k - \alpha (\boldA \inParams_k + \boldb)
\end{align}
We can subtract the optimum from both sides, yielding:
\begin{align}
    \inParams_{k+1} + \boldA^+ \boldb &= \inParams_k - \alpha (\boldA \inParams_k + \boldb) + \boldA^+ \boldb \\
                      &= \inParams_k - \alpha \boldA \inParams_k - \alpha \boldb + \boldA^+ \boldb \\
                      &= \inParams_k - \alpha \boldA \inParams_k - \alpha \boldA \boldA^+ \boldb + \boldA^+ \boldb \\
                      &= (\boldI - \alpha \boldA) \inParams_k + \boldA^+ \boldb - \alpha \boldA \boldA^+ \boldb \\
                      &= (\boldI - \alpha \boldA) \inParams_k + (\boldI - \alpha \boldA)\boldA^+ \boldb \\
                      &= (\boldI - \alpha \boldA) (\inParams_k + \boldA^+ \boldb)
\end{align}
Thus, to obtain $\inParams_{k+1} + \boldA^+ \boldb$, we simply multiply $\inParams_k + \boldA^+ \boldb$ by $(\boldI - \alpha \boldA)$.
This allows us to write $\inParams_{k+1}$ as a function of the initialization $\inParams_0$:
\begin{align}
    \inParams_{k+1} + \boldA^+ \boldb &= (\boldI - \alpha \boldA)^k (\inParams_0 + \boldA^+ \boldb) \\
    \inParams_{k+1} &= - \boldA^+ \boldb + (\boldI - \alpha \boldA)^k (\inParams_0 + \boldA^+ \boldb)\\
    & =  - \boldA^+ \boldb + (\boldI - \alpha \boldA)^k ((\boldI - \boldA^+ \boldA)\inParams_0 + \boldA^+ \boldA \inParams_0 + \boldA^+ \boldb)\\
    & = - \boldA^+ \boldb + (\boldI - \alpha \boldA)^k ((\boldI - \boldA^+ \boldA)\inParams_0)  + (\boldI - \alpha \boldA)^k (\boldA^+ \boldA \inParams_0 + \boldA^+ \boldb)\\
    &= - \boldA^+ \boldb + (\boldI - \boldA^+ \boldA)\inParams_0 \,, \label{eq:min-norm-gd}
\end{align}
where the last equation follows from:
\begin{align}
    (\boldI - \alpha \boldA) (\boldI - \boldA^+ \boldA)\inParams_0 = (\boldI - \alpha \boldA -  \boldA^+\boldA + \alpha\boldA\boldA^+\boldA)\inParams_0 =  (\boldI - \boldA^+\boldA) \inParams_0   
\end{align}
and thus $(\boldI - \alpha \boldA)^k (\boldI - \boldA^+ \boldA)\inParams_0 = (\boldI - \boldA^+\boldA) \inParams_0$,
and the term $((\boldI - \boldA^+ \boldA)\inParams_0)  + (\boldI - \alpha \boldA)^k (\boldA^+ \boldA \inParams_0 + \boldA^+ \boldb$ goes to 0 as $k \to \infty$ because $\boldA^+ \boldA \inParams_0$ and $\boldA^+ \boldb$ are in the span of $\boldA$.
Gradient descent will converge for learning rates $\alpha < 2 \lambda^{-1}_{\text{max}}$, where $\lambda_{\text{max}}$ is the maximum eigenvalue of $\boldA$.
Thus, from Eq.~\ref{eq:min-norm-gd} we see that gradient descent on the quadratic converges to the solution which minimizes the $L_2$ distance to the initialization $\inParams_0$.
\end{proof}

We also prove the following Lemma~\ref{lemma:psd}, which we use in Theorem~\ref{thm:cold-start-equilibrium}.

\begin{tcolorbox}[colback=tabblue!15,boxrule=0pt,colframe=white,coltext=black,arc=2pt,outer arc=0pt,valign=center]
\begin{lemma}[]
\label{lemma:psd}
  Suppose $f$ and $F$ satisfy Assumption~\ref{assumption:quadratic}.
  Then, the function $F^\star(\outParams) \triangleq F(\outParams, \inParams^\star)$, where $\inParams^\star$ is the minimum-displacement inner solution from $\inParams_0 = \boldzero$, is a convex quadratic in $\outParams$ with a positive semi-definite curvature matrix.
\end{lemma}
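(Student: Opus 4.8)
The plan is to make the minimum-norm inner solution $\inParams^\star_{\outParams}$ explicit as an affine function of $\outParams$, substitute it into the outer objective, and then read off that $F^\star$ is a quadratic whose curvature matrix is manifestly positive semi-definite.

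First I would compute the inner stationarity condition. From the block form in Assumption~\ref{assumption:quadratic}, $\nabla_{\inParams} f(\outParams,\inParams) = \boldA\inParams + \boldB\outParams + \boldd$, so $\cS(\outParams) = \{\inParams : \boldA\inParams = -(\boldB\outParams+\boldd)\}$. Since $f$ is a jointly lower-bounded convex quadratic, for every $\outParams$ the restriction $f(\outParams,\cdot)$ is a lower-bounded convex quadratic in $\inParams$; hence $-(\boldB\outParams+\boldd)\in\operatorname{range}(\boldA)$ and $\cS(\outParams)$ is a nonempty affine subspace. Applying Lemma~\ref{lemma:gd} with initialization $\inParams_0=\boldzero$ (equivalently, using the standard minimum-norm formula for a consistent symmetric linear system), the minimum-displacement inner solution from $\boldzero$ is $\inParams^\star_{\outParams} = -\boldA^{+}(\boldB\outParams+\boldd) = -\boldA^{+}\boldB\,\outParams - \boldA^{+}\boldd$, which is affine in $\outParams$. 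I would write its linear part as $\boldJ \triangleq -\boldA^{+}\boldB$ (this is precisely the canonical response Jacobian of Section~\ref{sec:background}) and its constant part as $\inParams_c \triangleq -\boldA^{+}\boldd$, so $\inParams^\star_{\outParams} = \boldJ\outParams + \inParams_c$.

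Next I would substitute into $F$. Under Assumption~\ref{assumption:quadratic} the outer objective depends on $\outParams$ only through $\inParams$, namely $F(\outParams,\inParams)=\tfrac12\inParams^\top\boldP\inParams + \boldf^\top\inParams + h$, so $F^\star(\outParams) = \tfrac12(\boldJ\outParams+\inParams_c)^\top\boldP(\boldJ\outParams+\inParams_c) + \boldf^\top(\boldJ\outParams+\inParams_c) + h$. Expanding, the quadratic-in-$\outParams$ term is $\tfrac12\outParams^\top(\boldJ^\top\boldP\boldJ)\outParams$, the linear term collects $\boldJ^\top\boldP\inParams_c + \boldJ^\top\boldf$, and the remainder is constant; hence $F^\star$ is a quadratic in $\outParams$ with curvature matrix $\boldJ^\top\boldP\boldJ$. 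To finish, $\boldJ^\top\boldP\boldJ\succeq 0$ since for any $\boldv$ we have $\boldv^\top\boldJ^\top\boldP\boldJ\boldv = (\boldJ\boldv)^\top\boldP(\boldJ\boldv)\ge 0$ by $\boldP\succeq 0$; a quadratic with PSD curvature is convex, giving the claim.

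The one step needing care is the well-definedness of $\inParams^\star_{\outParams}$ as a genuine affine function of $\outParams$: one has to check that $\boldA\inParams = -(\boldB\outParams+\boldd)$ is consistent for every $\outParams$ --- which forces $\operatorname{range}(\boldB)\subseteq\operatorname{range}(\boldA)$ and $\boldd\in\operatorname{range}(\boldA)$, and is exactly what the joint convexity plus lower-boundedness in Assumption~\ref{assumption:quadratic} provides --- and that the minimum-norm selection $\boldv\mapsto\boldA^{+}\boldv$ is linear (it is, being multiplication by the fixed matrix $\boldA^{+}$). Everything downstream is routine quadratic algebra, so I anticipate no real obstacle beyond this bookkeeping.
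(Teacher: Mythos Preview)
Your proposal is correct and follows essentially the same approach as the paper: compute the min-norm inner solution $\inParams^\star = -\boldA^+(\boldB\outParams+\boldd)$, substitute into $F$, and read off the curvature matrix $\boldJ^\top\boldP\boldJ = \boldB^\top\boldA^+\boldP\boldA^+\boldB$ as PSD by congruence. Your argument is slightly more streamlined than the paper's (which proves PSD-ness in two stages, first for $\boldA^+\boldP\boldA^+$ and then for the full product), and you are more careful than the paper about the consistency of $\boldA\inParams=-(\boldB\outParams+\boldd)$, but the substance is identical.
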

\end{tcolorbox}
\begin{proof}
    The min-norm solution to the inner optimization problem for a given outer parameter $\outParams$, $\argmin_{\inParams} f(\outParams, \inParams)$, can be expressed as:
  \begin{equation}
  \inParams^\star = -\boldA^+ (\boldB \outParams + \boldd) \,,
  \end{equation}
  where $\boldA^+$ is the Moore-Penrose pseudoinverse of $\boldA$.
  Plugging this inner solution into the outer objective $F$, we have:
  \begin{align}
      F(\outParams, \inParams^\star)
      &=
      \frac{1}{2} {\inParams^\star}^\top \boldP \inParams^\star + \boldf^\top \inParams^\star + h \\
      &=
      \frac{1}{2} \pp{-\boldA^+ \boldB \outParams - \boldA^+ \boldd}^\top \boldP \pp{- \boldA^+ \boldB \outParams - \boldA^+ \boldd} + \boldf^\top (- \boldA^+ \boldB \outParams - \boldA^+ \boldd) + h \\
      &=
      \frac{1}{2} \outParams^\top \underbrace{\boldB^\top \boldA^+ \boldP \boldA^+ \boldB}_{\text{PSD}} \outParams + \outParams^\top (\boldB^\top \boldA^+ \boldP \boldA^+ \boldd - \boldB^\top \boldA^+ \boldf) \\
      & \quad + \pp{\frac{1}{2} \boldd^\top \boldA^+ \boldP \boldA^+ \boldd - \boldf^\top \boldA^+ \boldd + h}\,.
  \end{align}
  The final equation is a quadratic form in $\outParams$; we wish to show that the curvature matrix, $\boldB^\top \boldA^+ \boldP \boldA^+ \boldB$, is positive semi-definite.
  Note that $\boldA^+ \boldP \boldA^+$ is PSD because $\boldP$ is PSD by assumption, and thus for any vector $\boldv$ we have:
  \begin{equation}
      \boldv^\top (\boldA^+ \boldP \boldA^+) \boldv 
      =
      \boldv^\top (\boldA^+)^\top \boldP \boldA^+ \boldv
      =
      \pp{\boldA^+ \boldv}^\top \boldP \pp{\boldA^+ \boldv} \geq 0 \,.
  \end{equation}
  Next, because $\boldA^+ \boldP \boldA^+$ is a PSD matrix, it can be expressed in the form $\boldM^\top \boldM$ for some PSD matrix $\boldM$ (e.g., the matrix square root of $\boldA^+ \boldP \boldA^+$).
  Then, for any vector $\outParams$, we have:
  \begin{equation}
      \outParams^\top \boldB^\top \boldA^+ \boldP \boldA^+ \boldB \outParams
      =
      \outParams^\top \boldB^\top \boldM^\top \boldM \boldB \outParams
      =
      \pp{\boldM \boldB \outParams}^\top \pp{\boldM \boldB \outParams}
      =
      \norm{\boldM \boldB \outParams}^2_2
      \geq
      0
  \end{equation}
  Thus, $\boldB^\top \boldA^+ \boldP \boldA^+ \boldB$ is PSD.
\end{proof}

\subsection{Proof of Statement~\ref{thm:cold-start-equilibrium}}
\label{app:cold-start-equilibrium-proof}

\begin{tcolorbox}[colback=tabblue!15,boxrule=0pt,colframe=white,coltext=black,arc=2pt,outer arc=0pt,valign=center]
\begin{statement}[{{\color{cyan}Cold-start BLO converges to a cold-start equilibrium.}}]
    Suppose $f$ and $F$ satisfy Assumption~\ref{assumption:quadratic}, and assume that the inner parameters are initialized at $\inParams_0$.
    Then, given appropriate learning rates for the inner and outer optimizations, the cold-start algorithm (Algorithm~\ref{alg:cold-start}) using exact hypergradients converges to a \textit{cold-start equilibrium}.
\end{statement}
\end{tcolorbox}
\begin{proof}
    By assumption, the inner objective $f$ is a convex quadratic in $\inParams$ for each fixed  $\outParams$, with positive semi-definite curvature matrix $\boldA$.
    By Lemma~\ref{lemma:gd}, with an appropriately-chosen learning rate, the iterates of gradient descent in the inner loop of Algorithm~\ref{alg:cold-start} converge to the solution with minimum $L_2$ norm from the inner initialization $\inParams_0$:
    $\inParams_{k+1}^* = \argmin_{\inParams \in \cS(\outParams_k)} \frac{1}{2} \norm{\inParams - \inParams_0}^2$.
    Because $\inParams_{k+1}^* \in \argmin_{\inParams} f(\inParams, \outParams_k)$ and assuming that we compute the exact hypergradient, each outer step performs gradient descent on the objective $F^\star(\outParams) \equiv F(\outParams, \inParams^\star)$.
    By Lemma~\ref{lemma:psd}, the outer objective $F^\star(\outParams)$ is quadratic in $\outParams$ 
    with a PSD curvature matrix, so that with an appropriate outer learning rate, the outer loop of Algorithm~\ref{alg:cold-start} will converge to a solution of $F^\star(\outParams)$.
    Thus, we will have a final iterate $\outParams^\star \in \argmin_{\outParams} F(\outParams, \inParams^\star)$ for which the corresponding inner solution is $\inParams^\star \in \argmin_{\inParams \in \cS(\outParams^\star)} \frac{1}{2} \norm{\inParams^\star - \inParams_0}^2$, yielding a pair of outer and inner parameters $(\outParams^\star, \inParams^\star)$ that are a cold-start equilibrium.
\end{proof}

\subsection{Proof of Theorem~\ref{thm:min-norm-outer}}
\label{app:proof-min-norm-outer}

\begin{tcolorbox}[colback=tabblue!15,boxrule=0pt,colframe=white,coltext=black,arc=2pt,outer arc=0pt,valign=center]
\begin{theorem}[Cold-Start Outer Parameter Norm.]
Suppose $f$ and $F$ satisfy Assumption~\ref{assumption:quadratic}, and suppose we run cold-start BLO (Algorithm~\ref{alg:cold-start}) using the exact hypergradient, starting from outer parameter initialization $\outParams_0$.
Assume that for each outer iteration, the \textit{inner parameters} are re-initialized to $\inParams_0 = \boldzero$ and optimized with an appropriate learning rate to convergence.
Then cold-start BLO---with an appropriate learning rate for the outer optimization---converges to an outer solution $\outParams^\star$ with minimum $L_2$ distance from $\outParams_0$:
\begin{equation}
\outParams^\star = \argmin_{\outParams \in \argmin_{\outParams} F^\star(\outParams)} \frac{1}{2} \norm{\outParams - \outParams_0}^2
\end{equation}
where we define $F^\star(\outParams) \triangleq F(\outParams, \inParams^\star_{\outParams})$, where $\inParams^\star_{\outParams}$ is the minimum-displacement inner solution from $\inParams_0$, that is, $\inParams^\star_{\outParams} = \argmin_{\inParams \in \cS(\outParams)} \| \inParams - \inParams_0 \|^2$.
\end{theorem}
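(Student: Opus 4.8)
The plan is to reduce the outer optimization to ordinary gradient descent on the reduced objective $F^\star$ and then invoke the two preparatory lemmas. First I would handle the inner loop exactly as in the proof of Statement~\ref{thm:cold-start-equilibrium}: since the inner parameters are re-initialized to $\inParams_0 = \boldzero$ and the inner gradient descent is run to convergence with an appropriate step size, Lemma~\ref{lemma:gd} (applied with $\boldA$ as the curvature matrix) shows the inner loop returns the minimum-displacement solution $\inParams^\star_{\outParams} = \argmin_{\inParams \in \cS(\outParams)} \tfrac12\|\inParams - \boldzero\|^2 = -\boldA^+(\boldB\outParams + \boldd)$, which is an affine function of $\outParams$. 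Because this $\inParams^\star_{\outParams}$ is a genuine minimizer of $f(\outParams,\cdot)$, the exact hypergradient evaluated there equals the total derivative $\nabla F^\star(\outParams)$, where $F^\star(\outParams) \triangleq F(\outParams,\inParams^\star_{\outParams})$; this is precisely the identification used in the proof of Statement~\ref{thm:cold-start-equilibrium}. Consequently the outer iterates satisfy $\outParams_{k+1} = \outParams_k - \beta \nabla F^\star(\outParams_k)$, i.e.\ they are gradient descent on $F^\star$ started from $\outParams_0$.

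Next I would apply Lemma~\ref{lemma:psd}, which tells us that $F^\star$ is a convex quadratic in $\outParams$ with positive semi-definite curvature matrix $\boldB^\top \boldA^+ \boldP \boldA^+ \boldB$; moreover $F^\star$ is lower bounded because $F$ itself is lower bounded by Assumption~\ref{assumption:quadratic} and $F^\star$ is just a restriction of $F$. Thus $g = F^\star$ satisfies the hypotheses of Lemma~\ref{lemma:gd} with initialization $\outParams_0$, and the ``appropriate outer learning rate'' hypothesis is exactly the condition $\beta < 2\lambda_{\max}(\boldB^\top \boldA^+ \boldP \boldA^+ \boldB)^{-1}$ under which that lemma guarantees convergence. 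Lemma~\ref{lemma:gd} then yields that the outer iterates converge to $\outParams^\star = \argmin_{\outParams \in \argmin_{\outParams} F^\star(\outParams)} \tfrac12\|\outParams - \outParams_0\|^2$, which is the claim.

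The only real obstacle is a bookkeeping point in the first paragraph: one must confirm that the ``exact'' hypergradient --- defined through the canonical, minimum-Frobenius-norm response Jacobian $\boldH^+\boldM$ --- genuinely coincides with $\nabla F^\star$ at the min-norm inner solution, rather than with the total derivative along some other admissible Jacobian. This holds because differentiating the explicit affine map $\outParams \mapsto -\boldA^+(\boldB\outParams + \boldd)$ gives the constant Jacobian $-\boldA^+\boldB$, whose columns lie in the row space of $\boldA = \boldH$ and hence are orthogonal to $\mathcal{N}(\boldA)$; this makes $-\boldA^+\boldB$ the minimum-norm solution of the linear system $\boldH\boldJ = \boldM$, so the pseudoinverse indeed selects the Jacobian of the min-displacement inner-solution map. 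With this identification in hand, the rest of the argument is a direct composition of Lemma~\ref{lemma:gd} (inner), Lemma~\ref{lemma:psd}, and Lemma~\ref{lemma:gd} (outer), requiring no further computation.
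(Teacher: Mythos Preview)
Your proposal is correct and follows essentially the same approach as the paper: compute the closed-form min-norm inner solution, invoke Lemma~\ref{lemma:psd} to see that $F^\star$ is a convex PSD quadratic in $\outParams$, and then apply Lemma~\ref{lemma:gd} to the outer gradient-descent iterates to obtain the min-displacement conclusion. Your extra paragraph verifying that the canonical (pseudoinverse) response Jacobian coincides with the Jacobian of the min-norm inner-solution map is a point the paper's proof leaves implicit, so your write-up is slightly more careful there.
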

\end{tcolorbox}
\begin{proof}
  Because the inner parameters are initialized at $\inParams_0 = \boldzero$, the solution to the inner optimization problem found by gradient descent for a given outer parameter $\outParams$, $\argmin_{\inParams} f(\outParams, \inParams)$, can be expressed in closed-form as:
  \begin{equation}
    \inParams^\star = -\boldA^+ (\boldB \outParams + \boldd)\,,
  \end{equation}
  where $\boldA^+$ denotes the Moore-Penrose pseudoinverse of $\boldA$.
  Plugging this min-norm inner solution into the outer objective $F$, we have:
  \begin{align}
      F^\star(\outParams) \equiv F(\outParams, \inParams^\star)
      &=
      \frac{1}{2} {\inParams^\star}^\top \boldP \inParams^\star + \boldf^\top \inParams^\star + h \\
      &=
      \frac{1}{2} \outParams^\top \underbrace{\boldB^\top \boldA^+ \boldP \boldA^+ \boldB}_{\text{PSD}} \outParams + \outParams^\top (\boldB^\top \boldA^+ \boldP \boldA^+ \boldd - \boldB^\top \boldA^+ \boldf)
      + \pp{\frac{1}{2} \boldd^\top \boldA^+ \boldP \boldA^+ \boldd - \boldf^\top \boldA^+ \boldd + h}
  \end{align}
  Then $F^\star(\outParams)$ is quadratic in $\outParams$, and by Lemma~\ref{lemma:psd}, the curvature matrix $\boldB^\top \boldA^+ \boldP \boldA^+ \boldB$ is positive semi-definite.
  Let $\boldZ \equiv  \boldB^\top \boldA^+ \boldP \boldA^+ \boldB$.
  Similarly to the analysis in Lemma~\ref{lemma:gd}, the iterates $\outParams_k$ of gradient descent with learning rate $\alpha$ are given by:
  \begin{align} \label{eq:gd-iterates}
      \outParams_k = \outParams^\star + (\boldI - \alpha \boldZ)^k (\outParams_0 - \outParams^\star)\,,
  \end{align}
  where
  \begin{align}
      \outParams^\star = \argmin_{\outParams \in \argmin_{\outParams} F^\star(\outParams)} \frac{1}{2} \norm{\outParams - \outParams_0}^2\,.
  \end{align}
  From Eq.~\eqref{eq:gd-iterates}, we see that the iterates of gradient descent converge exponentially to $\outParams^\star$, which is the outer solution with minimum $L_2$ distance from the outer initialization $\outParams_0$.
\end{proof}

\subsection{Proof of Remark~\ref{remark:cold-full-warm}}
\label{app:cold-full-warm-equiv-proof}

\begin{tcolorbox}[colback=tabblue!15,boxrule=0pt,colframe=white,coltext=black,arc=2pt,outer arc=0pt,valign=center]
\begin{remark}[Equivalence of Full Warm-Start and Cold-Start in the Strongly Convex Regime]
    When the inner problem $f(\outParams, \inParams)$ is strongly convex in $\inParams$ for each $\outParams$, then the solution to the inner problem is unique. In this case, {\color{red} full warm-start} (Algorithm~\ref{alg:warm-start} with $T \to \infty$) and {\color{cyan} cold-start} (Algorithm~\ref{alg:cold-start}), using exact hypergradients, are equivalent.
\end{remark}
\end{tcolorbox}
\begin{proof}
  If $f(\outParams, \inParams)$ is strongly convex in $\inParams$ for each $\outParams$, then it has a unique global minimum for each $\outParams$. Thus, given an appropriate learning rate for the inner optimization, repeated application of the update $\Xi$ will converge to this unique solution for any inner parameter initialization. That is, $\Xi^{(\infty)}(\outParams, \inParams_{\text{init}}) = \argmin_{\inParams} f(\outParams, \inParams)$ for any initialization $\inParams_{\text{init}} \in \inSpace$. In particular, the fixpoint will be identical for cold-start and full warm-start, $\Xi^{(\infty)}(\outParams, \inParams_0) = \Xi^{(\infty)}(\outParams, \inParams_k)$ for any $\inParams_0$ and $\inParams_k$. Therefore, the inner solutions are identical, and yield identical hypergradients, so the iterates of the full warm-start and cold-start algorithms are equivalent.
\end{proof}

\subsection{Proof of Statement~\ref{thm:inclusion}}
\label{app:inclusion-proof}

\begin{tcolorbox}[colback=tabblue!15,boxrule=0pt,colframe=white,coltext=black,arc=2pt,outer arc=0pt,valign=center]
\begin{statement}[Inclusion of Partial Warm-Start Equilibria]
Every partial warm-start equilibrium (with $T=1$) is a full warm-start equilibrium ($T \to \infty$).
In addition, if $\Xi(\outParams, \inParams) = \inParams - \alpha \nabla_{\inParams} f(\outParams, \inParams)$ with a fixed (non-decayed) step size $\alpha$, then the corresponding full-warm start equilibria are also partial warm-start equilibria.
\end{statement}
\end{tcolorbox}
\begin{proof}
    Let $(\outParams^\star, \inParams^\star)$ be an arbitrary partial warm-start equilibrium for $T=1$.
    By definition, $\outParams^\star \in \argmin_{\outParams} F(\outParams, \inParams^\star)$ and $\inParams^\star = \Xi^{(1)}(\outParams^\star, \inParams^\star)$.
    Thus, $\nabla_{\inParams} f(\outParams, \inParams) = 0$, which entails that $\inParams^\star = \Xi^{(\infty)}(\outParams^\star, \inParams^\star)$.
    Hence, $(\outParams^\star, \inParams^\star)$ is a full warm-start equilibrium.
    
    Next, let $(\outParams^\star, \inParams^\star)$ be an arbitrary full warm-start equilibrium.
    By definition, this means that $\inParams^\star = \Xi^{(\infty)}(\outParams^\star, \inParams^\star)$.
    By assumption, $\Xi(\outParams, \inParams) = \inParams - \alpha \nabla_{\inParams} f(\outParams, \inParams)$ with a fixed step size $\alpha$.
    The fixed step size combined with the $T \to \infty$ limit excludes the possibility that there is a finite-length cycle such that the inner optimization arrives back to the initial point $\inParams^\star$ after a finite number of gradient steps $T$.
    Thus, we must have $\Xi^{(1)}(\outParams, \inParams) = \inParams$, so $(\outParams^\star, \inParams^\star)$ is a partial warm-start equilibrium.
\end{proof}

\section{Proximal Best-Response}
\label{app:proximal-br}

Consider the proximal objective $\hInObj(\outParams, \inParams) = \inObj(\outParams, \inParams) + \frac{\epsilon}{2} || \inParams - \inParams' ||^2$.
Here, we will treat $\inParams'$ as a constant (e.g., we won't consider its dependence on $\outParams$).
Let $\inParams^\star(\outParams) \in \argmin_\inParams \hInObj(\outParams, \inParams)$ be a fixed point of $\hInObj$.
We want to compute the response Jacobian $\frac{\partial \inParams^\star(\outParams)}{\partial \outParams}$.
Since $\inParams^\star$ is a fixed point, we have:
\begin{align}
    \frac{\partial \hInObj(\outParams, \inParams^\star(\outParams))}{\partial \inParams} &= 0 \\
    \frac{\partial \inObj(\outParams, \inParams^\star(\outParams))}{\partial \inParams} + \epsilon (\inParams^\star(\outParams) - \inParams') &= 0 \\
    \frac{\partial}{\partial \outParams} \frac{\partial \inObj(\outParams, \inParams^\star(\outParams))}{\partial \inParams} + \epsilon \frac{\partial \inParams^\star(\outParams)}{\partial \outParams} &= 0 \\
    \frac{\partial^2 \inObj(\outParams, \inParams^\star(\outParams))}{\partial \outParams \partial \inParams} + \frac{\partial^2 \inObj}{\partial \inParams \partial \inParams^\top} \frac{\partial \inParams^\star(\outParams)}{\partial \outParams} + \epsilon \frac{\partial \inParams^\star(\outParams)}{\partial \outParams} &= 0 \\
    \left( \frac{\partial^2 \inObj}{\partial \inParams \partial \inParams^\top} + \epsilon \boldI \right) \frac{\partial \inParams^\star(\outParams)}{\partial \outParams} &= - \frac{\partial^2 \inObj(\outParams, \inParams^\star(\outParams))}{\partial \outParams \partial \inParams} \\
    \frac{\partial \inParams^\star(\outParams)}{\partial \outParams} &= - \left( \frac{\partial^2 \inObj}{\partial \inParams \partial \inParams^\top} + \epsilon \boldI \right)^{-1} \frac{\partial^2 \inObj}{\partial \outParams \partial \inParams}
\end{align}

\section{Equivalence Between Unrolling and Neumann Hypergradients}
\label{app:unrolling-and-neumann}
In this section, we review the result from ~\cite{lorraine2020optimizing}, which shows that when we are at a converged solution to the inner problem $\inParams^\star \in \cS(\outParams)$, then computing the hypergradient by differentiating through $k$ steps of unrolled gradient descent on the inner objective is equivalent to computing the hypergradient with the $k$-term truncated Neumann series approximation to the inverse Hessian.

In this derivation, the inner and outer objectives are arbitrary---we do not need to assume that they are quadratic.
The SGD recurrence for unrolling the inner optimization is:
\begin{align}
    \inParams_{i+1}
    &=
    \inParams_i(\outParams) - \alpha \frac{\partial f(\inParams_i(\outParams), \outParams)}{\partial \inParams_i(\outParams)}
\end{align}
Then,
\begin{align}
    \frac{\partial \inParams_{i+1}}{\partial \outParams}
    &= \frac{\partial \inParams_i(\outParams)}{\partial \outParams} - \alpha \frac{\partial}{\partial \inParams_i(\outParams)} \left( \frac{\partial f(\inParams_i(\outParams), \outParams)}{\partial \inParams_i(\outParams)} \frac{\partial \inParams_i(\outParams)}{\partial \outParams} + \frac{\partial f(\inParams_i(\outParams), \outParams)}{\partial \outParams} \right) \\
    &= \frac{\partial \inParams_i(\outParams)}{\partial \outParams} - \alpha \frac{\partial^2 f(\inParams_i(\outParams), \outParams)}{\partial \inParams_i(\outParams) \partial \inParams_i(\outParams)} \frac{\partial \inParams_i(\outParams)}{\partial \outParams} - \alpha \frac{\partial^2 f(\inParams_i(\outParams), \outParams)}{\partial \inParams_i(\outParams) \partial \outParams} \\
    &= - \alpha \frac{\partial^2 f(\inParams_i(\outParams), \outParams)}{\partial \inParams_i(\outParams) \partial \outParams} + \left( \boldI - \alpha \frac{\partial^2 f(\inParams_i(\outParams), \outParams)}{\partial \inParams_i(\outParams) \partial \inParams_i(\outParams)} \right) \frac{\partial \inParams_i(\outParams)}{\partial \outParams}
\end{align}

We can similarly expand out $\frac{\partial \inParams_i(\outParams)}{\partial \outParams}$ as:
\begin{equation}
    \frac{\partial \inParams_i(\outParams)}{\partial \outParams}
    = - \alpha \frac{\partial^2 f(\inParams_{i-1}(\outParams), \outParams)}{\partial \inParams_{i-1} \partial \outParams} + \left( \boldI - \alpha \frac{\partial^2 f(\inParams_{i-1}(\outParams), \outParams)}{\partial \inParams_{i-1}(\outParams) \partial \inParams_{i-1}(\outParams)} \right) \frac{\partial \inParams_{i-1}(\outParams)}{\partial \outParams}
\end{equation}
Plugging in this expression for $\frac{\partial \inParams_i(\outParams)}{\partial \outParams}$ into the expression for $\frac{\partial \inParams_{i+1}(\outParams)}{\partial \outParams}$, we have:
\begin{align}
    \frac{\partial \inParams_{i+1}(\outParams)}{\partial \outParams}
    &= - \alpha \frac{\partial^2 f(\inParams_i(\outParams), \outParams)}{\partial \inParams_i(\outParams) \partial \outParams} + \left( \boldI - \alpha \frac{\partial^2 f(\inParams_i(\outParams), \outParams)}{\partial \inParams_i(\outParams) \partial \inParams_i(\outParams)} \right) \\
    & \qquad \times \left[ - \alpha \frac{\partial^2 f(\inParams_{i-1}(\outParams), \outParams)}{\partial \inParams_{i-1}(\outParams) \partial \outParams} + \left( \boldI - \alpha \frac{\partial^2 f(\inParams_{i-1}(\outParams), \outParams)}{\partial \outParams_{i-1}(\outParams) \partial \inParams_{i-1}(\outParams)} \right) \frac{\partial \inParams_{i-1}(\outParams)}{\partial \outParams} \right]
\end{align}
Expanding, we have:
\begin{align}
    \frac{\partial \inParams_{i+1}(\outParams)}{\partial \outParams}
    &= - \alpha \frac{\partial^2 f(\inParams_i(\outParams), \outParams)}{\partial \inParams_i(\outParams) \partial \outParams} + \left( \boldI - \alpha \frac{\partial^2 f(\inParams_i(\outParams), \outParams)}{\partial \inParams_i(\outParams) \partial \inParams_i(\outParams)} \right) \left( - \alpha \frac{\partial^2 f(\inParams_{i-1}(\outParams), \outParams)}{\partial \inParams_{i-1}(\outParams) \partial \outParams} \right) \\
    & \qquad + \prod_{k < j} \left( \boldI - \alpha \frac{\partial^2 f(\inParams, \outParams)}{\partial \inParams \partial \inParams^\top} \bigg \rvert_{\outParams, \inParams_{i - k}(\outParams)} \right) \frac{\partial \inParams_{i-1}(\outParams)}{\partial \outParams}
\end{align}
Telescoping this sum, we have:
\begin{align}
    \frac{\partial \inParams_{i+1}(\outParams)}{\partial \outParams}
    = \sum_{j \leq i} \left[ \prod_{k < j} \left( \boldI - \alpha \frac{\partial^2 f(\inParams, \outParams)}{\partial \inParams \partial \inParams^\top} \bigg \rvert_{\outParams, \inParams_{i - k}(\outParams)} \right) \right] \left( - \alpha \frac{\partial^2 f(\outParams)}{\partial \inParams \partial \outParams} \bigg \rvert_{\outParams, \inParams_{i-j}(\outParams)} \right)
\end{align}

If we start unrolling from a stationary point of the proximal objective, then all the $\inParams_i$ will be equal, so this simplifies to:
\begin{align}
    \frac{\partial \inParams_{i+1}(\outParams)}{\partial \outParams} = \left[ \sum_{j \leq i} \left( \boldI - \alpha \frac{\partial^2 f(\inParams, \outParams)}{\partial \inParams \partial \inParams^\top} \right)^j \right] \left( - \alpha \frac{\partial^2 f(\inParams, \outParams)}{\partial \inParams \partial \outParams} \right)
\end{align}

This recovers the Neumann series approximation to the inverse Hessian.

\section{Experimental Details and Extended Results}
\label{app:exp-details}

\paragraph{Compute Environment.}
All experiments were implemented using JAX~\citep{jax2018github}, and were run on NVIDIA P100 GPUs.
Each instance of the dataset distillation and antidistillation task took approximately 5 minutes of compute on a single GPU.

\subsection{Details and Extended Results for Dataset Distillation.}
\label{app:extended-dataset-distillation}

\paragraph{Details.}
For our dataset distillation experiments, we trained a 4-layer MLP with 200 hidden units per layer and ReLU activations.
For warm-start joint optimization, we computed hypergradients by differentiating through $K=1$ steps of unrolling, and updated the outer parameters (learned datapoints) and MLP parameters using alternating gradient descent, with one step on each.
We used SGD with learning rate 0.001 for the inner optimization and Adam with learning rate 0.01 for the outer optimization.

\paragraph{Link to Animations.}
Here is a \href{https://docs.google.com/document/d/1whS9xmU_gfwsqtpUchf3ltlXSxcA_8iPoMwAfNqXNMs/edit?usp=sharing}{link to an document containing animations} of the bilevel training dynamics for the dataset distillation task.
We visualize the dynamics of warm-started bilevel optimization in the setting where the inner problem is overparameterized, by animating the trajectories of the learned datapoints over time, and showing how the decision boundary of the model changes over the course of joint optimization.

\paragraph{Extended Results.}
Here, we show additional dataset distillation results, using a similar setup to Section~\ref{sec:dataset-distillation}.
Figure~\ref{fig:three-class-distillation} shows the results where we fit a three-class problem (e.g., three concentric rings) using three learned datapoints.
Figure~\ref{fig:three-class-two-datapoints} shows the results for fitting three classes using \textit{only two datapoints}, where both the coordinates and \textit{soft labels} are learned.
For each training datapoint, in addition to learning its $(x, y)$ coordinates, we learn a $C$-dimensional vector (where $C$ is the number of classes, in this example $C=3$) representing the un-normalized class label: this vector passed through a softmax when we perform cross-entropy training of the inner model.
Joint adaptation of the model parameters and learned data is able to fit three classes by changing the learned class label for one of the datapoints during training.

\begin{figure}[htbp]
    \centering
    \begin{subfigure}{.32\textwidth}
      \includegraphics[width=\linewidth]{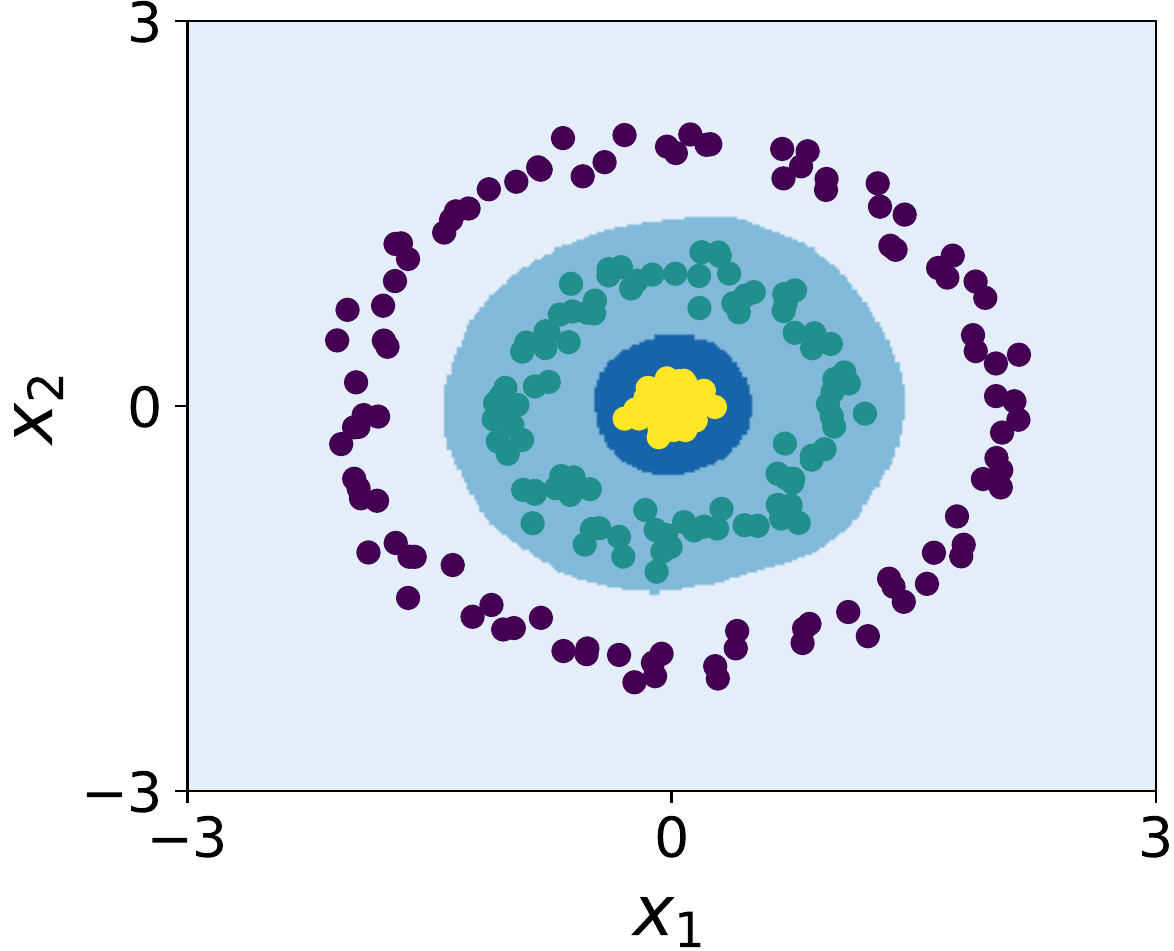}
      \caption{Training on original data.}
      \label{fig:three-class-orig}
    \end{subfigure}
    \hfill
    \begin{subfigure}{.32\textwidth}
      \includegraphics[width=\linewidth]{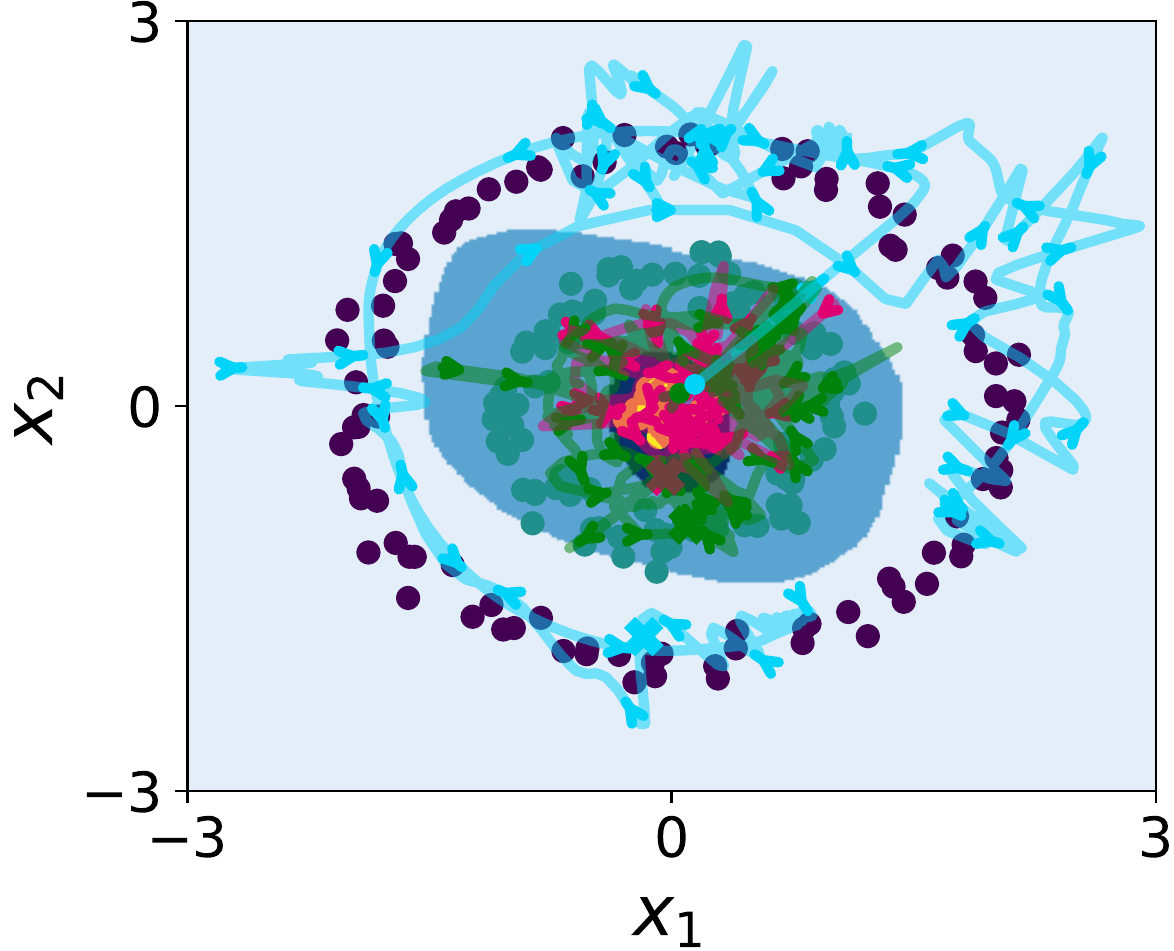}
      \caption{Warm-start joint optimization.}
      \label{fig:three-class-warm}
    \end{subfigure}
    \hfill
    \begin{subfigure}{.32\textwidth}
      \centering
      \includegraphics[width=\linewidth]{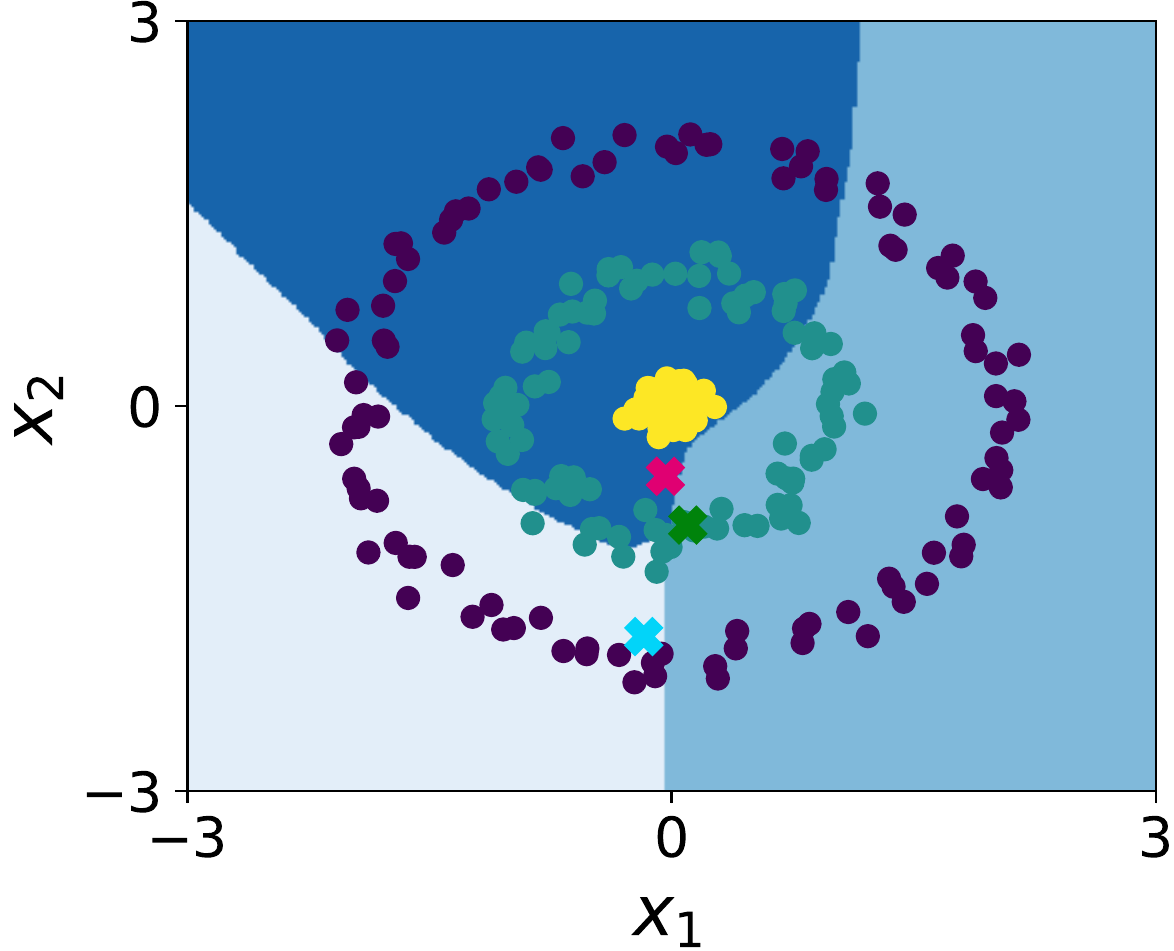}
      \caption{Re-training on final points.}
      \label{fig:three-class-retrain}
    \end{subfigure}
\caption{\small \textbf{Dataset distillation fitting three classes with three learned datapoints.}
Similarly to the results in Sec.~\ref{sec:dataset-distillation}, when using warm-start joint optimization, the three learned datapoints adapt during training to trace out the data in their respective classes, guiding the network to learn a decision boundary that performs well on the original data.
Cold-start re-training yields a model that correctly classifies the three learned datapoints, but has poor performance on the original data.}
\label{fig:three-class-distillation}
\end{figure}

\begin{figure}[htbp]
    \centering
    \begin{subfigure}{.32\textwidth}
      \includegraphics[width=\linewidth]{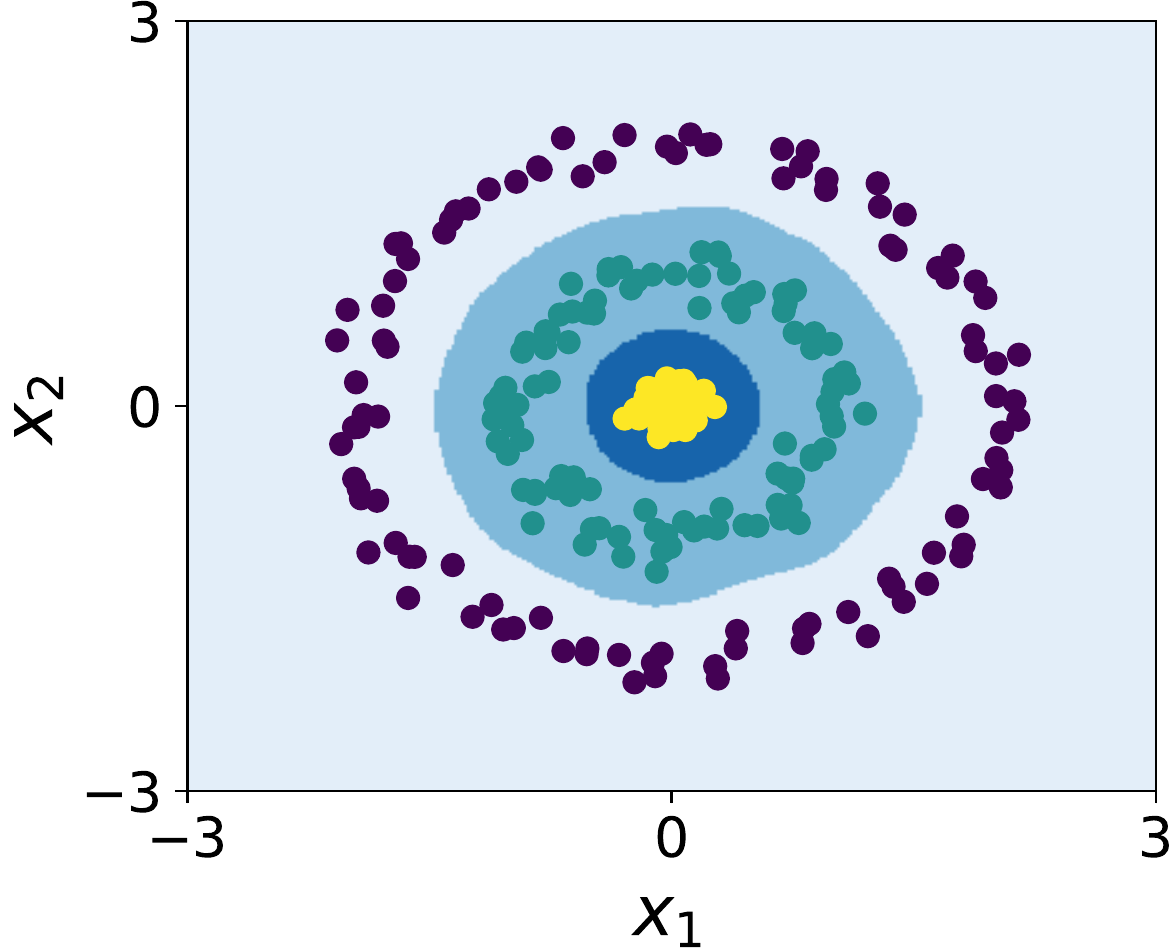}
      \caption{Training on original data.}
      \label{fig:three-class-2-synth-orig}
    \end{subfigure}
    \hfill
    \begin{subfigure}{.32\textwidth}
      \includegraphics[width=\linewidth]{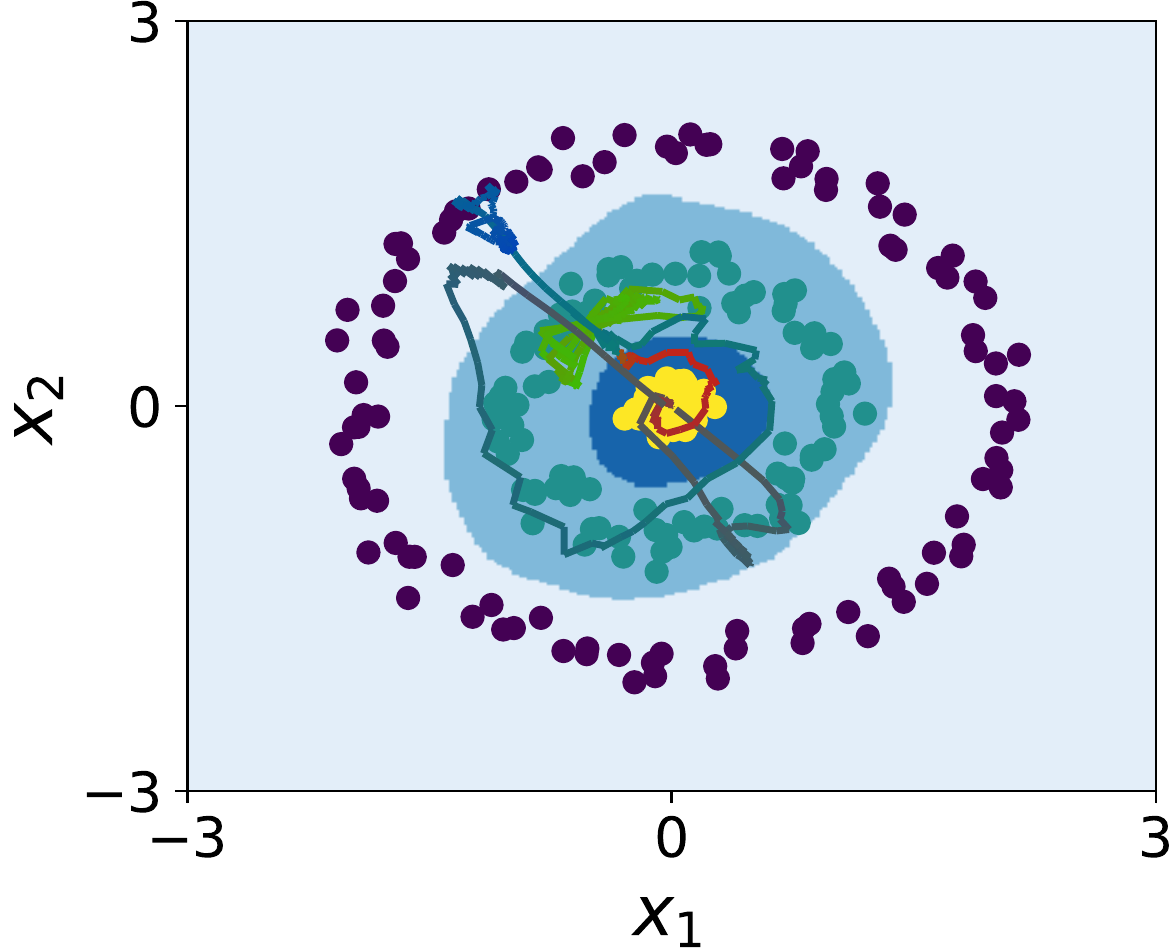}
      \caption{Warm-start joint optimization.}
      \label{fig:three-class-2-synth-warm}
    \end{subfigure}
    \hfill
    \begin{subfigure}{.32\textwidth}
      \centering
      \includegraphics[width=\linewidth]{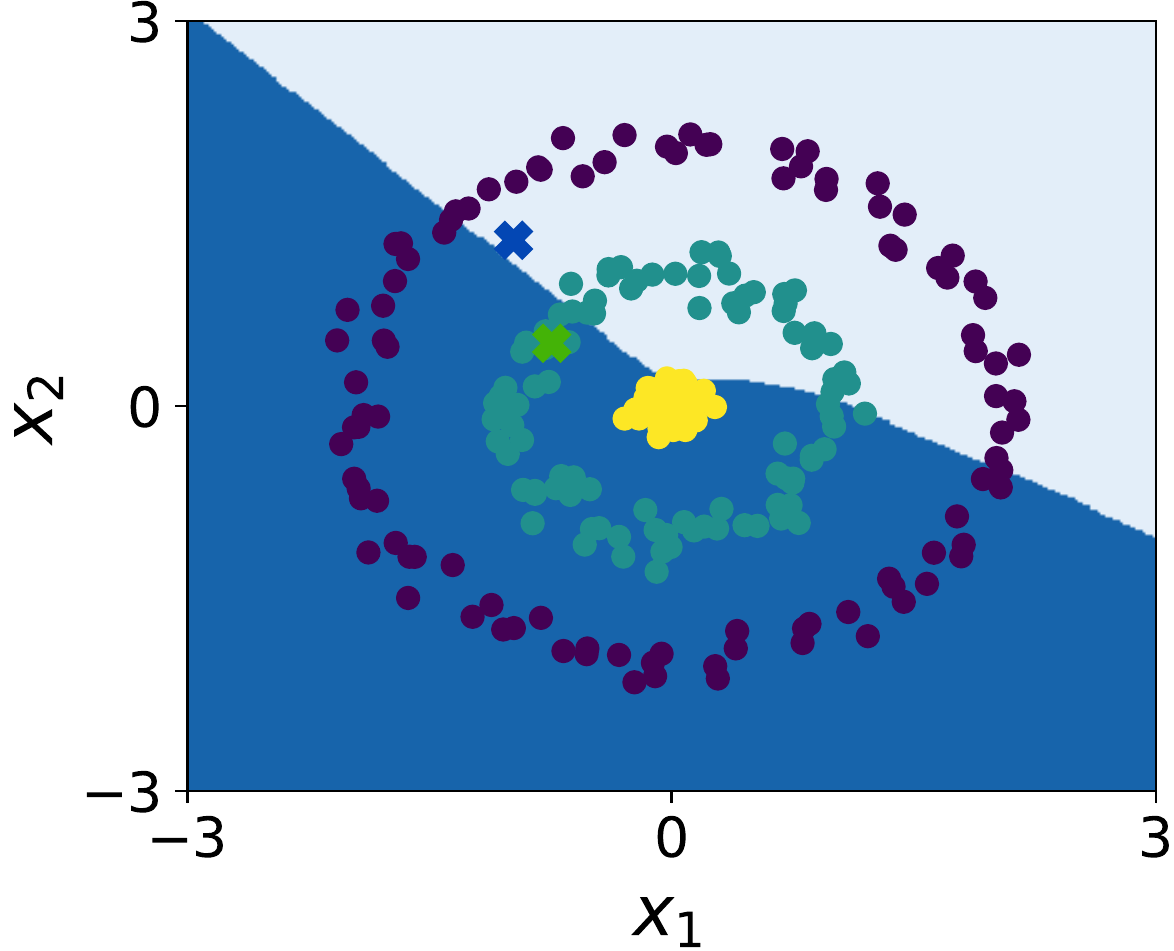}
      \caption{Re-training on final points.}
      \label{fig:three-class-2-synth-retrain}
    \end{subfigure}
\caption{\small \textbf{Dataset distillation results fitting three classes with \textit{only two} learned datapoints.}
In the warm-start plot, the color of the trajectory of each learned datapoint indicates its soft class membership, with magenta, green, and blue corresponding to the inner, middle, and outer rings, respectively.
Darker/gray colors indicate soft labels that place approximately equal probability on each class.
We see that although we only have two learned datapoints, the class labels change over the course of training such that all three classes are covered.
Cold-start re-training yields a model that correctly classifies the two learned datapoints, but has poor performance on the original data.}
\label{fig:three-class-two-datapoints}
\end{figure}

\subsection{Details and Extended Results for Anti-Distillation}
\label{app:extended-anti-distillation}

\begin{figure*}
    \centering
    \includegraphics[width=0.4\linewidth]{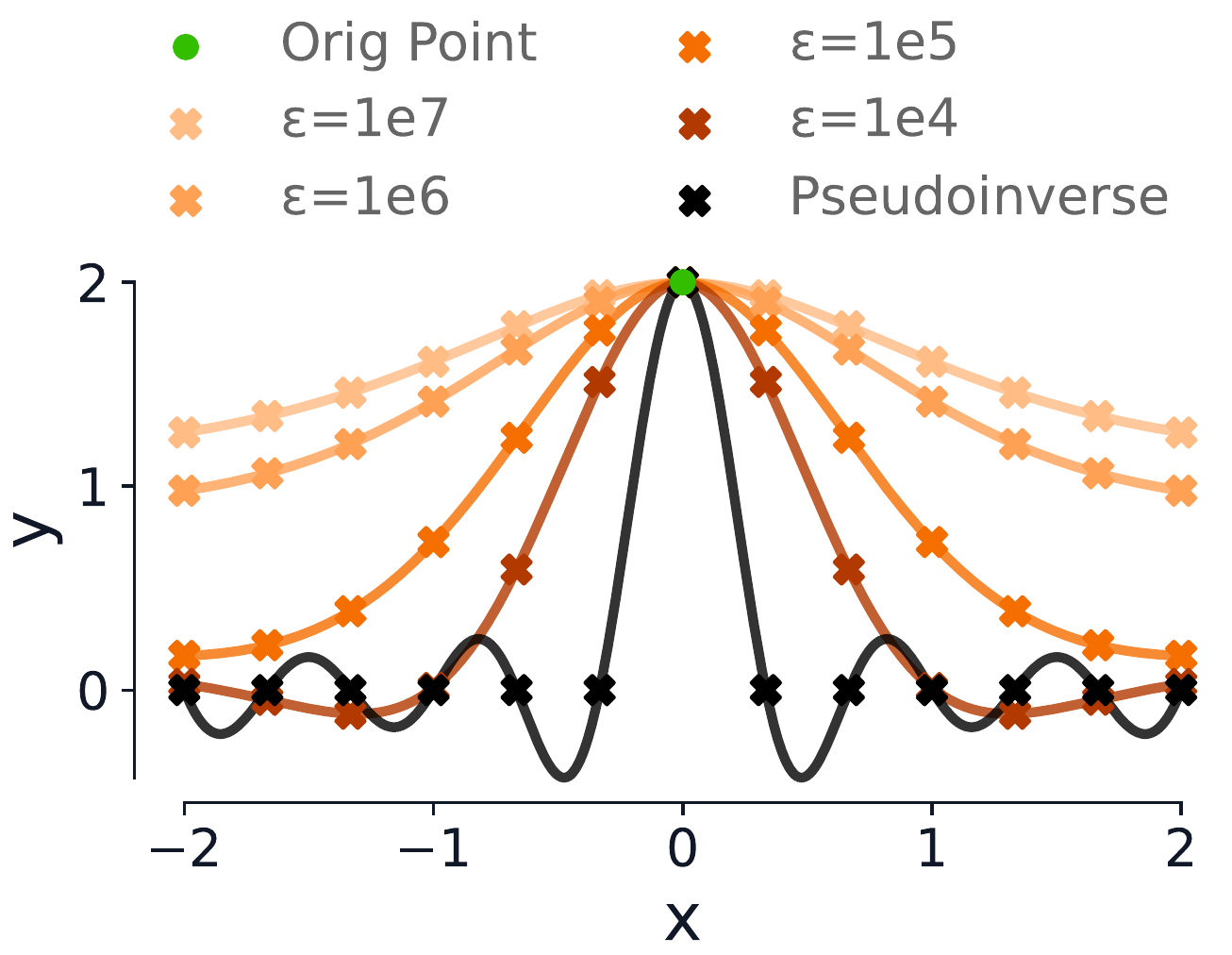}
    \vspace{-0.2cm}
    \caption{\small
    \textbf{Antidistillation task for linear regression with an overparameterized outer objective.} This plot uses the same setup as Figure~\ref{fig:overparam-regression}, but shows learned datapoints obtained using the damped Hessian inverse $(\boldH + \epsilon \boldI)^{-1}$ to compute the approximate hypergradient, for various damping factors $\epsilon$.
    }
    \label{fig:overparam-epsilon}
    \vspace{-0.1cm}
\end{figure*}

\paragraph{Details.}
For the anti-distillation results in Section~\ref{sec:overparam-outer}, we used a Fourier basis consisting of 10 $\sin$ terms, 10 $\cos$ terms, and a bias, yielding 21 total inner parameters.
For the exponential-amplitude Fourier basis used in Section~\ref{sec:overparam-outer}, we used SGD with learning rates 1e-7 and 1e-2 for the inner and outer parameters, respectively; for the $1/n$ amplitude Fourier basis (discussed below, and used for Figure~\ref{fig:fourier-over-k}), we used SGD with learning rates 1e-3 and 1e-2 for the inner and outer parameters, respectively.

\paragraph{Approximate Hypergradient Visualization.}
For the visualization in Figure~\ref{fig:outer-hypergrad-approx}, the experiment setup is as follows: we have a single original dataset point at $(x, y)$ coordinate $(1,1)$, and we learn the $y$-coordinates of two synthetic datapoints, which we initialize at $(x, y)$-coordinates $(0, 0)$ and $(2, 0)$, respectively.
The inner model trained on these two synthetic datapoints is a linear regressor with one weight and one bias, e.g., $y = wx + b$.
The outer problem is overparameterized, because there are many valid settings of the learned datapoints such that the inner model trained on those datapoints will fit the point $(1,1)$.
For example, three valid solutions for the learned datapoints are $\{ (0,0), (2,2) \}$, $\{ (0,1), (2,1) \}$, and $\{ (0,2), (2,0)\}$.
The set of such valid solutions is visualized in Figure~\ref{fig:outer-hypergrad-approx}, as well as the gradients using different hypergradient approximations, outer optimization trajectories, and converged outer solutions with each approximation.
We focused on Neumann series approximations with different $k$, and ran the Neumann series at the converged inner parameters in each case.

\begin{wrapfigure}[13]{r}{0.36\linewidth}
    \vspace{-0.6cm}
    \centering
    \includegraphics[width=\linewidth]{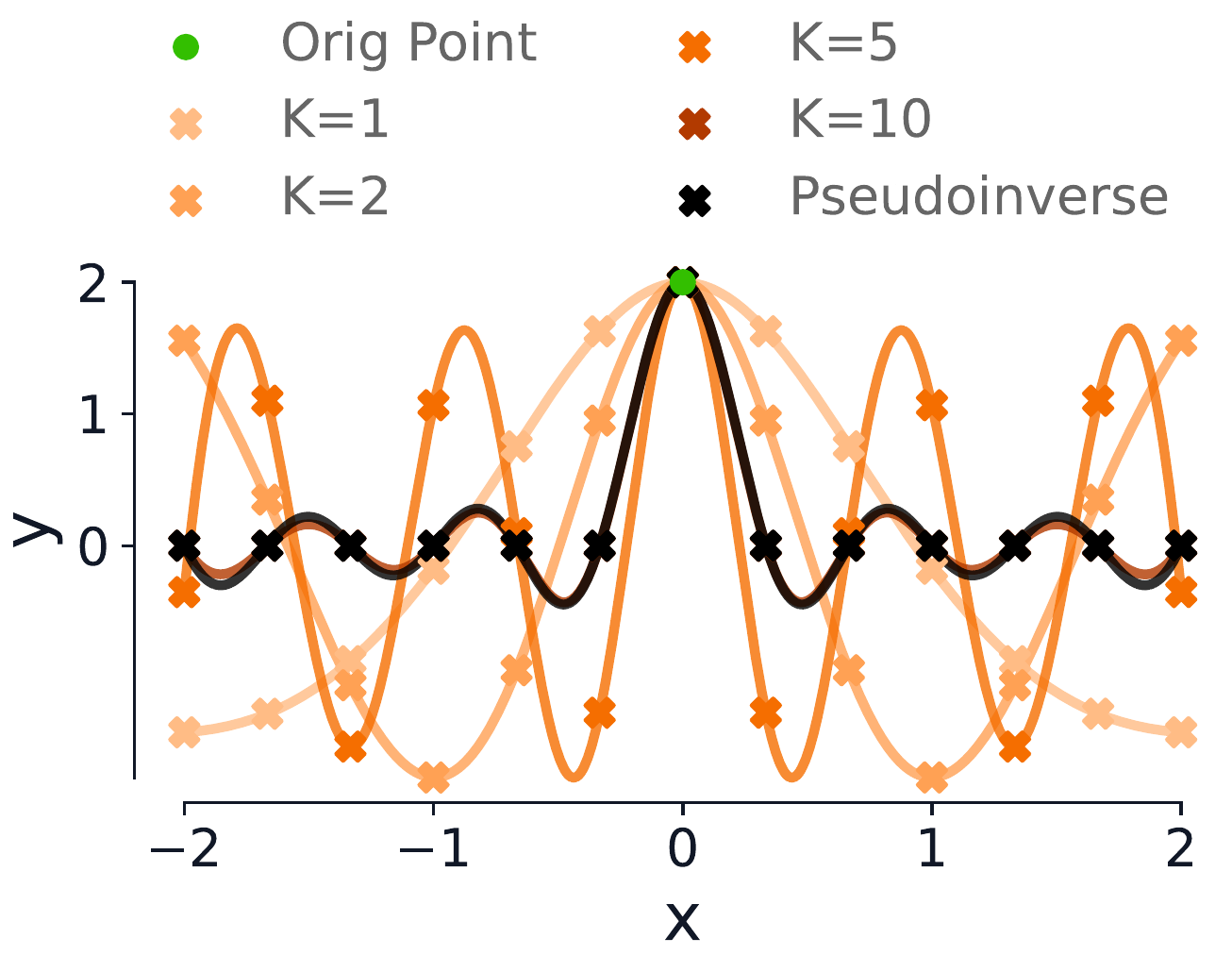}
    \vspace{-0.6cm}
    \caption{\small Truncated CG used to compute hypergradients for the anti-distillation task.}
    \label{fig:fourier-cg}
\end{wrapfigure}
\paragraph{Conjugate Gradient.}
Here, we present results using truncated conjugate gradient to approximate the inverse Hessian vector product when using the implicit function theorem to compute the hypergradient.
We use the same problem setup as in Section~\ref{sec:overparam-outer}, where we have a single validation datapoint and 13 learned datapoints.
Because our Fourier basis consists of 10 $\sin$ terms, 10 $\cos$ terms, and a bias, we have 21 inner parameters, and using 21 steps of CG is guaranteed to yield the true inverse-Hessian vector product (barring any numerical issues).
In Figure~\ref{fig:fourier-cg}, we show the effect of using truncated CG iterations on the learned datapoints: we found that while Neumann iterations, truncated unrolling, and proximal optimization all yield nearly identical results, CG produces a very different inductive bias.

\paragraph{An Alternate Set of Fourier Basis Functions.}
In Section~\ref{sec:overparam-outer}, we used a Fourier basis in which the lower-frequency terms had exponentially larger amplitudes than the high-frequency terms.
Figure~\ref{fig:fourier-over-k} presents results using an alternative feature transform, where lower-frequency terms have linearly larger amplitudes than high-frequency terms.
\begin{align*}
    \boldphi(x)
    =
    \begin{bmatrix}
      1
      &
      \smash{
      \underbrace{
      \begin{matrix}
      \cos(x)
      &
      \frac{1}{2} \cos(2x)
      &
      \cdots
      &
      \frac{1}{L}
      \cos(Lx)
      \end{matrix}}_{L \cos \text{terms}}
      }
      &
      \smash{
      \underbrace{
      \begin{matrix}
      \sin(x)
      &
      \frac{1}{2} \sin(2x)
      &
      \cdots
      &
      \frac{1}{L} \sin(Lx)
      \end{matrix}}_{L \sin \text{terms}}
      }
    \end{bmatrix}
\end{align*}
\vspace{0.1cm}

Thus, the inner problem aims to learn a weight vector $\inParams \in \mathbb{R}^{2N+1}$ that minimizes:
\begin{align}
    f(\outParams, \inParams)
    =
    \frac{1}{2} \| \boldPhi \inParams - \outParams \|^2_2 
    =
    \frac{1}{2} \sum_{i=1}^n \left( \inParams^\top \boldphi(x_i) - u_i \right)^2
\end{align}
where
\begin{align}
    \inParams^\top \boldphi(x) = w_0 + \sum_{\ell=1}^L \left( w_{\ell} \left( \frac{1}{\ell} \right) \cos(\ell x) + w_{L+\ell} \left( \frac{1}{\ell} \right) \sin(\ell x) \right)
\end{align}

\begin{figure}[!htbp]
    \centering
    \begin{subfigure}{.32\textwidth}
    \includegraphics[width=\linewidth]{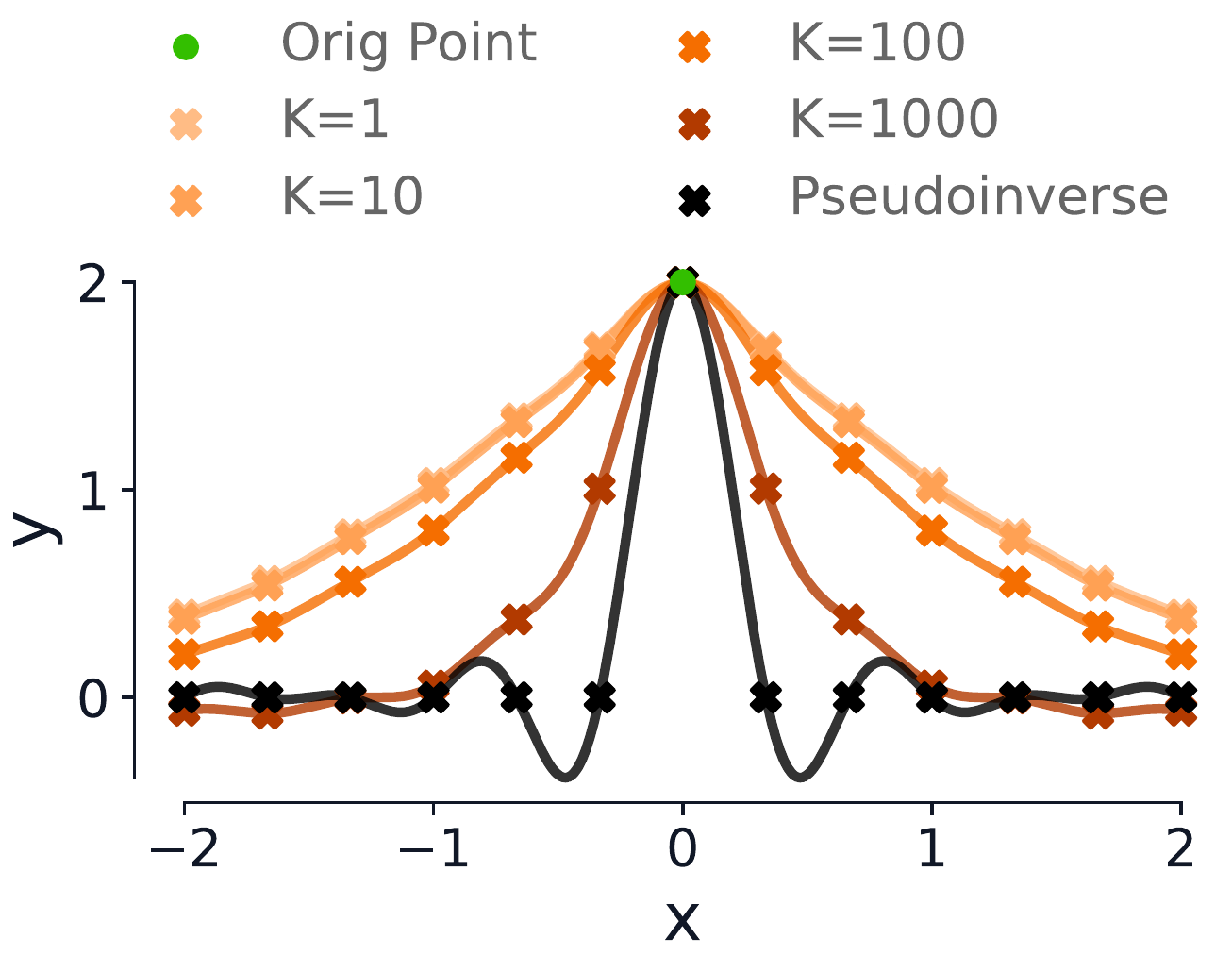}
      \caption{Neumann/unrolling}
      \label{fig:implicit-1-over-k-neumann}
    \end{subfigure}
    \hfill
    \begin{subfigure}{.32\textwidth}
    \includegraphics[width=\linewidth]{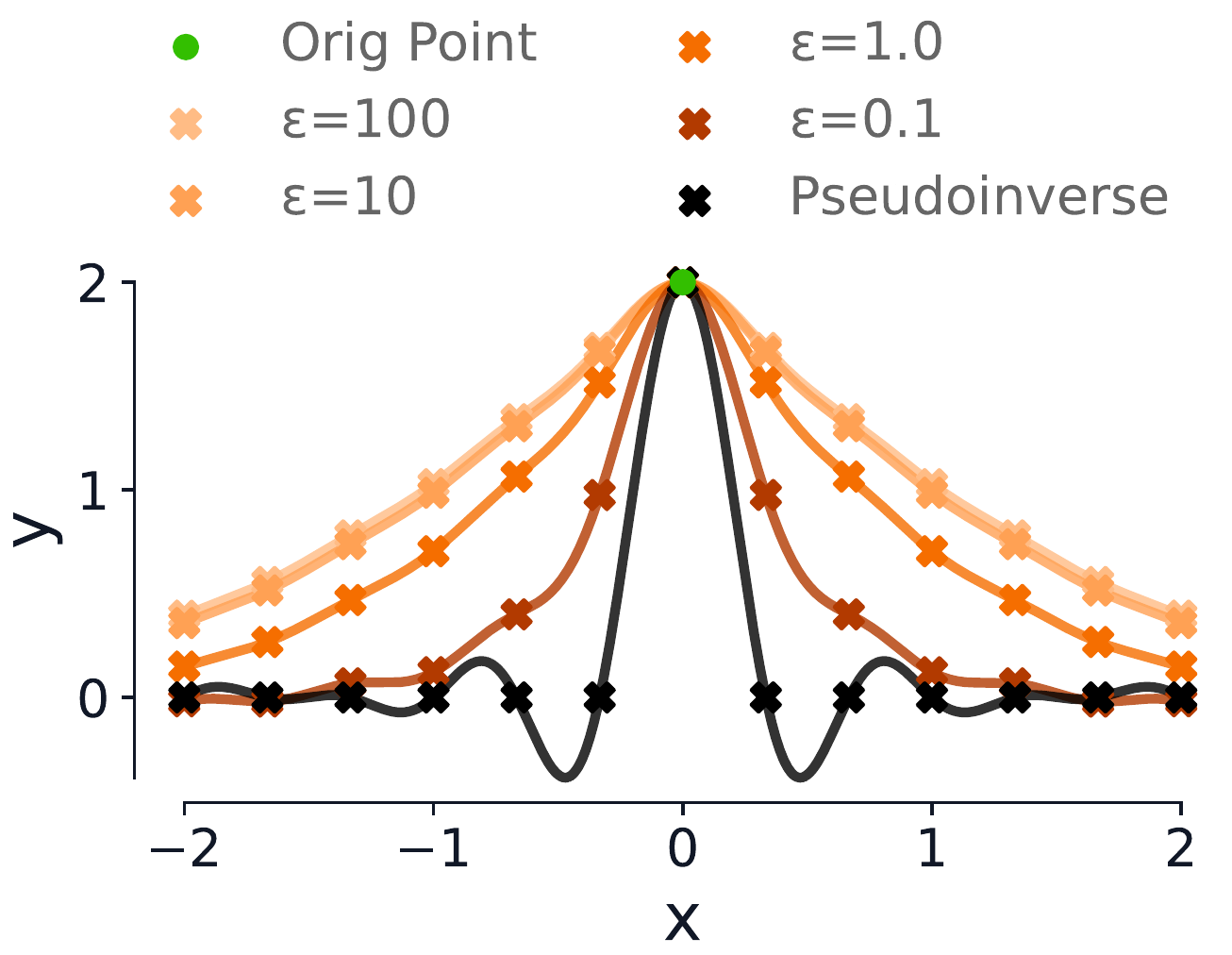}
      \caption{Damped Hessian}
      \label{fig:fourier-1-over-k-damped-H}
    \end{subfigure}
    \hfill
    \begin{subfigure}{.32\textwidth}
    \includegraphics[width=\linewidth]{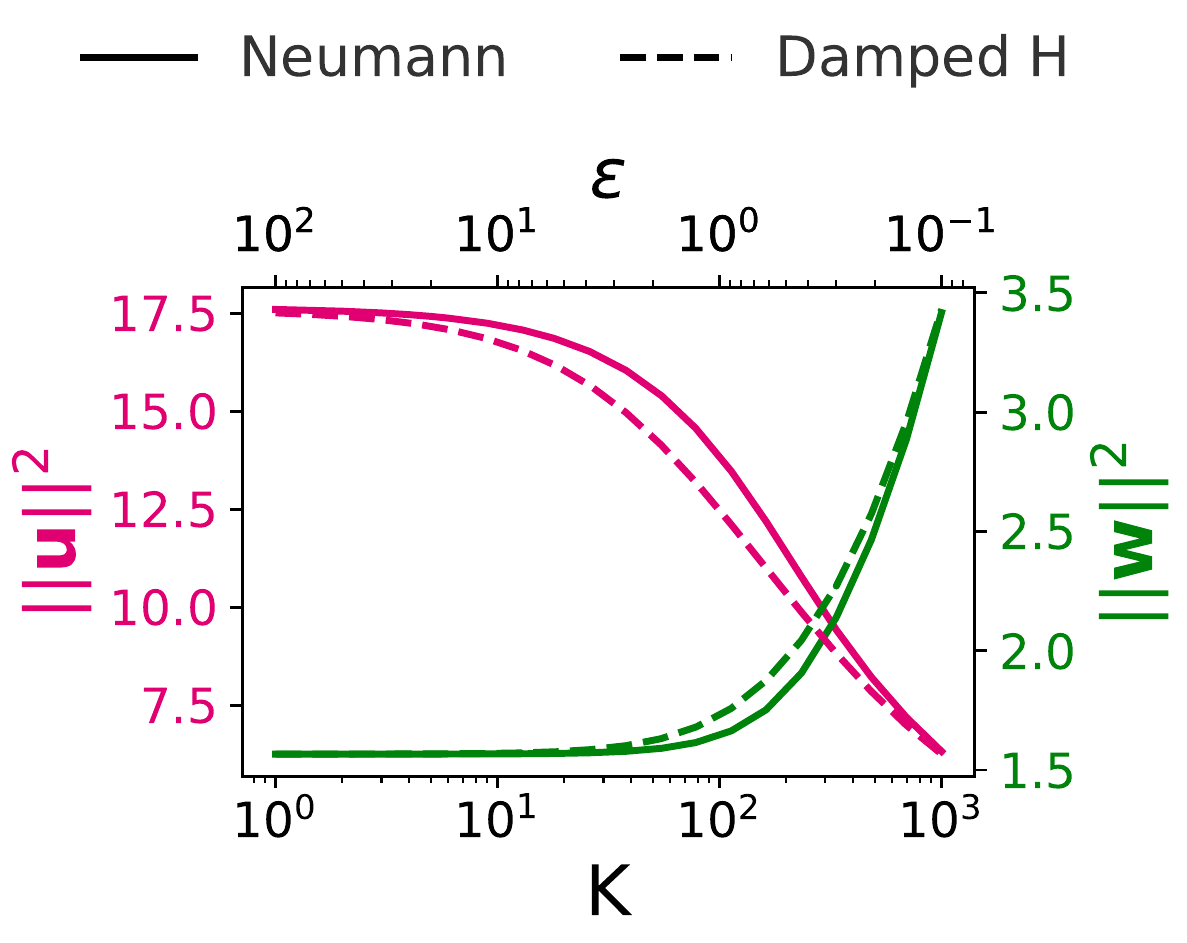}
      \caption{Outer parameter norms.}
      \label{fig:implicit-1-over-k-norms}
    \end{subfigure}
    \vspace{-0.2cm}
    \caption{\small Fourier-basis 1D linear regression, where the outer objective is overparameterized. We learn 13 synthetic datapoints such that a regressor trained on those points will fit a single original datapoint, shown by the {\color{dkgreen}green circle at $(0,2)$}. The synthetic datapoints are initialized at linearly-spaced $x$-coordinates in the interval $[-2, 2]$, with $y$-coordinate 0, and we only learn $y$ coordinates.
    In the Fourier basis we use, lower frequency components have larger amplitudes.
    Here, we use the $1/n$ amplitude scheme.
    }
    \label{fig:fourier-over-k}
\end{figure}

\paragraph{Underparameterized Inner Problem.}
The antidistillation phenomena we describe occur in outer-overparameterized settings; they do not rely on inner overparameterization.
In Figure~\ref{fig:underparam-fourier-over-k}, we show a version of experiment with an underparameterized inner problem (such that there is a unique inner optimum for each outer parameter, and the Hessian of the inner objective is invertible).

\begin{figure}[!htbp]
    \centering
    \begin{subfigure}{.32\textwidth}
    \includegraphics[width=\linewidth]{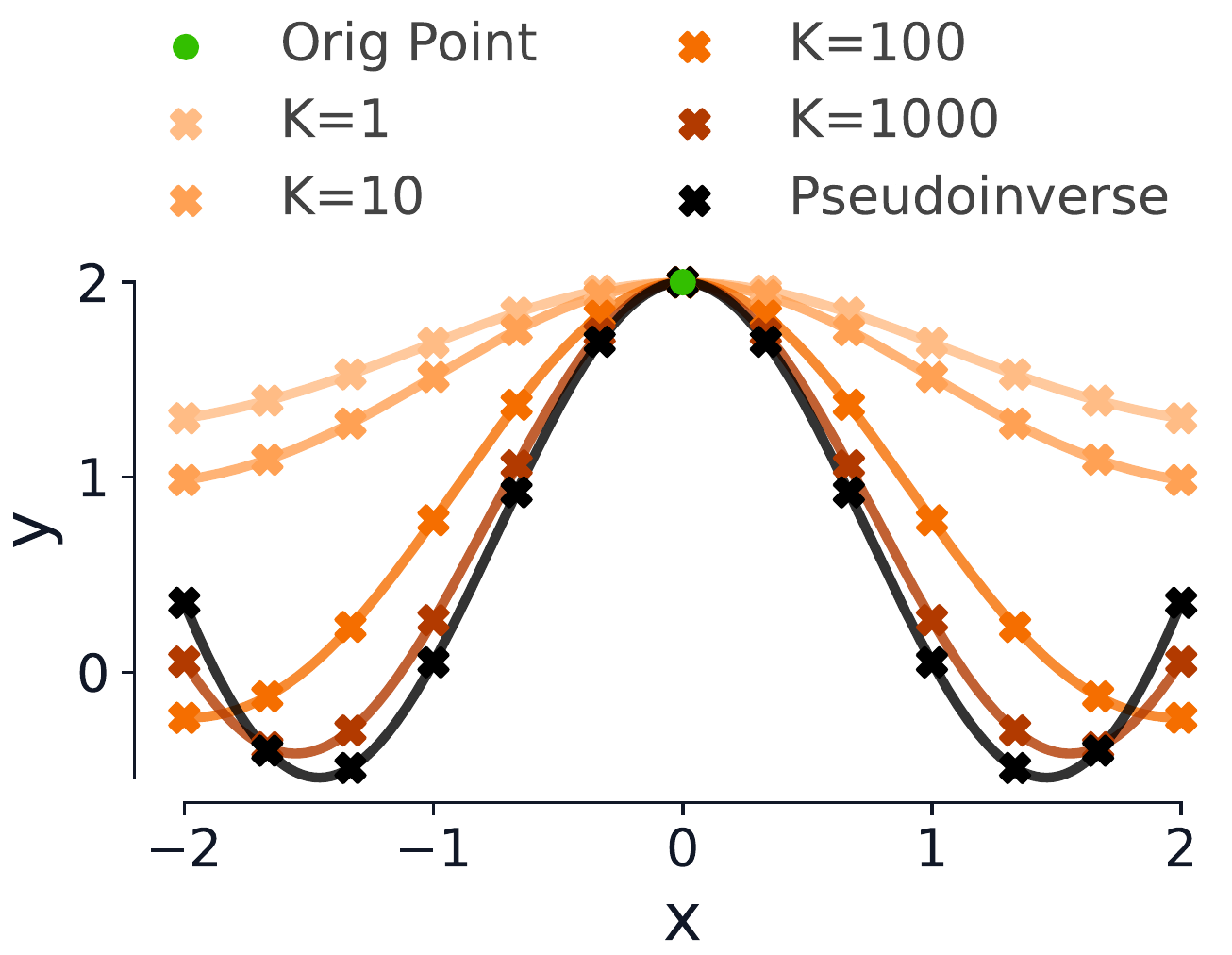}
      \caption{Neumann/unrolling}
      \label{fig:underparam-neumann}
    \end{subfigure}
    \hfill
    \begin{subfigure}{.32\textwidth}
    \includegraphics[width=\linewidth]{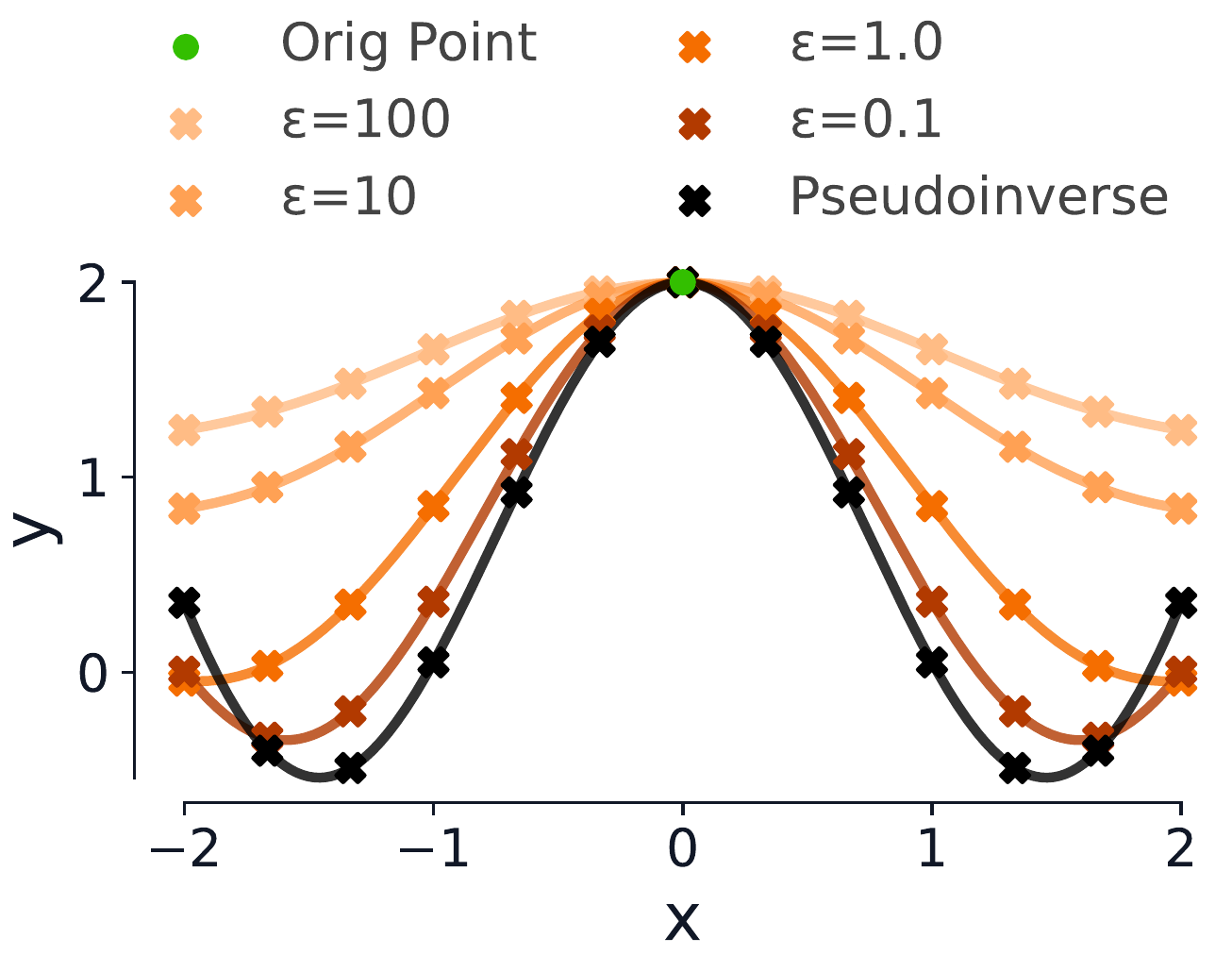}
      \caption{Damped Hessian inverse}
      \label{fig:underparam-damped-H}
    \end{subfigure}
    \hfill
    \begin{subfigure}{.32\textwidth}
    \includegraphics[width=\linewidth]{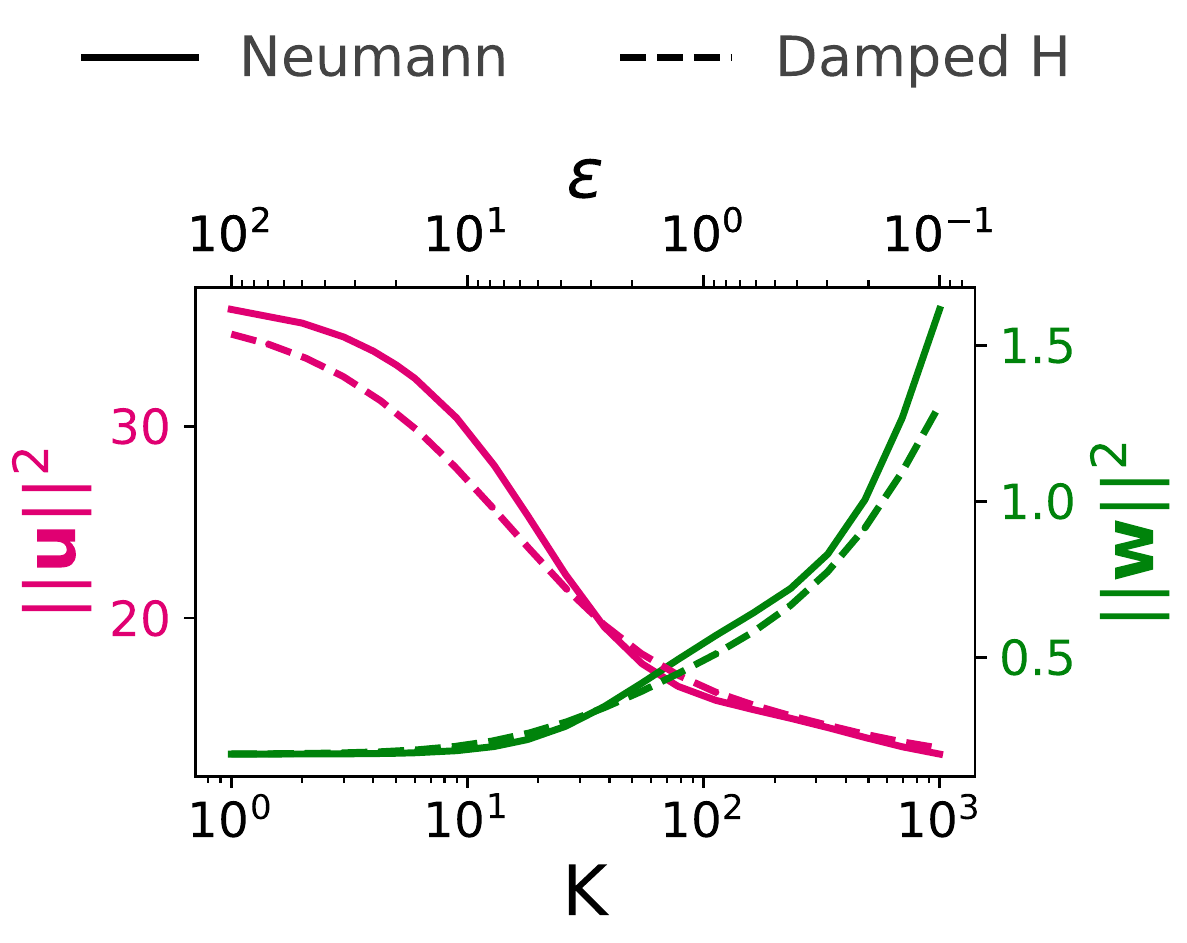}
      \caption{Outer parameter norms.}
      \label{fig:underparam-norms}
    \end{subfigure}
    \caption{\small Fourier-basis 1D linear regression, where the outer objective is overparameterized, \textit{but the inner objective is underparameterized}. We learn 13 synthetic datapoints such that a regressor trained on those points will fit a single original datapoint, shown by the {\color{dkgreen}green circle at $(0,2)$}. The synthetic datapoints are initialized at linearly-spaced $x$-coordinates in the interval $[-2, 2]$, with $y$-coordinate 0, and we only learn the $y$-coordinates.
    In the Fourier basis we use, lower frequency components have larger amplitudes.
    }
    \label{fig:underparam-fourier-over-k}
\end{figure}

\paragraph{Using an MLP.}
Finally, we show that similar conclusions hold when training a multi-layer perceptron (MLP) on the anti-distillation task.
We used a 2-layer MLP with 10 hidden units per layer and ReLU activations.
We used SGD with learning rate 0.01 for both the inner and outer parameters.
Figure~\ref{fig:overparam-mlp} shows the learned datapoints and model fits resulting from running several different steps of Neumann iterations or unrolling, as well as the norms of the inner and outer parameters as a function of $K$.
For the Neumann experiments, we first optimize the MLP for 5000 steps to reach approximate convergence of the inner problem, before running $K$ Neumann iterations---the MLP is re-trained from scratch for 5000 steps for each outer parameter update (e.g., cold-started).

\begin{figure}[!htbp]
    \centering
    \begin{subfigure}{.32\textwidth}
    \includegraphics[width=\linewidth]{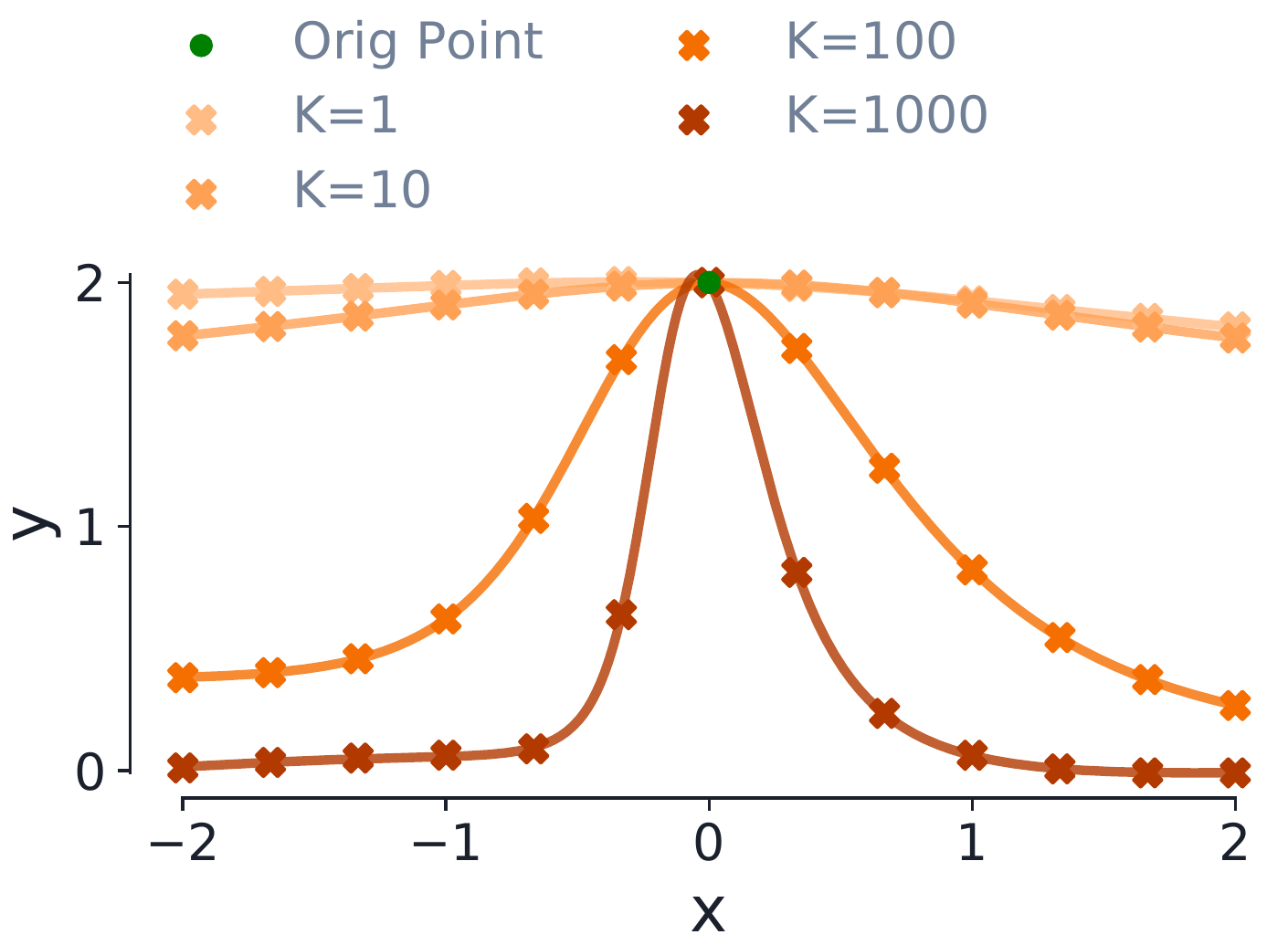}
       \caption{Neumann}
    \end{subfigure}
    \hfill
    \begin{subfigure}{.32\textwidth}
    \includegraphics[width=\linewidth]{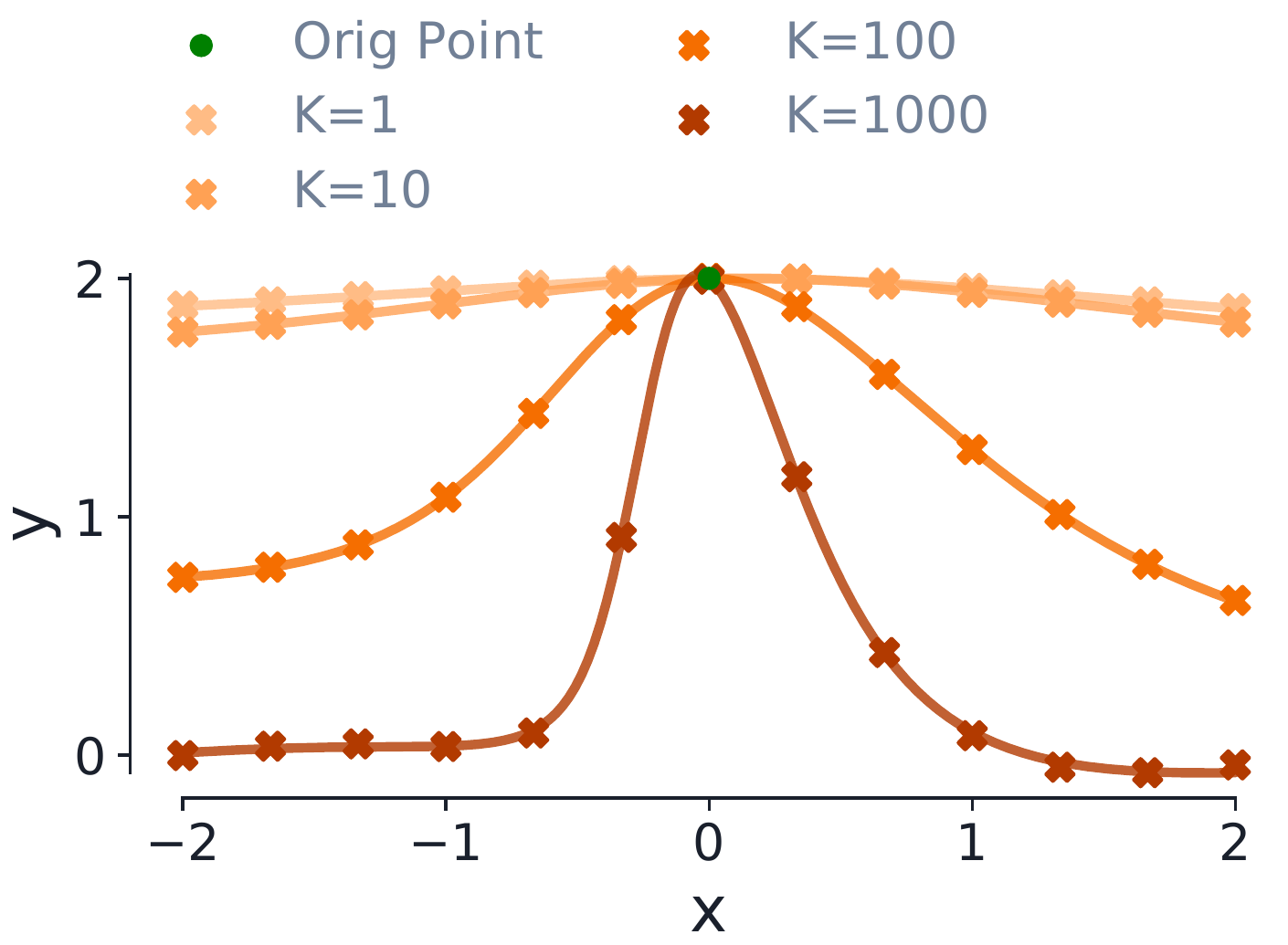}
       \caption{Unrolling}
    \end{subfigure}
    \hfill
    \begin{subfigure}{.32\textwidth}
    \includegraphics[width=\linewidth]{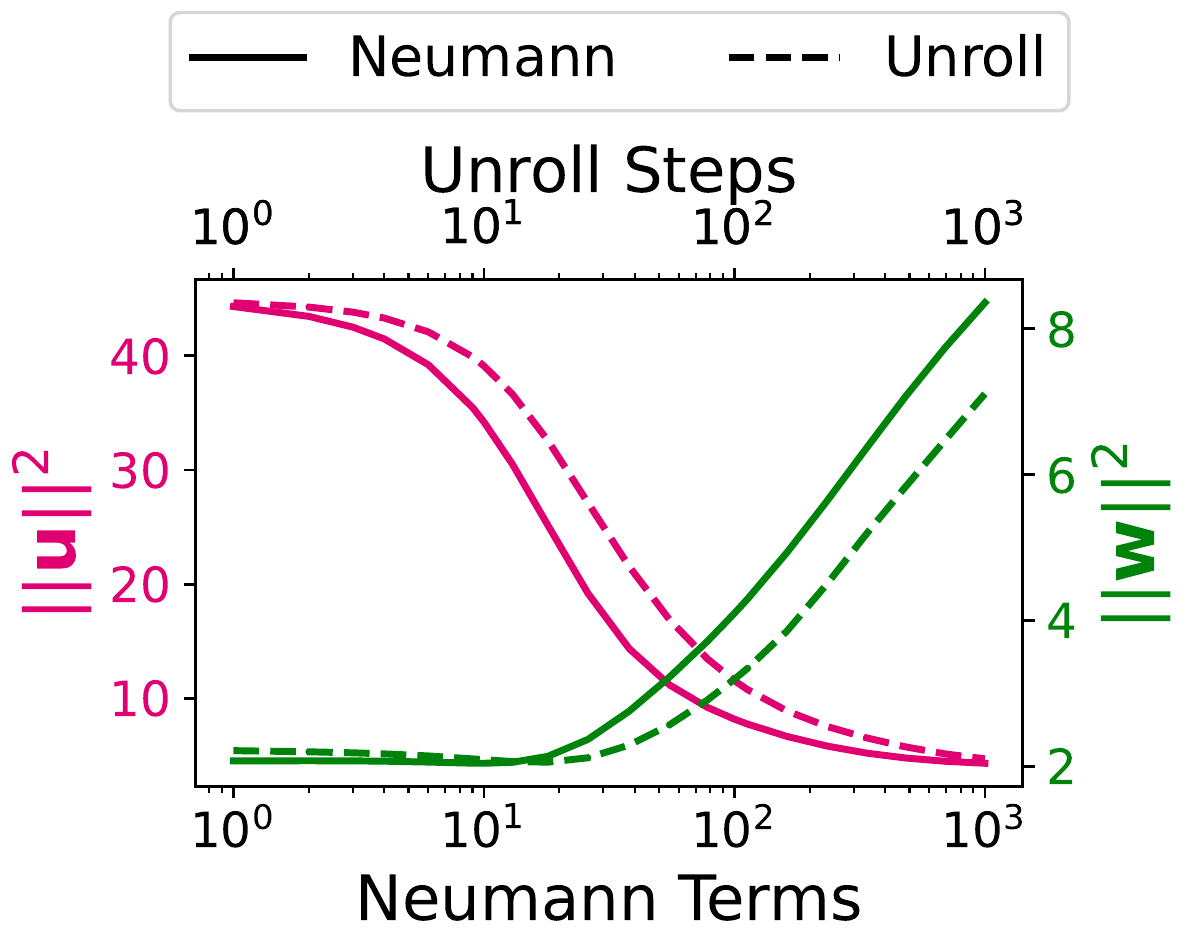}
       \caption{Parameter norms.}
    \end{subfigure}
    \vspace{-0.2cm}
    \caption{\small 1D linear regression with an overparameterized outer objective, where we train an MLP rather than performing Fourier-basis function regression. Note that here we cannot analytically compute the damped Hessian inverse as in the linear regression case, so we only include results for full-unrolls with truncated Neumann approximations of the inverse Hessian, and alternating gradient descent with various numbers of unroll steps.}
    \label{fig:overparam-mlp}
\end{figure}

\end{document}